\def\ps@pprintTitle{%
  \let\@oddhead\@empty
  \let\@evenhead\@empty
  \def\@oddfoot{}%
  \let\@evenfoot\@oddfoot}
\tikzset{
	main node/.style={circle,draw,minimum size=1cm,inner sep=0pt},
}
\def\pgfmath@calc@minof#1#2{min(#1,#2)}
\xpatchcmd{\paragraph}{3.25ex \@plus1ex \@minus.2ex}{3pt plus 1pt minus 1pt}{\typeout{success!}}{\typeout{failure!}}
\newcommand{\detailtexcount}{%
  \immediate\write18{texcount -merge -sum -incbib -dir \jobname.tex > \jobname.wcdetail }%
  \verbatiminput{\jobname.wcdetail}%
}
\newcommand{\quickwordcount}{%
  \immediate\write18{texcount -1 -sum -merge \jobname.tex > \jobname-words.sum }%
  \input{\jobname-words.sum}%
}
\newcommand{\quickcharcount}{%
  \immediate\write18{texcount -1 -sum -merge -char \jobname.tex > \jobname-chars.sum }%
  \input{\jobname-chars.sum} characters (not including spaces)%
}
\newcolumntype{Y}{>{\centering\arraybackslash}X}
\newcolumntype{b}{X}
\newcolumntype{s}{>{\hsize=.5\hsize}X}
\newtcolorbox{note}[1][]{%
	enhanced jigsaw, 
	borderline west={2pt}{0pt}{red}, 
	sharp corners, 
	boxrule=0pt, 
	fonttitle={\normalsize\bfseries},
	coltitle={black},  
	title={Note:\ },  
	attach title to upper, 
	#1
}
\newcommand\RedeclareMathOperator{%
	\@ifstar{\def\rmo@s{m}\rmo@redeclare}{\def\rmo@s{o}\rmo@redeclare}%
}
\newcommand\rmo@redeclare[2]{%
	\begingroup \escapechar\m@ne\xdef\@gtempa{{\string#1}}\endgroup
	\expandafter\@ifundefined\@gtempa
	{\@latex@error{\noexpand#1undefined}\@ehc}%
	\relax
	\expandafter\rmo@declmathop\rmo@s{#1}{#2}}
\newcommand\rmo@declmathop[3]{%
	\DeclareRobustCommand{#2}{\qopname\newmcodes@#1{#3}}%
}
\def\eqref#1{equation~\ref{#1}}
\def\1{\mathbf{1}}
\NewDocumentCommand{\grad}{e{_^}}{%
	\mathop{}\!
	\nabla
	\IfValueT{#1}{_{\!#1}}
	\IfValueT{#2}{^{#2}}
}
\newcommand{\ODE}{\texttt{ODE}\xspace}
\newcommand{\BPR}{\texttt{BPR}\xspace}
\newcommand{\TVODLULPE}{\texttt{TVODLULPE}\xspace}
\newcommand{\TVGODLULPE}{\texttt{MaTE}\xspace}
\newcommand{\MTP}{\texttt{MaTE}\xspace}
\newcommand{\RK}{\texttt{RK}\xspace}
\newcommand{\SUELOGIT}{\texttt{SUELOGIT}\xspace}
\newcommand{\STALOGIT}{\texttt{STALOGIT}\xspace}
\def\vzero{{\mathbf{0}}}
\def\vtheta{\pmb{\theta}}
\def\vf{{\mathbf{f}}}
\def\vg{{\mathbf{g}}}
\def\vp{{\mathbf{p}}}
\def\vq{{\mathbf{q}}}
\def\vt{{\mathbf{t}}}
\def\vv{{\mathbf{v}}}
\def\vx{{\mathbf{x}}}
\def\vy{{\mathbf{y}}}
\def\vpf{{\mathbf{p}}}
\def\vbeta{{\boldsymbol{\beta}}}
\def\vtheta{{\boldsymbol{\theta}}}
\def\vgamma{{\boldsymbol{\gamma}}}
\def\vomega{{\boldsymbol{\omega}}}
\def\vphi{{\boldsymbol{\phi}}}
\def\mD{{\mathbf{D}}}
\def\mE{{\mathbf{E}}}
\def\mL{{\mathbf{L}}}
\def\mM{\mathbf{M}}
\def\mO{{\mathbf{O}}}
\def\mW{{\mathbf{W}}}
\def\mZ{{\mathbf{Z}}}
\DeclareMathAlphabet{\mathsfit}{\encodingdefault}{\sfdefault}{m}{sl}
\SetMathAlphabet{\mathsfit}{bold}{\encodingdefault}{\sfdefault}{bx}{n}
\def\sA{{\mathcal{A}}}
\def\sH{{\mathcal{H}}}
\def\sK{{\mathcal{K}}}
\def\sR{{\mathbb{R}}}
\def\sS{{\mathcal{S}}}
\def\sV{{\mathcal{V}}}
\def\sW{{\mathcal{W}}}
\newtheoremstyle{boldremark}
{\dimexpr\topsep/2\relax} 
{\dimexpr\topsep/2\relax} 
{}          
{}          
{\bfseries} 
{.}         
{.5em}      
{}          
\newtheorem{definition}{Definition}
\newtheorem{prop}{Proposition}
\theoremstyle{plain}
\newtheorem{assumption}{Assumption}
\theoremstyle{boldremark}
\newtheorem{remark}{Remark}
\DeclareMathAlphabet{\mathcal}{OMS}{cmsy}{m}{n}
\newcounter{algsubstate}
\newenvironment{algsubstates}
{\setcounter{algsubstate}{0}%
	\renewcommand{\State}{%
		\refstepcounter{algsubstate}%
		\Statex {\footnotesize\alph{algsubstate}:}\space}}
{}
\begin{document}
\begin{frontmatter}
\title{Traffic estimation in unobserved network locations using data-driven macroscopic models}

\author[add1,add3]{Pablo Guarda}
\author[add1,add2]{Sean Qian\corref{mycorrespondingauthor}}
\cortext[mycorrespondingauthor]{Corresponding author}
\ead{seanqian@cmu.edu}
\address[add1]{Department of Civil and Environmental Engineering, Carnegie Mellon University, Pittsburgh, PA, U.S.A.}
\address[add2]{Heinz College, Carnegie Mellon University, Pittsburgh, PA, U.S.A.}
\address[add3]{Fujitsu Research of America, Pittsburgh, PA, U.S.A.}
\begin{abstract}

This paper leverages data-driven macroscopic transportation models and multi-source spatiotemporal data collected from automatic traffic counters and probe vehicles to accurately estimate traffic flow and travel time in links where these measurements are unavailable. This problem is critical in transportation planning applications where the sensor coverage is low and the planned interventions have network-wide impacts. The proposed model, named the Macroscopic Traffic Estimator (\MTP), can perform network-wide estimations of traffic flow and travel time only using the set of observed measurements of these quantities. Because \MTP is grounded in macroscopic flow theory, all parameters and variables of the model are interpretable. The estimated traffic flow satisfies fundamental flow conservation constraints, and the estimated travel time exhibits an increasing monotonic relationship with the traffic flows. Using logit-based stochastic traffic assignment as the principle for routing flow behavior makes the model fully differentiable with respect to the model parameters. This property facilitates the application of automatic differentiation tools and computational graphs to learn parameters from vast amounts of spatiotemporal data. We also successfully integrate neural networks and polynomial kernel functions to capture link flow interactions and enrich the mapping of traffic flows into travel times. \MTP also adds a destination choice model and a trip generation model capable of utilizing historical data on the number of trips generated by location, which is information that is generally more accessible than historical origin-destination (O-D) matrices. Experiments on synthetic data show that the model can accurately estimate travel time and traffic flow in out-of-sample links. Results obtained using real-world multi-source data from a large-scale transportation network in Fresno, CA suggest that \MTP outperforms data-driven benchmarks, especially in travel time estimation. The estimated parameters of \MTP are also informative about the hourly change in travel demand and supply characteristics of the transportation network.

\end{abstract}

\begin{keyword} 
computational graphs, traffic flow estimation, logit-based stochastic traffic assignment, origin-destination demand estimation, large-scale network modeling
\end{keyword}
\end{frontmatter}

\section{Introduction}

Estimating traffic flow and travel time is crucial for intelligent transportation systems and smart cities. It can help optimize the efficiency and reliability of transportation systems, such as public transit, freight delivery, and traffic management. It can enhance the safety and comfort of travelers by reducing congestion, accidents, and emissions. It can support better planning and decision-making for transportation policies, investments, and operations. Recent papers on this topic have applied various machine learning and deep learning techniques to improve the accuracy and efficiency of estimating traffic parameters such as speed \citep{zheng_dynamic_2022, guo_hierarchical_2021, chen_constructing_2022}, traffic volume and congestion \citep{liang_fine-grained_2022, chen_novel_2022, polson_deep_2017, wu_hybrid_2018}. 

In the realm of transportation network modeling, accurate estimation of traffic flow and travel time is crucial, especially in parts of the network where sensors or direct measurement tools are not installed or are temporarily non-operational. Our paper refers to this problem as out-of-sample estimation, namely, the estimation of traffic flow and travel time in links of the network where measurements of these quantities are unavailable. The term "estimation" pertains to the process of determining traffic flow and travel time in specific links of the network based on available data from other links. Unlike prediction, which often implies forecasting future values based on patterns or trends observed in the data, estimation in our context focuses on deducing current or historical measurements in the transportation network that are not directly observed.

Methods to estimate traffic flow and travel time can be categorized as model-based and data-driven \citep{bai_travel-time_2018}. Pure data-driven approaches are suitable for making estimations under recurrent traffic conditions and do not require incorporating domain knowledge into the modeling framework. Spatial kriging methods are a family of data-driven models that are being increasingly used in the transportation domain to perform network-wide estimation of traffic flow \citep{mathew_comparative_2021, song_traffic_2019, selby_spatial_2013} and speeds \citep{nie_correlating_2023}. They exploit spatial correlations in the data to provide estimations in unobserved locations. A fundamental limitation of kriging methods and other data-driven models is that they usually require large amounts of data to generalize well in traffic contexts or areas of the transportation networks that are not observed in the training set. Because of the lack of an analytical model to generate traffic flow and travel time estimations, it may be challenging or unfeasible to analyze the impact of some interventions in the transportation network, such as road closures or road capacity improvement. Furthermore, estimations of data-driven methods do not satisfy basic constraints arising from applying network flow theory to transportation systems, which may be critical for transportation planning applications. Lastly, most research in data-driven models studies traffic flow and travel time as two phenomena in isolation despite the known relationship between these two physical quantities. 

Traditional transportation planning models are model-based and are grounded on network flow theory. These models have a long-standing history in transportation and are used to estimate the impact of various interventions in the transportation network. One of the most well-known models is the \textit{four-step model}, which most transit agencies still use worldwide. These models help understand the demand and supply characteristics of the transportation network, and most of their parameters are interpretable. A rich body of literature has developed methods to estimate the parameters of some components of the 4-step model using emerging data sources, such as traffic counts, travel times, and GPS trajectory data \citep{Wu2018a, cascetta_calibrating_1997,Russo2011a, guarda_statistical_2024}. Some of the parameters that can be learned include the O-D matrix \citep{Yang2001, Ma2018}, the travelers' utility function coefficients \citep{guarda_statistical_2024, Yang2001, Caggiani2011, Wang2016,Garcia-Rodenas2009}, and the link performance functions \citep{wollenstein-betech_joint_2022, garcia-rodenas_adjustment_2013, suh_highway_1990, guarda_estimating_2024}. 

Recent literature has also leveraged advances in machine learning to learn the parameters of macroscopic network models in contexts of static traffic assignment. These methodological enhancements include the use of computational graphs \citep{guarda_statistical_2024, Wu2018a}, deep implicit layers \citep{liu_end--end_2023}, inverse optimization \citep{wollenstein-betech_joint_2022} and sensitivity analysis \citep{Ma2018}. One challenge that remains open is how to leverage the prediction capabilities of data-driven methods with the interpretability and theoretical consistency provided by model-based methods to perform network-wide estimation of traffic flow and travel time. By theoretical consistency, we refer to compliance with some fundamental properties and constraints associated with network flow. 

\section{Contributions of this research}

In this paper, we introduce the Macroscopic Traffic Estimator (\MTP), a model that leverages macroscopic traffic flow theory, computational graphs, and multi-source spatiotemporal data collected from automatic traffic counters and probe vehicles to accurately estimate traffic flow and travel time on network links lacking historical measurements. To assess the model's ability to solve this estimation problem, we computed the out-of-sample error, namely, the estimation error associated with links without observations of traffic flow and travel time. Addressing this problem is vital in transportation planning, especially in scenarios where sensor coverage is limited, and the interventions being evaluated have widespread network impacts. The \MTP can also estimate future traffic flows and travel times, enabling network-wide estimation based solely on the available sensor measurements within the transportation network.

With the motivation of developing a data-driven model with interpretable parameters, grounded on network flow theory and that has good performance in estimating traffic flow and travel time at a network-wide scale, \MTP enhances the computational graph framework introduced by \citet{guarda_estimating_2024} in multiple ways. First, \MTP includes layers that model trip generation and the travelers' destination choices to identify location-specific features from Census data that correlate with trip generation and are helpful to estimate changes in travel demand and demand-supply due to population and socio-demographic changes. Second, \MTP leverages the power of neural networks and polynomial kernels to enrich the mapping of traffic flow to travel times without increasing the risk of overfitting. Third, \MTP relaxes the assumption of network equilibrium made by \citet{guarda_estimating_2024} to extend the model's applicability to real-world settings where the equilibrium conditions may not perfectly hold in practice. Under this framework, our model leverages the traffic equilibrium principle as a regularizer of the link flow solution to prevent overfitting and maximize out-of-sample estimation accuracy. To measure out-of-sample performance, the model is trained with data from a subset of links that report traffic flow and travel time observations. Then, the model's estimations are evaluated in the set of links that have no measurements of travel time and traffic flow in the training set.  Figure  \ref{fig:computational-graph-general} shows an overview of the main components of \MTP and how different data sources are incorporated to estimate traffic flow and travel times. It also highlights in blue the extensions made to the computational graph developed by \citet{guarda_estimating_2024} to improve out-of-sample performance.

\begin{figure}[H]
	\centering
	\includegraphics[width=0.95\textwidth, trim= {3.5cm 10cm 12cm 3.5cm},clip]{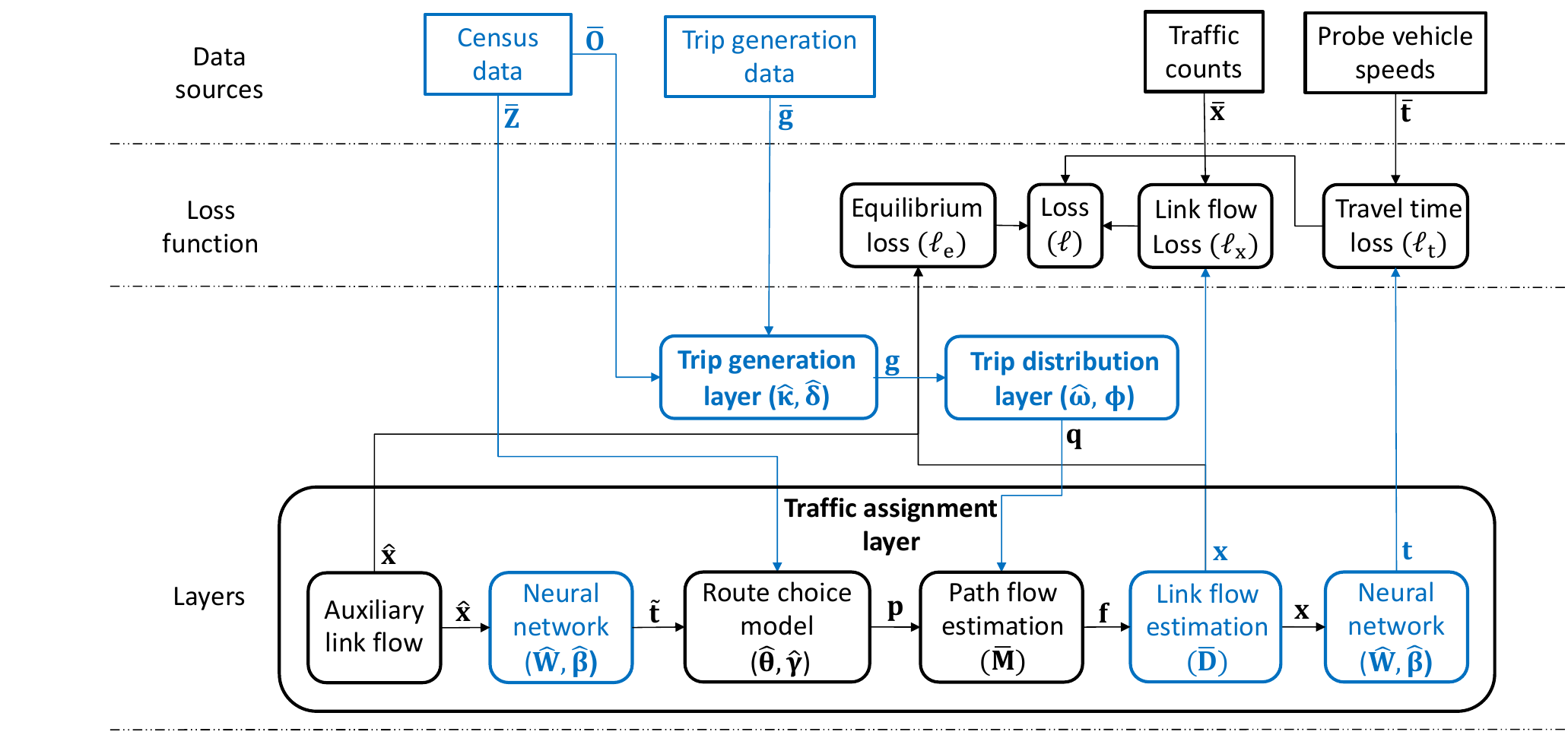}
	\caption{An illustration of \MTP. The blue elements represent the components that extend the model developed by \citet{guarda_estimating_2024}.
    }
    \label{fig:computational-graph-general}
\end{figure}

The methodological contributions to the existing literature can be summarized as follows. First, we conduct a rigorous cross-validation strategy to measure the in-sample and out-of-sample estimation errors associated with travel times and traffic flows (Figure \ref{fig:mate-validation-strategy}). To our knowledge, this study is among the first to examine out-of-sample estimation errors in large-scale network flow modeling rigorously. The performance of \MTP is also compared against standard data-driven models used in prior literature. Second, we incorporate neural networks in a computational graph framework to capture the effect of flow interactions and to enrich the mapping of traffic flow into travel time in macroscopic flow models. The parameters of the neural network can be jointly estimated with the rest of the parameters of the computational graph. This extends work from \citet{guarda_statistical_2024} and \citet{Wu2018a} where the link performance functions are assumed to have known functional form and their parameters are assumed given. It also extends work from  \citet{wollenstein-betech_joint_2022}, \citet{Wu2018a} and \citet{guarda_estimating_2024} where the travel time on a link does not depend on the flows in other links, but only on the flows in that link. Third, we model trip generation with location-specific attributes by adding a regression layer in the computational graph. This extends work from \citet{liu_end--end_2023} and \citet{guarda_statistical_2024} where no O-D matrix is estimated and from \citet{Ma2020}, \citet{wollenstein-betech_joint_2022} and \citet{guarda_estimating_2024} where the trip generation stage is not modeled. It also extends work from \citet{Wu2018a} where the trips generated by location are estimated but not modeled as a function of location-specific attributes, and the travel times are assumed exogenous.

\begin{figure}[H]
	\centering
	\includegraphics[width=0.7\textwidth, trim= {9cm 12.5cm 9cm 6.4cm},clip]{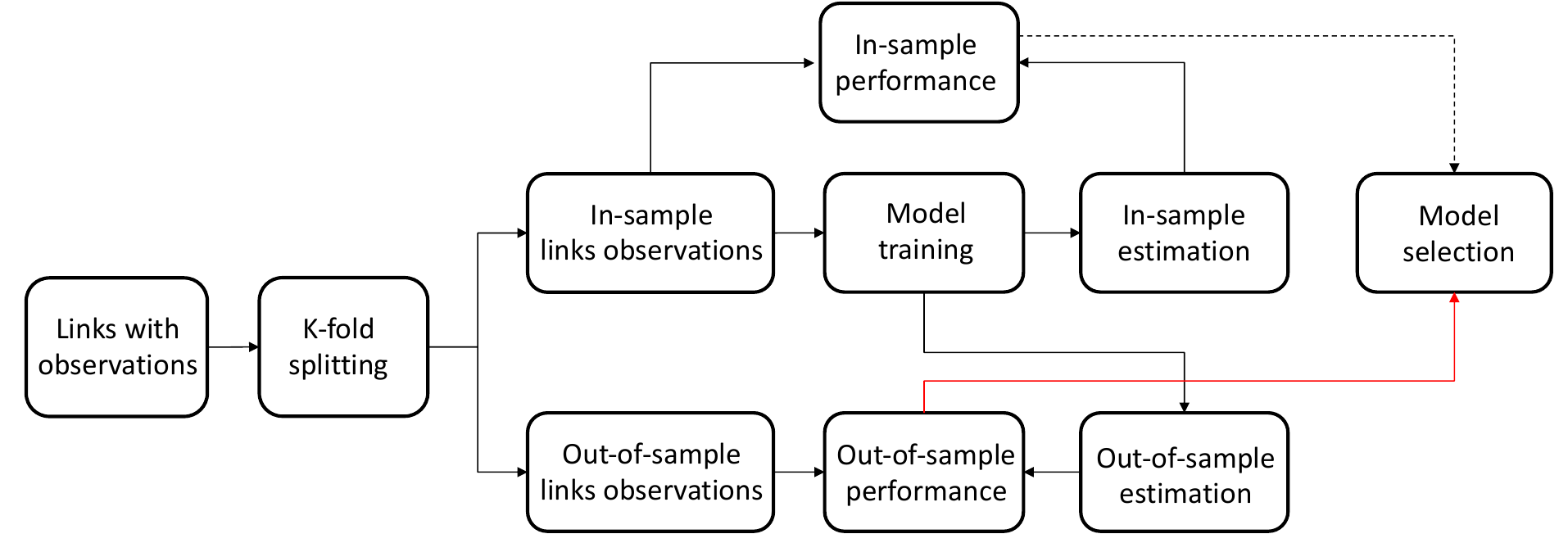}
	\caption{An illustration of the cross-validation strategy proposed to compute out-of-sample performance and perform model selection. The dashed line represents the model selection approach used in the current literature. The red line represents our approach.}
    \label{fig:mate-validation-strategy}
\end{figure}

The paper is organized as follows. Section \ref{sec:mathematical-formulation} introduces the mathematical formulation of our methodology. Section \ref{sec:solution-algorithm} describes the solution algorithm. Section \ref{sec:numerical-experiments} conducts experiments on a mid-size network to verify the computational efficiency of the proposed framework. In Section \ref{sec:large-scale-implementation}, our framework is applied to a large-scale network with real-world spatio-temporal data. Section \ref{sec:conclusions} presents our main conclusions and discusses avenues for further research. Note that all our analyses are replicable, open-sourced, and ready to be used for the transportation community (Section \ref{sec:model-implementation-data}). 
\section{Formulation}
\label{sec:mathematical-formulation}

This section presents the assumptions and the formulation of the Macroscopic Traffic Estimator (\MTP) model. Some of the notation used throughout this section and the paper is presented in Tables \ref{table:notation1} and \ref{table:notation2}, Appendix \ref{appendix:sec:notation}.

\subsection{Preliminaries}

Consider a transportation network characterized by a graph $\mathcal{G} = (\sV,\sA)$, where $\sA$ is the set of streets or links connecting the set $\sV$ of nodes or locations in the network. Individuals want to travel between a set $\sW$ of origin-destination pairs connected by a set $\sH$ of paths. Suppose there is access to a set $\sS$ of samples of travel time and traffic flows in a subset of links in the network, which may be collected during multiple periods of the day. In each period, the traffic flow and travel time are assumed to be constant. Each period may comprise samples collected on multiple days, e.g., two samples collected between 9:00 AM and 10:00 AM on two subsequent Wednesdays. Formally, each sample $i \in \sS$ comprises observations of traffic flow $\overline{\vx} \in \sA$ and travel time $\overline{\vt} \in \sR^{|\sA|}$ in a subset of links in the network and a specific period. Furthermore, a vector $\overline{\vg} \in \sR^{|\sV|}$ with a reference number of the trips generated in each node could be available for some periods. 

\subsection{Parameters}
\label{ssec:parameters}

The learnable parameters of the model are: 

\begin{itemize}
\item auxiliary link flow parameters: $\hat{\vx} \in \sR^{|\sS| \times |\sA|}$
\item feature-specific parameters in route choice utility function: $\hat{\vtheta} \in \sR^{|\sS| \times (1+|\sK_Z|)}$
\item link-specific parameters in route choice utility function: $\hat{\vgamma}  \in \sR^{|\sA|}$
\item performance function parameters: $\widehat{\mW} \in \sR^{|\sA| \times |\sA|}, \widehat{\vbeta} \in \sR^{b}$
\item feature-specific parameters of the trip generation function: $\hat{\vkappa} \in \sR^{|\sS| \times |\sK_O|}$
\item location-specific parameters of the trip generation function: $\hat{\vdelta}  \in \sR^{|\sS| \times |\sV|}$
\item origin-destination specific parameters in destination choice utility function: $\hat{\vomega}  \in \sR^{|\sS| \times |\sW|}$
\end{itemize}

where $\sK_Z$ is the set of exogenous features in the route choice utility function, $\sK_O$ is the set of features in the trip generation model and $b$ is the degree of the polynomial used to compute the performance functions. 

\subsection{Variables}
\label{ssec:variables}

Output variables that are functions of the learnable parameters are:

\begin{itemize}
\item link flows: $\vx\in \sR^{|\sS| \times |\sA|}$: 
\item auxiliary travel times: $\tilde{\vt}\in \sR^{|\sS| \times |\sA|}$
\item link travel times: $\vt\in \sR^{|\sS| \times |\sA|}$: 
\item path flows: $\vf\in \sR^{|\sS| \times |\sH|}$: 
\item path utilities: $\vv \in \sR^{|\sS| \times |\sH|}$
\item path choice probabilities: $\vp \in \sR^{|\sS| \times |\sH|}$
\item flows in origin-destination pairs: $\vq \in \sR^{|\sS| \times |\sW|}$
\item destination choice probabilities: $\vphi \in \sR^{|\sS| \times |\sW|}$
\item flows generated by node: $\vg \in \sR^{|\sS| \times |\sV|}$
\end{itemize}

\subsection{Inputs}
\label{ssec:inputs}

The inputs that are assumed to be given during model training are:

\begin{itemize}
\item Incidence matrix for mapping path flows to link flows: $\overline{\mD}\in \sR^{|\sA| \times |\sH|}$ 
\item Incidence matrix for mapping path flows to O-D flows: $\overline{\mM}\in \sR^{|\sW| \times |\sH|}$
\item Incidence matrix for mapping O-D flow into generated flow: $\overline{\mL}\in \sR^{|\sV| \times |\sW|}$
\item Tensor of exogenous features in route choice model: $\overline{\mZ} \in \sR^{|\sS| \times |\sA| \times |\sK_Z|}$
\item Tensor of exogenous features in trip generation model: $\overline{\mO} \in \sR^{|\sS| \times |\sV| \times |\sK_O|}$
\item Vectors of link capacities and free flow travel times: $\overline{\vx}^{\textmd{max}} \in \sR^{|\sA|}$, $\overline{\vt}^{\textmd{min}} \in \sR^{|\sA|}$
\item Matrices of observed link flow and travel time: $\overline{\vx} \in \sR^{|\sS| \times |\sA|}, \overline{\vt} \in \sR^{|\sS| \times |\sA|}$

\end{itemize}

\subsection{Assumptions}
\label{ssec:assumptions}

The main assumptions to formulate our model are: 

\begin{assumption}[Time-varying travel behavior]
\label{assumption:time-varying-travel-behavior}
The travelers' preferences and the number of trips generated in each location vary by period. 
\end{assumption}

As a consequence of Assumption \ref{assumption:time-varying-travel-behavior}, the learnable parameters $\hat{\vg}$, $\hat{\vx}$, and $\hat{\vtheta}$ associated with the trip generation, the link flow parameters and the route choice utility, respectively, can change between observations collected in different periods. 

\begin{assumption}[\SUELOGIT]
\label{assumption:suelogit}
Network traffic flow approximately follows stochastic user equilibrium with logit assignment in each period
\end{assumption}

If the travel demand and the travelers' utility function are allowed to change by period (Assumption \ref{assumption:time-varying-travel-behavior}), the equilibrium state should also vary by period. Consequently, the link flow ($\vx$), path flow $\vf$, and travel time variables $\vt$ can also change by period.

\begin{remark}
Our formulation allows for a relaxation of equilibrium conditions in the transportation network. Thus, the \SUELOGIT assumption does not need to hold perfectly in practice but can guide the network flow toward a stochastic equilibrium. 
\end{remark}

\begin{assumption}[Performance functions]
\label{assumption:performance-functions}
The performance function in a link monotonically increases with respect to the traffic flow in that link, and it has parameters independent of the period. 

\end{assumption}

The standard BPR function is an example of a performance function that satisfies Assumption \ref{assumption:performance-functions}. This paper incorporates a more flexible performance function that can capture the impact of the traffic flow in adjacent links on the travel time in a link.  

\begin{assumption}[Observable and time-invariant path sets]
\label{assumption:invariant-path-sets}
The composition of the path sets is given and independent of the period
\end{assumption}
As a consequence of Assumption \ref{assumption:invariant-path-sets}, the incidence matrices $\overline{\mD} \in \sR^{|\sA| \times |\sH|}, \ \overline{\mM}\in \sR^{|\sW| \times |\sH|}$ are given and equal for all samples. This assumption can be relaxed by incorporating column generation methods for path set generation \citep{guarda_statistical_2024}, but this is out of the scope of this paper. 

\begin{assumption}[Trip generation and trip distribution]
\label{assumption:destination-choices}
Travelers make destination choices from every location in the network following a logit model
\end{assumption}

This assumption is key to incorporating the trip generation step in the model and then distributing the trips generated by location among origin-destination pairs. 

\begin{assumption}[Observable and time-invariant destination sets]
\label{assumption:invariant-destination-sets}
The set of destinations that are reachable from every origin location is known and is independent of the period
\end{assumption}
As a consequence of Assumption \ref{assumption:invariant-destination-sets}, the incidence matrix $\overline{\mL} \in \sR^{|\sV| \times |\sW|}$ that maps origin-destination flows into generated trips by location is given and equal for all samples. This assumption can be relaxed by incorporating column-generation methods that can dynamically update the set of reachable destinations, but this is out of the scope of this paper. 

\subsection{Constraints}
\label{ssec:constraints}

The parameters and variables of the models must satisfy a set of constraints arising from the assumptions (Section \ref{ssec:assumptions}) and network flow theory. These constraints enforce that the model estimations comply with some fundamental properties of network flow and help prevent overfitting by reducing the feasible space of solutions. The constraints presented in this section must be satisfied for every sample $i \in \sS$ of observations collected from the network at a given period. The parameters, variables, and inputs are indexed by $i$ when they cannot be ensured to be equal between samples. 

\subsubsection{Conservation of network flows}
\label{sssec:flow-conservation-constraints}
As in any path-based static traffic assignment model, two sets of flow conservation constraints must be satisfied:
\begin{align}
\label{eq:path-flows-link-flows-conservation-constraint}
\vx_i &= \overline{\mD} \vf_i \\
\label{eq:path-flows-od-flows-conservation-constraint}
\vq_i &= \overline{\mM}\vf_i
\end{align}

where $\vx_i, \vf_i, \vq_i$ represents the flow in links, paths, and O-D pairs associated with sample $i$. With the incorporation of the trip generation step, the model must also satisfy the following constraint: 
\begin{align}
\label{eq:od-flows-generated-trips-conservation-constraint}
\vg_i &= \overline{\mL} \vq_i
\end{align}

\subsubsection{Route and destination choice models}

Path choice probabilities can be written in a vectorized form as follows \citep{guarda_estimating_2024}:
\begin{align}
    \label{eq:path-choice-probabilities-constraint}
    \vp_i 
    &= \frac{\exp\left(\displaystyle \vv_i \right)}{\displaystyle \overline{\mD}^{\top}\overline{\mD} \exp(\vv_i  )} 
\end{align}
where 
\begin{align}
    \label{eq:path-utility-constraint}
    \vv_i 
    &= \overline{\mM}^{\top} \left(\begin{bmatrix} \tilde{\vt}_i & \overline{\mZ}_i \end{bmatrix}
\hat{\vtheta}_i + \hat{\vgamma}\right)
\end{align}
is the vector of path choice utilities associated with a sample $i$. The left multiplication by the incidence matrix $\mD$ allows mapping the link utilities into path utilities. Similarly, the destination choice probabilities can be written as: 
\begin{align}
    \label{eq:destination-choice-probabilities-constraint}
    \vphi_i 
    = \frac{\exp\left(\displaystyle \hat{\vomega}_i \right)}{\displaystyle \overline{\mL}^{\top}\overline{\mL} \exp(\hat{\vomega}_i  )}
\end{align}
where $\hat{\vomega}_i$ is a vector of origin-destination specific parameters, which could vary between samples according to the period. In contrast to the route choice utility (Eq. \ref{eq:path-utility-constraint}), the specification of the destination choice utility does not include origin or destination-specific features. Incorporating these features is possible, but this is out of the scope of the paper. 

\subsubsection{Aggregation of individual choices to network flow}
\label{sssec:choices-to-flows-constraint}

Route choices can be aggregated to path flows as follows \citep{guarda_estimating_2024}: 
\begin{equation}
\vf_i=\left(\overline{\mD}^{\top}\hat{\vq}_i\right) \circ \vp_i
\label{eq:route-choice-constraint}
\end{equation}

\noindent Similarly, destination choices can be aggregated to O-D flows as:
\begin{equation}
\vq_i=\left(\overline{\mL}^{\top}\hat{\vg}_i\right) \circ \vphi_i
\label{eq:destination-choice-constraint}
\end{equation}
where it becomes clear that the O-D flows are a function of some learnable parameters. Note that the conservation constraints (\ref{eq:path-flows-od-flows-conservation-constraint}) and (\ref{eq:od-flows-generated-trips-conservation-constraint}) become redundant in the presence of constraints (\ref{eq:route-choice-constraint}) and (\ref{eq:destination-choice-constraint}). 

\subsubsection{Specification of trip generation function}
\label{sssec:specification-trip-generation-function-constraint}

The number of trips generated in a location is modeled with a linear function dependent on both location-specific and feature-specific parameters:
\begin{align}
    \label{eq:generation-model-constraint}
    \vg_i
    &= \overline{\mO}_i \hat{\vkappa}_i + \hat{\vdelta}_i
\end{align}
where each column of the matrix $\overline{\mO}_i$ contains the values of an exogenous feature at each location, e.g., the median household income in the Census block associated with the location. To ensure that the number of generated trips is non-negative, the vector $\vg$ is projected to the non-negative orthant. 

\begin{remark}
To increase representational capacity, location-specific parameters $\hat{\vdelta}_i$ could be period-specific. However, this could compromise the identification of the features-specific parameters $\hat{\vkappa}_i$. A solution to prevent identifiability issues is to obtain pre-trained weights for $\hat{\vkappa}_i$ and fix their values during model training.
\end{remark}

\subsubsection{Mapping of traffic flows into travel times}
\label{sssec:performance-function-constraint}

The \MTP model introduces a more flexible representation of the performance function that captures the impact of traffic flow interactions on travel times (Eq. \ref{eq:neural-performance-function}). At a high level, the performance function is a Multilayer Perceptron (\texttt{MLP}) that receives as input features generated from a polynomial kernel applied to the traffic flows at every link in the network, it uses a dense layer to generate the traffic flow interactions, and it outputs the links' travel times. Due to these features, we refer to this function as \textit{neural performance function}. Similar to classical performance functions used in static traffic assignment, it can capture the non-linear relationship between traffic flow and travel time through a polynomial kernel function. The performance function also satisfies the BPR property that the travel time in the links is equal to the free flow travel time $\overline{\vt}^{\min}$ when the link flow is zero. Formally, the vector of the link travel time parameters $\hat{\vt}$ and variables $\vt$ are defined as:
\begin{align}
\label{eq:neural-performance-function}
\tilde{\vt}_i&=\overline{\vt}^{\min} \circ \left(1 + \left(\overline{\mE}\circ\widehat{\mW} \right) \Phi(\hat{\vx}_i) \right)\\
\vt_i&=\overline{\vt}^{\min} \circ \left(1 + \left(\overline{\mE}\circ\widehat{\mW} \right) \Phi(\vx_i) \right)
\end{align}
with 
\begin{align}
\label{eq:polynomial-kernel-function}
\Phi(\hat{\vx}_i) &= \sum^{b}_{k=1} \beta_k\left(\frac{\hat{\vx}_i}{\overline{\vx}^{\max}}\right)^k\\
\Phi(\vx_i) &= \sum^{b}_{k=1} \beta_k\left(\frac{\vx_i}{\overline{\vx}^{\max}}\right)^k
\end{align}

where $\Phi(\cdot): \sR^{|\sA|} \to \sR^{|\sA|}$ is the polynomial kernel function that captures the non-linear relationship between the traffic flows and travel times, $b$ is a hyperparameter that determines the degree of the polynomial function and $\hat{\beta}_k$ is a learnable parameter associated with the term of degree $k\leq b$ in the polynomial function. Similar to the standard BPR function, link flows in the polynomial kernel function are normalized by the vector of link capacities $\overline{\vx}^{\max}$. The elements of the vector of parameters $\hat{\beta}$ are projected to the non-negative orthant to enforce that the travel time in a link monotonically increases with respect to the traffic flow in that link. An advantage of using a polynomial kernel instead of a BPR function that utilizes an exponential function with a learnable parameter is that the learnable parameter becomes linear with respect to the travel times, which helps to stabilize the gradient computation. Another advantage of using a linear combination of polynomial functions is increased representational capacity to map traffic flow into travel times. 

The treatment of traffic flow interactions is through the matrix $\widehat{\mW} \in \sR^{|\sA| \times |\sA|}$. Each element $W_{k,l}$ in $\widehat{\mW}$ captures the impact of the traffic flow of a link associated with row $k$ into the travel times of the link associated with column $l$. Since increases in the traffic flow on a link should increase the travel time, it is advisable to constrain the elements in the diagonal of $\widehat{\mW}$ to be positive. To constrain the representational capacity of the performance function, $\widehat{\mW}$ is masked by the matrix $\overline{\mE} \in \sR^{|\sA| \times |\sA|}$ through the element-wise multiplication shown in Eq. \ref{eq:neural-performance-function}. Each element $E_{k,l}$ in $\overline{\mE}$ takes the value one if $k = l$ or if the link associated with row $k$ is adjacent to the link associated with column $l$. Note that $\overline{\mE}$ is created programmatically from the network adjacency matrix, and if $\overline{\mE}$ is enforced to be diagonal, the effect of traffic flow interactions is ignored.

\subsection{Network equilibrium} 

The stochastic user equilibrium with logit assignment (\SUELOGIT) problem can be cast as a fixed point formulation in link flow space \citep{gentile_new_2018}. As shown by \citet{guarda_estimating_2024}, the fixed-point problem equates to finding a solution for $\hat{\vx}_i$ in: 
\begin{equation}   
\label{eq:fixed-point-sue-constraint}
\hat{\vx}_i
= {\overline{\mD}} \ \vf (\vp(\vv(\vt(\hat{\vx}_i))))
\end{equation}

where the link flow variables $\vx_i$ and the link flow parameters $\hat{\vx}_i$ are enforced to be the same, i.e., $\vx_i = \hat{\vx}_i$, and all learnable parameters of the model, except for $\hat{\vx}_i$, are fixed. A natural criterion to assess if the solution is at equilibrium is to compare the relative difference between the link flow parameters $\hat{\vx}_i$ and the link flow variables $\vx_i$. Formally, the relative gap in epoch $j$ associated with sample $i$ is defined as: 
\begin{equation}
\label{eq:relative-gap}
\displaystyle
\rho_{i,j} = \frac{\left\|\vx_i^{(j)}-\hat{\vx}_i^{(j)}\right\|_1}{\left\|\vx_i^{(j)}\right\|_1}
\end{equation}

In contrast to \citet{guarda_estimating_2024}, the \MTP model is not enforced to satisfy the equilibrium conditions strictly. Instead, we let the learning algorithm find the relative gap that maximizes the estimation accuracy associated with traffic flow and travel time. In the proposed framework, the relative gap can be interpreted as a proxy of the degree of equilibrium in the network.

\subsection{Loss function}
\label{ssec:loss-function}

The loss function $\ell$ is a weighted sum of at least three components; the link flow loss $\ell_{x}= d_{x}\left(\vx_i,\overline{\vx}_i\right)$, the travel time loss $\ell_{t}=d_{t}\left(\vt_i,\overline{\vt}_i\right)$ and the equilibrium loss $\ell_e=d_{e}\left(\vx_i, \hat{\vx}_i\right)$, where $d_x, d_{t}, d_{e} $ represent some distance metric between the inputs of the functions. Incorporating the equilibrium component in the loss function aims to satisfy the fixed point constraint in (\ref{eq:fixed-point-sue-constraint}). We choose the Euclidean norm as the distance metric and divide the loss components $\ell_{x}$ and $\ell_{t}$ by the number of non-missing observations associated with link flows ($\overline{n}_{x_i}$) and travel times ($\overline{n}_{t_i}$) in sample $i$. The equilibrium component is divided by the number of elements $|\sA|$ in the vector of link flow parameters $\hat{\vx}_i$.

To account for the difference in scale and dispersion in multi-source data, we also divide the loss components $\ell_{x}$ and $\ell_{t}$ by the standard deviation of the observed travel times ($\overline{\sigma}_{t_i}$) and traffic flows ($\overline{\sigma}_{x_i}$), respectively. The equilibrium component $\ell_e$ is divided by the standard deviation of the observed link flow in the sample, which encourages that the distribution of link flow parameters $\hat{\vx}_i$ and observed link flow are similar. Dividing $\ell_e$ by the standard deviation of the link flow variables $\vx_i$ is not convenient for the gradient computation because the denominator in $\ell_e$ becomes a function of the learnable parameters $\hat{\vx}_i$.

The set of hyperparameters $\lambda_{x},\lambda_{t},\lambda_{e}$ weight the loss components $\ell_{x},\ell_{t}, \ell_{e}$ to obtain the total loss. A higher value of $\lambda_{e}$ prioritizes obtaining a solution that satisfies the equilibrium condition, but it may compromise estimation performance. Higher values of $\lambda_{t}$ or $\lambda_{x}$ may induce lower reconstruction errors for travel time or link flow, respectively. The impact of the choice of hyperparameters is explored in the following sections. Note that the loss function could also include other components, such as a deviation between estimated and reference O-D matrices or between estimated and reference vectors of the generated trips by location. 

\subsection{Optimization problem}
\label{ssec:optimization-problem}

The optimization problem $\mathcal{P}$ seeks to minimize the loss function under the constraints in Section \ref{ssec:constraints}. The loss function adds the contribution of each sample $i \in S$ to the total loss. Some non-negativity constraints, such as $\vt_i \geq 0, \forall i \in S$, are not included because they are directly satisfied with the proposed constraints in $\mathcal{P}$. For convenience, we define the auxiliary variables $\vy$ and $\tilde{\vy}$ to denote the output of the polynomial kernel function $\Phi: \sR^{|\sA|} \to \sR^{|\sA|}$. We want to remark on the importance of normalizing the loss components to account for the difference in scale of multi-source data and the difference in coverage between samples and data sources. An appropriate normalization helps to find solutions that balance the loss reduction among components and reduce the need to fine-tune the hyperparameters of the loss function. 

Note that as a consequence of Assumptions \ref{assumption:invariant-path-sets} and \ref{assumption:invariant-destination-sets}, Section \ref{ssec:assumptions}, the incidence matrices are not indexed by the sample $i$. In line with Assumption \ref{assumption:performance-functions}, Section \ref{ssec:assumptions}, the performance function parameters are not indexed by $i$. For other parameters, we assume that observations collected in the same period have the same parameter values. Parameter sharing helps to reduce the model dimensionality and prevents overfitting the training data. Given that the standard deviation and the coverage of observations may vary between samples, the number of non-missing observations and the standard deviation associated with each data source is indexed by the sample $i$. 

{
\begin{mini*}<b>
    {\substack{\hat{\vx},\hat{\vkappa_i},\hat{\vdelta_i},\hat{\vomega_i}, \hat{\vtheta}, \hat{\vgamma}, \widehat{\mW},\hat{\vbeta}}}
    { \sum_{i \in \sS} \left(\lambda_{x}\frac{\| \vx_i- \overline{\vx}_i\|^2_2}{\overline{n}_{x_i}\overline{\sigma}_{x_i}} +\lambda_{t}\frac{ \|\vt_i - \overline{\vt}_i\|^2_2}{\overline{n}_{t_i}\overline{\sigma}_{t_i}} 
    + \lambda_{e}\frac{\|\vx_i - \hat{\vx}_i\|^2_2}{|\sA|\overline{\sigma}_{x_i}}\right)}{}{}
  	\addConstraint{\vy_i}{=\sum^{b}_{k=1} \beta_k\left(\frac{\hat{\vx}_i}{\overline{\vx}^{\max}}\right)^k}{\quad \forall i \in \sS}
	 \addConstraint{\tilde{\vt}_i}{=\overline{\vt}^{\min} \circ \left(1 +\left(\overline{\mE}\circ\widehat{\mW} \right) \vy_i\right)}{\quad \forall i \in \sS}
    \addConstraint{\vv_i}{=\overline{\mD}^{\top} \left(\begin{bmatrix} \tilde{\vt}_i & \overline{\mZ}_i \end{bmatrix} \hat{\vtheta}_i + \vgamma\right)}{\quad \forall i \in \sS}
    \addConstraint{\vp_i}{=\frac{\exp\left(\displaystyle \vv_i  \right) }{\displaystyle \overline{\mM}^{\top}\overline{\mM}\exp(\vv_i)}}{\quad \forall i \in \sS}
    \addConstraint{	\vg_i}{= \overline{\mO}_i \hat{\vkappa}_i + \hat{\vdelta}_i}{\quad \forall i \in \sS}
    \addConstraint{	\vphi_i}{= \frac{\exp\left(\displaystyle \hat{\vomega_i} \right)}{\displaystyle \overline{\mD}^{\top}\overline{\mD} \exp(\hat{\vomega_i}  )}}{\quad \forall i \in \sS}
    \addConstraint{\vq_i}{=\left(\overline{\mL}^{\top}\hat{\vg}_i\right) \circ \vphi_i}{\quad \forall i \in \sS}
    \addConstraint{\vf_i}{=\left(\overline{\mM}^{\top}\vq_i\right) \circ \vp_i}{\quad \forall i \in \sS}
    \addConstraint{\vx_i}{={\overline{\mD}}\vf_i }{\quad \forall i \in \sS}
    \addConstraint{\tilde{\vy}_i}{=\sum^{b}_{k=1} \beta_k\left(\frac{\vx_i}{\overline{\vx}^{\max}}\right)^k}{\quad \forall i \in \sS}
	 \addConstraint{\vt_i}{=\overline{\vt}^{\min} \circ \left(1 + \left(\overline{\mE}\circ\widehat{\mW} \right) \tilde{\vy}_i\right)}{\quad \forall i \in \sS}
    \addConstraint{\hat{\vx}_i,\vg_i}{\geq \vzero}{\quad \forall i \in \sS}
    \addConstraint{\textmd{diag}(\widehat{\mW}),\hat{\vbeta}}{> \vzero}{}
\end{mini*}
}

\section{Solution algorithm}
\label{sec:solution-algorithm}

This section first describes the main steps of our solution algorithm, including the forward and backward operations performed in the computational graph and the parameter updating scheme. Then, it derives some properties of the solution obtained with the algorithm. Appendix \ref{appendix:sec:illustrative-example} presents an example of the application of the algorithm and the computational graph associated with a toy transportation network. 

\newlength\myindent
\setlength\myindent{2em}
\newcommand\bindent{%
  \begingroup
  \setlength{\itemindent}{\myindent}
  \addtolength{\algorithmicindent}{\myindent}
}
\newcommand\eindent{\endgroup}

\begin{algorithm}[!htbp] 
    \small %
	\captionsetup{font=small}
        \caption{Macroscopic Traffic Estimator (\MTP)}
	\label{alg:tvgodlulpe}
	\scalebox{1}{%
		\begin{minipage}{\linewidth}
			\begin{algorithmic} 

                \State \State \textit{Step 0: Initialization.} 
                \begin{algsubstates}
                \State Initialize the model parameters $\hat{\vx}_i, \hat{\vt}_i, \hat{\vtheta}_i, \hat{\vgamma}, \widehat{\mW}, \widehat{\vbeta}, \hat{\vkappa}_i, \hat{\vdelta}_i, \hat{\vomega}_i, \forall i \in \sS$
                \State Process input $\overline{\mD}, \overline{\mM}, \overline{\mL}, \overline{\mZ}, \overline{\mO}_i, \overline{\vx}^{\textmd{max}}, \overline{\vt}^{\textmd{min}}$, $\overline{\vx}_i$, $\overline{\vt}_i, \forall i \in \sS$
                \end{algsubstates}
                
                \State
				
				\State \textit{Step 1: Forward Iteration.} 
    			
				\begin{algsubstates}
                    \State Compute $\vx_i, \vt_i, \forall i \in \sS$ with Eqs. (\ref{eq:forward-pass-polynomial-features})-(\ref{eq:forward-pass-link-traveltime-output}). 
                    \State Compute the loss function $\ell$ with Eq. \ref{eq:forward-pass-loss-function}. 
					
				\end{algsubstates}

                \State
				
				\State \textit{Step 2: Backward iteration}. Compute the gradient of the loss function $\ell$ with respect to the model parameters. 

                \State 
 
                \State \textit{Step 3: Parameters updating}. Update model parameters using a gradient-based method. 
                
                \State 
                
                \State \textit{Step 4: Termination criterion}. If the number of epochs is lower than the maximum epochs, go back to Step 1. If not, terminate the algorithm.
              
			\end{algorithmic}
		\end{minipage}%
	}
\end{algorithm}

\subsection{Parameter initialization} 
\label{ssec:initialization}

The initialization of the model's parameters can significantly impact the convergence to a solution due to the non-convexity of the optimization problem. A convenient strategy for parameter initialization is to leverage prior domain knowledge. For example, it could be convenient to initialize the parameters of the utility function with estimates obtained from travel behavior studies. The parameters $\vdelta$ of the trip generation model can be initialized with reference values of the generated trips when available. Similar to \citet{guarda_estimating_2024}, the link flow parameters $\hat{\vx}$ are initialized with the link flow obtained from a single pass of logit-based stochastic traffic assignment, assuming that travel times are equal to the free flow travel times. The subset of parameters that are a function of the set of learnable parameters does not require explicit initialization (Section \ref{ssec:parameters}). 

\subsection{Forward pass}
\label{ssec:forward}

A forward pass consists of a chain of functions or layers applied to an input to generate output values. Suppose that all learnable parameters of the model (Section \ref{ssec:parameters}) are initialized to a set of feasible values. Without loss of generality, let's pick a single sample $i \in \sS$ from the dataset. A forward pass in our computational graph performs the following operations to compute the loss function and the output of each layer:

\begin{align}
\label{eq:forward-pass-polynomial-features}
\vy_i&=\sum^{b}_{k=1}\beta_k\left(\frac{\hat{\vx}_i}{\overline{\vx}^{\max}}\right)^k \\
\label{eq:forward-pass-link-traveltime}
\tilde{\vt}_i&=\overline{\vt}^{\min} \circ \left(1 + \left(\overline{\mE}\circ\widehat{\mW} \right) \vy_i\right)\\
\label{eq:forward-pass-path-utility}
\vv_i&=\overline{\mD}^{\top} \left(\begin{bmatrix} \tilde{\vt}_i & \overline{\mZ}_i \end{bmatrix} \hat{\vtheta}_i + \vgamma\right)\\
\label{eq:forward-pass-path-probability}
\vp_i&=\frac{\exp\left(\displaystyle \vv_i  \right) }{\displaystyle \overline{\mM}^{\top}\overline{\mM}\exp(\vv_i)}\\
\label{eq:forward-pass-trip-generation}
\vg_i&= \overline{\mO}_i \hat{\vkappa}_i + \hat{\vdelta}_i\\
\label{eq:forward-pass-destination-probability}
\vphi_i&= \frac{\exp\left(\displaystyle \hat{\vomega_i} \right)}{\displaystyle \overline{\mD}^{\top}\overline{\mD} \exp(\hat{\vomega_i}  )}\\
\label{eq:forward-pass-od-flow}
\vq_i&=\left(\overline{\mL}^{\top}\hat{\vg}_i\right) \circ \vphi_i\\
\label{eq:forward-pass-path-flow}
\vf_i&=\left(\overline{\mM}^{\top}\vq_i\right) \circ \vp_i\\
\label{eq:forward-pass-link-flow}
\vx_i&={\overline{\mD}}\vf_i \\
\label{eq:forward-pass-polynomial-features-output}
\tilde{\vy}_i&=\sum^{b}_{k=1}\beta_k\left(\frac{\vx_i}{\overline{\vx}^{\max}}\right)^k \\
\label{eq:forward-pass-link-traveltime-output}
\vt_i&=\overline{\vt}^{\min} \circ \left(1 + \left(\overline{\mE}\circ\widehat{\mW} \right)\tilde{\vy}_i\right)\\
\label{eq:forward-pass-loss-function}
\ell_i &= \lambda_{\overline{x}}\| \vx_i- \overline{\vx}_i\|^2_2 +\lambda_{\overline{t}} \|\vt_i - \overline{\vt}_i\|^2_2  
+ \lambda_{e}\|\vx_i - \hat{\vx}_i\|^2_2
\end{align}

The parameters are then projected according to the inequality constraints in $\mathcal{P}$ (Section \ref{ssec:optimization-problem}).

\subsection{Backward pass} 
\label{ssec:backward}

The gradients of the parameters with respect to the loss function can be backpropagated through the layers of the computational graph using the derivative chain rule. The analytical form of the chain-rule derivatives for most parameters is provided by \citet{guarda_estimating_2024}. Implementation-wise, the chain-rule derivatives across the model layers are obtained through the automatic differentiation tools in TensorFlow.

\subsection{Parameter updating}
\label{ssec:parameter-updating}

The updating scheme follows the standard formula for gradient-based methods applied to minimization problems. At every step, the negative weight between the learning rate and the gradient of the loss function with respect to the parameters is added to the current parameter estimates. 

\subsection{Training and inference}
\label{ssec:other-design-aspects}

During model training, all parameters of the model are learned. During inference, the link flow parameters are adjusted such that the relative gap of the models in the two stages is the same. It is possible to enforce a different value for the relative gap during inference if the modeler believes this can maximize estimation accuracy. Because there is no access to observed link flow and travel time during inference,  all hyperparameters of the loss function except for $\lambda_e$ are set to 1. 

\subsection{Properties}
\label{sec:mathematical-properties}

The computational graph and solution algorithm presented in this paper enhances the \TVODLULPE model and \texttt{PESUELOGIT} algorithm developed by \citet{guarda_estimating_2024}. Figure \ref{fig:computational-graph}, Appendix \ref{appendix:sec:illustrative-example} highlights some of the main differences in the computational graph. Propositions \ref{prop:stalogit-solution} and \ref{prop:equilibrium-solution} summarize key properties of our model and solution algorithm. 


\begin{definition}[\STALOGIT]
\label{def:sta-logit}
At Logit-based Stochastic Traffic Assignment (\STALOGIT), path flows are distributed among the paths connecting each O-D pair according to a logit distribution.
\end{definition}

\begin{prop}[\STALOGIT solution]
    \label{prop:stalogit-solution}
    A forward pass of the computational graph $\mathcal{C}$ gives a link flow solution that is a valid Logit-based Stochastic Traffic Assignment (\STALOGIT).
\end{prop}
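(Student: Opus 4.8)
\emph{Proof plan.} The plan is to unwind the forward-pass equations (\ref{eq:forward-pass-polynomial-features})--(\ref{eq:forward-pass-link-traveltime-output}) and show that the path-flow vector $\vf_i$ they produce assigns, within every O-D pair $w \in \sW$, a share of the O-D flow $q_{i,w}$ to each connecting path $h$ equal to the logit probability built from the path utilities $\vv_i$. Concretely, I would verify the three ingredients that together constitute Definition \ref{def:sta-logit}: (i) the entries of $\vp_i$ in (\ref{eq:forward-pass-path-probability}) are genuine logit choice probabilities, i.e. $p_{i,h} = \exp(v_{i,h})/\sum_{h' \in \sH_w} \exp(v_{i,h'})$ whenever $h$ connects $w$; (ii) the path flows satisfy $f_{i,h} = q_{i,w}\, p_{i,h}$; and (iii) the resulting flows are non-negative and conserve demand, $\sum_{h \in \sH_w} f_{i,h} = q_{i,w}$ for each $w$.

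For (i), the key is the combinatorial action of the incidence matrix: $\overline{\mM} \in \sR^{|\sW|\times|\sH|}$ has a one in entry $(w,h)$ iff path $h$ connects O-D pair $w$, and since each path belongs to exactly one O-D pair, $\overline{\mM}^{\top}\overline{\mM}$ is the block-constant $0/1$ matrix whose $(h,h')$ entry is one exactly when $h$ and $h'$ share an O-D pair. Hence $(\overline{\mM}^{\top}\overline{\mM}\exp(\vv_i))_h = \sum_{h' \in \sH_{w(h)}} \exp(v_{i,h'})$, where $w(h)$ denotes the O-D pair of $h$, and dividing $\exp(v_{i,h})$ by this denominator as in (\ref{eq:forward-pass-path-probability}) returns the logit probability over $\sH_{w(h)}$; this is the vectorized identity already recorded in (\ref{eq:path-choice-probabilities-constraint}) and in \citet{guarda_estimating_2024}, and it immediately gives $\sum_{h \in \sH_w} p_{i,h} = 1$ for every $w$. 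For (ii), $\overline{\mM}^{\top} \vq_i$ is the path-indexed vector with entry $q_{i,w(h)}$, so the Hadamard product in (\ref{eq:forward-pass-path-flow}) gives $f_{i,h} = q_{i,w(h)}\, p_{i,h}$ componentwise, which is exactly the logit split of the O-D flow across its paths. For (iii), non-negativity of $\vq_i$ follows from (\ref{eq:forward-pass-od-flow}): $\vphi_i$ has strictly positive entries, and $\overline{\mL}^{\top} \hat{\vg}_i \geq \vzero$ since the generated-trip vector is projected onto the non-negative orthant and $\overline{\mL}^{\top}$ has $0/1$ entries; combined with $p_{i,h}\geq 0$ this yields $f_{i,h}\geq 0$, while summing (ii) over $h\in\sH_w$ and using $\sum_{h\in\sH_w}p_{i,h}=1$ yields $\sum_{h\in\sH_w}f_{i,h}=q_{i,w}$. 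Finally, $\vx_i=\overline{\mD}\vf_i$ in (\ref{eq:forward-pass-link-flow}) is, by the same incidence-matrix reading, the link-flow vector obtained by summing path flows over the paths traversing each link, so $(\vf_i,\vx_i)$ is a consistent path/link flow profile whose path component is logit-distributed within each O-D pair.

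I expect the only real subtlety to be making statement (i) precise: one must check that $\overline{\mM}^{\top}\overline{\mM}$ genuinely implements the ``sum over co-O-D paths'' operator --- which relies on the rows of $\overline{\mM}$ being orthogonal $0/1$ indicators, i.e. on each path having a unique O-D pair --- and, similarly, that $\overline{\mD}^{\top}$ and $\overline{\mL}^{\top}$ act as the claimed aggregation maps. Once these incidence-matrix identities are fixed, the rest is bookkeeping: the logit form, demand conservation, and non-negativity all follow from (\ref{eq:forward-pass-path-probability})--(\ref{eq:forward-pass-link-flow}) without any appeal to the equilibrium fixed point (\ref{eq:fixed-point-sue-constraint}) --- which is exactly why a single forward pass, whether or not the relative gap (\ref{eq:relative-gap}) is small, already produces a \STALOGIT solution.
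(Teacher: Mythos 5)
Your proposal is correct and follows essentially the same route as the paper's own proof: the path choice probabilities are logit by construction (Eq.~\ref{eq:path-choice-probabilities-constraint}), the Hadamard product in Eq.~\ref{eq:route-choice-constraint} splits each O-D flow across its paths according to those probabilities, and $\vx_i = \overline{\mD}\vf_i$ then inherits consistency with \STALOGIT --- no appeal to the equilibrium fixed point is needed. Your write-up is simply more explicit than the paper's two-line argument, spelling out the incidence-matrix identities (including correctly using $\overline{\mM}^{\top}\overline{\mM}$ as the co-O-D aggregation operator, consistent with the forward pass and the optimization problem), non-negativity, and demand conservation that the paper leaves implicit.
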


\begin{proof}
Consider the forward pass for a sample $j \in \sS$. Suppose the current value of the vector of O-D flows is $\vq^{\prime}$. The current path flow solution $\vf^{\prime}$ of the computational graph must satisfy Eq. \ref{eq:route-choice-constraint} for a given value $\vp^{\prime}$ of the path choice probabilities. Because $\vp^{\prime}$ is Logit-distributed (Eq. \ref{eq:path-choice-probabilities-constraint}), $\vf^{\prime}$ is a valid Logit-based Stochastic Traffic Assignment (\STALOGIT). Then, the current link flow solution $\vx^{\prime}$ is consistent with \STALOGIT, which completes the proof. 
\end{proof}

\begin{remark}
In \citet{guarda_estimating_2024}, the link flow solution is equated to the link flow parameters. Thus, the link flow parameters are a valid \STALOGIT only when the solution of the computational graph is at \SUELOGIT. As shown in this proposition, the computational graph for the \MTP model produces link flow solutions that satisfy \STALOGIT at any iteration regardless of the \SUELOGIT condition. Because the travel times in the last layer of the computational graph are a function of the link flows, they are also consistent with \STALOGIT. 
\end{remark}

\begin{definition}
\label{def:sue-logit}
At Stochastic User Equilibrium with logit assignment (\SUELOGIT), path flows are distributed among the paths connecting each O-D pair according to a logit distribution, where path costs are determined as functions of the assigned path flows \citep{Fisk1980}.
\end{definition}

\begin{prop}[Equilibrium solution]
    \label{prop:equilibrium-solution} 
    If the loss function of \MTP only includes the equilibrium component,  Algorithm \ref{alg:tvgodlulpe} solves the \SUELOGIT problem and provides solutions that are a valid \STALOGIT
\end{prop}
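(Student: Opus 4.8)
The plan is to treat the two claims --- that the link flow returned by Algorithm~\ref{alg:tvgodlulpe} under the equilibrium-only loss is a valid \STALOGIT, and that it solves the \SUELOGIT problem --- separately. The \STALOGIT part is immediate and independent of the loss: Proposition~\ref{prop:stalogit-solution} already guarantees that every forward pass of the computational graph, hence in particular the iterate returned at termination, yields a link flow consistent with \STALOGIT, and since the output travel times are computed from that link flow they are consistent as well. So essentially all the effort goes into showing the algorithm solves \SUELOGIT, which I would do in two moves: first identify the global minimizers of the equilibrium-only loss with the \SUELOGIT fixed points, and then argue the gradient-based iteration converges to one of them.

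For the first move, note that with only the equilibrium component retained the per-sample loss is $\ell_i = \lambda_e \|\vx_i - \hat{\vx}_i\|_2^2/(|\sA|\,\overline{\sigma}_{x_i}) \ge 0$. Composing the forward-pass layers (Eqs.~\ref{eq:forward-pass-polynomial-features}--\ref{eq:forward-pass-link-flow}) expresses the link flow variable as $\vx_i = \overline{\mD}\,\vf(\vp(\vv(\vt(\hat{\vx}_i))))$, so $\ell_i = 0$ holds exactly when $\hat{\vx}_i = \overline{\mD}\,\vf(\vp(\vv(\vt(\hat{\vx}_i))))$, that is, exactly when the fixed-point constraint in~(\ref{eq:fixed-point-sue-constraint}) is satisfied. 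As \citet{guarda_estimating_2024} show, that fixed point characterizes \SUELOGIT (Definition~\ref{def:sue-logit}); since the layers are continuous and map a compact feasible set into itself, such a fixed point exists for any fixed values of the remaining parameters, so $\ell_i$ attains its global minimum value $0$ on a nonempty set of parameter configurations --- exactly those at which $\hat{\vx}_i$ is a \SUELOGIT link flow. I would also record here that at any such configuration the auxiliary travel times $\tilde{\vt}_i$ built from $\hat{\vx}_i$ in Eq.~\ref{eq:forward-pass-link-traveltime} coincide with the output travel times $\vt_i$ built from $\vx_i$ in Eq.~\ref{eq:forward-pass-link-traveltime-output}, because both come from the same performance-function map; thus the costs driving route choice equal the costs implied by the assigned flows, so the fixed point is a genuine equilibrium rather than a mere consistency condition.

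For the second move, since $\ell_i \ge 0$ with equality exactly on that set, it is precisely the set of global minimizers. I would then argue that the gradient-based updates in Steps~1--3 of Algorithm~\ref{alg:tvgodlulpe}, under a suitable step-size rule, drive $\ell_i$ monotonically to $0$; the cleanest route is to observe that in this regime the \MTP update reduces to the equilibrium phase of the \texttt{PESUELOGIT} algorithm of \citet{guarda_estimating_2024}, whose convergence to the \SUELOGIT solution is established there. At termination one then has $\vx_i = \hat{\vx}_i$, and combining this equality with Proposition~\ref{prop:stalogit-solution} and the coincidence $\tilde{\vt}_i = \vt_i$ above, the returned path flows are logit-distributed with path costs that are functions of the assigned path flows --- a valid \SUELOGIT by Definition~\ref{def:sue-logit} that is simultaneously a valid \STALOGIT.

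The step I expect to be the main obstacle is this convergence claim. Plain gradient descent on $\|\vx_i - \hat{\vx}_i\|_2^2$ could a priori stall at a spurious stationary point where the residual $\vx_i - \hat{\vx}_i$ lies in the left null space of $\partial \vx_i/\partial \hat{\vx}_i - \mI$ without the residual itself vanishing, so the argument must invoke a structural property of the logit stochastic-assignment map: that its fixed point set is globally attracting, or equivalently that $\partial \vx_i/\partial \hat{\vx}_i - \mI$ is nonsingular away from it. This is where Assumption~\ref{assumption:performance-functions} (strictly increasing performance functions) and the structure of $\widehat{\mW}$ --- symmetry, or positive definiteness of the cost Jacobian, giving uniqueness of the \SUELOGIT solution via Fisk's strictly convex equivalent program --- would enter. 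I would dispatch it either by citing the convergence analysis of \citet{guarda_estimating_2024} for the reduced update, or, for a self-contained proof, by deriving nonsingularity of the residual Jacobian from the strict monotonicity of the performance functions together with uniqueness of the \SUELOGIT link flow.
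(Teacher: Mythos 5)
Your proposal is correct and its core is the same as the paper's: the \STALOGIT claim is delegated to Proposition~\ref{prop:stalogit-solution}, and the \SUELOGIT claim rests on identifying the equilibrium-only objective with the fixed-point condition of Eq.~\ref{eq:fixed-point-sue-constraint}. Where you diverge is in how much of the argument you consider finished. The paper's proof stops at the statement that the optimization problem ``equates to'' the fixed-point problem and is therefore equivalent to \SUELOGIT; it makes no attempt to show that the gradient iteration in Algorithm~\ref{alg:tvgodlulpe} actually reaches a zero of the loss rather than a spurious stationary point, nor does it discuss existence or uniqueness of the fixed point. You correctly isolate this as the real mathematical content of the claim --- that $\nabla_{\hat{\vx}_i}\|\vx_i-\hat{\vx}_i\|_2^2$ can vanish without the residual vanishing unless the Jacobian $\partial\vx_i/\partial\hat{\vx}_i-\mI$ is nonsingular off the fixed-point set --- and you propose to discharge it either by citing the convergence analysis of \citet{guarda_estimating_2024} or via strict monotonicity of the performance functions and Fisk's convex equivalent program. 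That extra step is not in the paper, and in that sense your proof attempt is more demanding than the published one: if you insist on proving convergence you have identified a gap that the paper leaves open (and that its own Remark only partially addresses, by noting strict convexity in link-flow space when $\hat{\vx}$ is the sole learnable parameter and the performance functions are monotone). For the purposes of matching the paper you could stop after your first move; for a genuinely complete proof your second move is necessary and, as you note, still needs the nonsingularity or contraction argument to be carried out rather than merely cited.
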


\begin{proof}
This problem equates to finding a fixed point in link flow space in the computational graph. As shown in Eq. \ref{eq:fixed-point-sue-constraint},  this problem is equivalent to the \SUELOGIT problem. By Proposition \ref{prop:stalogit-solution}, at any iteration, the solution of \MTP provides a valid \STALOGIT, which completes the proof. 
\end{proof}

\begin{remark}
Proposition \ref{prop:equilibrium-solution} together with Proposition \ref{prop:stalogit-solution} show that the \MTP algorithm provides consistent solutions in settings where the equilibrium condition may or may not hold. Note that if the only learnable parameters are the link flow parameters $\hat{\vx}$ and the neural performance functions are monotonically increasing (Assumption \ref{assumption:performance-functions}), the \SUELOGIT problem is strictly convex in the path and link flow space (Proposition 2, \citet{guarda_statistical_2024}). 
\end{remark}
\section{Numerical experiments}
\label{sec:numerical-experiments}

To study the performance of our algorithm to estimate network flows and travel times, we conduct experiments with synthetic data generated from the Sioux Falls SD network. The network comprises 24 nodes and 76 links (Figure \ref{subfig:sioux-falls-network}, Appendix \ref{appendix:sec:networks}). The O-D matrix is obtained from \citet{TNTP}, and we generate a set of 1,584 paths corresponding to the three shortest paths among 528 O-D pairs. We employ a validation framework where the ground truth values of the model parameters are assumed to be known and used to generate synthetic measurements of traffic flow and travel time consistent with \SUELOGIT. All the experiments are conducted on an Apple M2 Pro with a 10‑core CPU, 16 GB of unified memory, and 1 TB SSD. The runtime of the experiments presented in this section is approximately 4 hours. All models are trained with TensorFlow \citep{abadi_tensorflow_2016} and using the Adam optimizer \citep{Kingma2015}.

\subsection{Models specifications}

Table \ref{table:models-specifications-siouxfalls} describes the three model specifications tested in our experiments. The \SUELOGIT model performs logit-based stochastic user equilibrium for each period of the day, and it has only the link flow parameters $\hat{\vx}$ as learnable parameters. The \TVODLULPE model formulated by \citet{guarda_estimating_2024} is used as a benchmark against \MTP developed in this paper. The \TVODLULPE estimates performance functions of BPR class with link-specific parameters, O-D matrices that are period-specific, and utility function parameters that are both feature-specific and period-specific. The utility function also includes link-specific parameters that capture any effect not captured through the feature-specific attributes. The \MTP estimates the number of generated trips per location using location-specific fixed effects, and it uses origin-destination-specific fixed effects and softmax functions to compute the number of trips per O-D pair, namely, the O-D matrix. Both sets of parameters are also assumed to be period-specific. The link performance function of \MTP is modeled with three polynomial features ($b = 3$) and with an MLP layer that captures the traffic flow interactions among links. Both the \TVODLULPE and \MTP define the link flow parameters as learnable. 

\begin{table}[H]
\begin{adjustbox}{width=\textwidth}  
\renewcommand{\arraystretch}{1.05}
\centering
\begin{threeparttable}
\caption{Models specifications and parameters learned with synthetic data from Sioux Falls network}
\label{table:models-specifications-siouxfalls}
\begin{tabular}{ccccccc}
\hline
\multirow{2}{*}[-0.5cm]{Model} & \multicolumn{6}{c}{Parameters} \\ \cline{2-7} 
 & \begin{tabular}[c]{@{}c@{}}Link\\ flows\end{tabular} & \begin{tabular}[c]{@{}c@{}}Utility \\ function \end{tabular} & \begin{tabular}[c]{@{}c@{}} Generation \end{tabular} & \begin{tabular}[c]{@{}c@{}}O-D\\ estimation\end{tabular}  & \begin{tabular}[c]{@{}c@{}}Link \\ performance \\ function \end{tabular} & Total \\ \hline
\SUELOGIT & 76$T$ & $-$& $-$ & $-$ & $-$ & $152$ \\
\TVODLULPE & 76$T$ & 76 + 3$T$ & $-$ & 528$T$ & $2\times 76$ &
$228 + 607T = 1,442$
\\
\MTP & 76$T$ & 76 + 3$T$ & 76T & 528$T$ & $P + \|\mW\|_0  = 3+864$ & 
$943 + 683T = 2,309$
\\ \hline
\end{tabular}
\begin{tablenotes}
      \footnotesize
        \item Note: $0 \leq \|\mW\|_0 \leq |\sA|^2$ corresponds to the number of cells in the flow interaction matrix that are different than zero. 
        \item $T$: number of distinct periods
        \item $P$: hyperparameter that defines the degree of the polynomial used to represent the link performance function. 
    \end{tablenotes}
\end{threeparttable}
\end{adjustbox}
\end{table}

\subsection{Data generation}
\label{ssec:data-generation}

To generate synthetic data, we compute logit-based stochastic user equilibrium (\SUELOGIT). In the \SUELOGIT model, only the link flow parameters $\hat{\vx}$ are learnable. An O-D matrix for each period is generated using a Gaussian distribution with a mean equal to the true O-D matrix obtained from \citep{TNTP} and a standard deviation equal 10\% of the average number of trips per O-D pair. The link cost performance functions are assumed of \BPR class with parameters $\alpha = 0.15$ and $\beta=4$. The free flow travel times and link capacities are obtained from \citet{TNTP}. The utility function is assumed to be linear in parameters and dependent on the travel time $t$ and two exogenous attributes: the standard deviation of travel time $v$ and the number of street intersections per mile $s$. The exogenous attributes of each link are generated using uniform random variables bounded between 0 and 1. The parameters weighting the attributes of the utility function are assumed common among travelers and equal to $\theta_t = -1, \ \theta_v = -1.3, \ \theta_r = -3$, such that the reliability ratio $\texttt{RR} =  \theta_t/\theta_v$ is equal to $1.3$.

With the \SUELOGIT model, we generate 300 samples comprising 22,800 measurements of traffic flow and travel time. Each sample is associated with a set of measurements collected on a given day and at a period of that day in each link of the network. This resembles a real-world use of this model trained by data collected over 300 time periods. A third of the samples are generated with the ground truth O-D matrix, which is assumed to be fixed and equal for all samples. To represent a period of lower traffic congestion, we generate the remaining 200 samples with the same O-D matrix but scaled by a factor of 0.8. Thus, we create a dataset with observations collected from two periods or hours of the day, each comprising 200 and 100 samples representing traffic conditions during peak and off-peak hours, respectively. To introduce randomness in the travel time and traffic flow measurements collected between days, we add a Gaussian error term with zero mean and standard deviation equal to 5\% of the true mean of the observed measurements. 

Figure \ref{fig:siouxfalls-flow-traveltime} shows a scatter plot with the observed travel times and traffic flow measurements obtained with the data generation process during the off-peak (orange) and peak hours (blue). The clouds of points around the true values of the observed measurements result from adding Isotropic Gaussian Noise in the data-generating process. Because the O-D matrix for the off-peak period is scaled by a factor of 0.8, the traffic flows and travel times are consistently lower than during the peak period. Due to the heterogeneity in the link's capacities and free flow speeds, the correlation between observed travel time and traffic flow is positive but small.

\begin{figure}[H]
	\centering
	\includegraphics[width=0.3\textwidth]{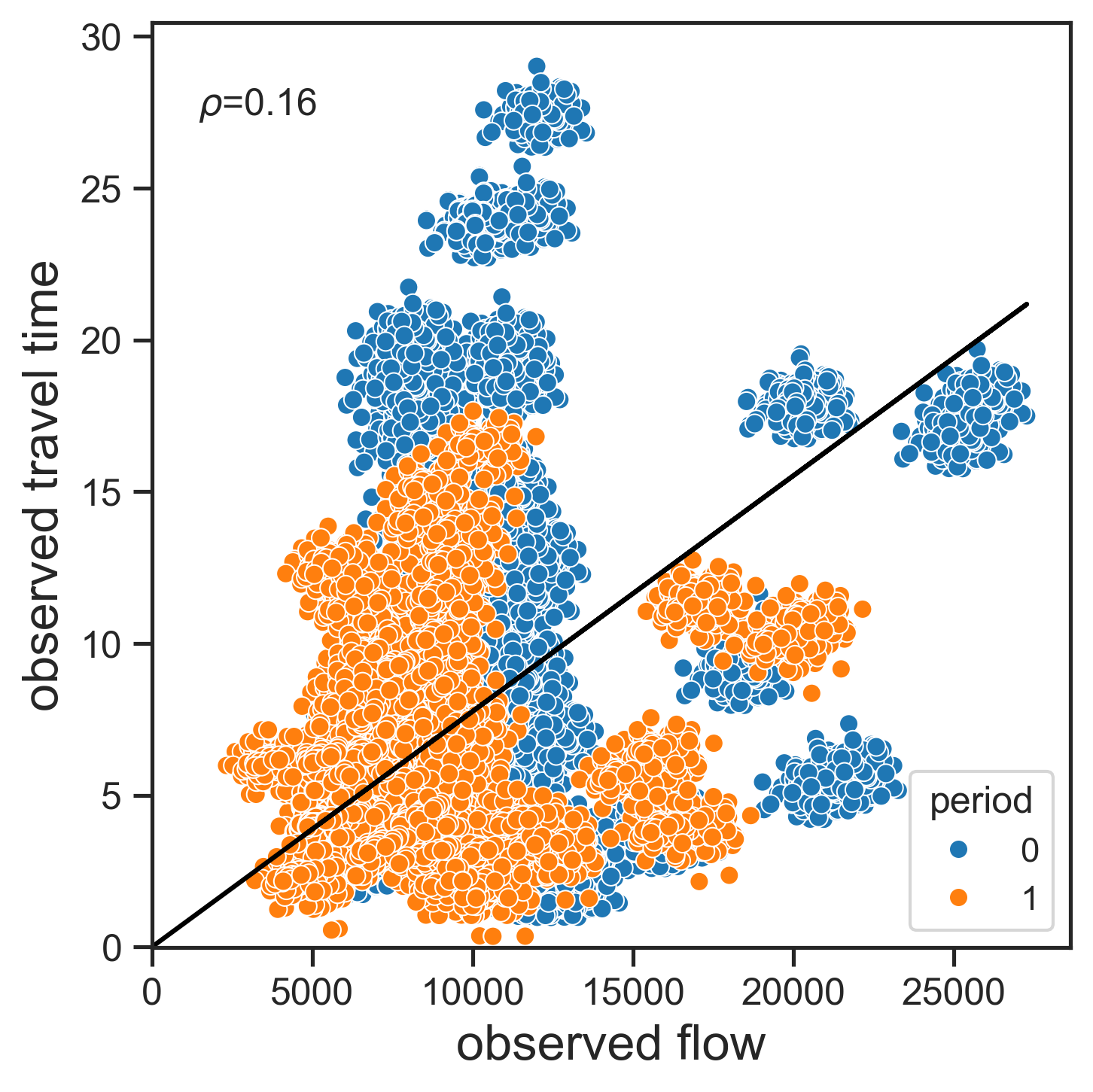}
	\caption{Synthetic traffic flow and travel time measurements generated in the Sioux Falls network during off-peak (orange) and peak periods (blue)}
	\label{fig:siouxfalls-flow-traveltime-synthetic-data}
\end{figure}

To train the \TVODLULPE and \MTP models, it is assumed that there is only access to a noisy measurement of the ground truth number of generated trips. To train \MTP, the reference trip generation $(\bar{\vg})$ is obtained by aggregating by node the O-D matrix used to compute \SUELOGIT. The reference O-D demand $(\bar{\vq})$ used to train the \TVODLULPE model is generated by uniformly distributing the generated trips in $\bar{\vg}$ among the set of reachable destinations from each node. 

\subsection{Hyperparameters and loss function}
\label{ssec:hyperparameters-experiments}

The number of epochs is 60 and the learning rate is $0.05$. This learning rate ensures that the \SUELOGIT model satisfies a relative gap threshold of $10^{-4}$ and that the final solution of the other models minimizes the estimation error of traffic flows and travel times. To speed up convergence, the batch size is set to 1, which means that gradient updates are conducted sequentially along each minibatch. Thus, each gradient update uses the 76 observations of traffic flow and travel time available for each sample.

The choice of hyperparameters weighting the components of the loss function is straightforward. The hyperparameters are set to 1 or 0 according to the parameters learned in the models. For example, the estimation of \SUELOGIT only seeks to find link flows that are fixed points, namely, where the equilibrium component becomes zero. Thus, the weights of all components in the loss function except for the equilibrium component are set to zero at training time. In the \TVODLULPE and \MTP models, the weights of the loss components associated with the estimation error of traffic flow and travel time are set to one such that the observed measurements of traffic flow and travel time can be reproduced with the model. The deviation between (i) the estimated and reference O-D matrices or (ii) the estimated and reference generation vectors to the loss functions are excluded in the loss function since these terms are not required for achieving good in-sample performance. The mean squared error (MSE) is chosen as the distance metric to compute each component of the loss function, but since the optimization algorithm minimizes the sum of a normalization of the loss components (Section \ref{ssec:loss-function}), the MSE of each component could not monotonically decrease over epochs. 

\subsection{Parameters initialization and constraints}
\label{ssec:parameters-initialization-experiments}

In the \TVODLULPE model, the parameters $\alpha$ and $\beta$ of the BPR function are initialized to 1 and constrained to take values in the range $]0, 10]$. The parameters of the O-D matrices are constrained to be non-negative and initialized with the values of the reference O-D matrices created during the data generation process (Section \ref{ssec:data-generation}). In \MTP, the parameters weighting the polynomial features are initialized to 1 and constrained to take non-negative values. This approach guarantees that the link performance functions are not monotonically decreasing\footnote{Further research can look at establishing milder constraints on the parameters' space to ensure that the link performance functions are monotonically non-decreasing.}. In addition, the weight matrix of the MLP layer is masked such that only the elements in the diagonal and those associated with the downstream and upstream flow of a link are equal to 1. In line with the assumption of the increasing monotonic relationship between link flow and travel time, we constrain the diagonal terms of the kernel matrix $\mW$ of the MLP layer to be positive. Under the assumption that an increase in the traffic flow in a link should not reduce the travel time in another link, we constrain the non-diagonal terms of $\mW$ to be non-negative. For consistency with the upper bounds defined for the BPR parameters, we constrain the parameters of $\mW$ to be no greater than 10. 

The fixed effects of the generation layer are initialized with the reference generation vectors. The utilities associated with the O-D pairs are all initialized to zero, which induces an even distribution of the generated trips over the set of destinations reachable from each origin location. To represent a setting where the correct signs but not the magnitudes of the feature-specific parameters of the utility function are known, the \TVODLULPE and \MTP models initialize all these parameters to -$1$. In the three models, the initial values for the link flow parameters are equated to the output of a single pass of logit-based stochastic traffic assignment in the transportation network. 

\subsection{In-sample performance}

This section studies if the \SUELOGIT, \TVODLULPE, \MTP models can find solutions that satisfy the network equilibrium conditions and reproduce the synthetic measurements of traffic flow and travel time of the training set. In line with the formulation of \MTP, the computational graph for the \TVODLULPE formulated by \citet{guarda_estimating_2024} is modified such that the estimated travel time and traffic flow are obtained from the layers located on the right-hand side of Figure \ref{fig:computational-graph}, Appendix \ref{appendix:sec:illustrative-example}. We also analyze the estimated parameters associated with the trip generation, the O-D matrix, the utility function, and the link performance functions obtained with the \TVODLULPE and \MTP models. We refer to these analyses as \textit{in-sample} because estimations are made in links that report traffic flow and travel time observations in the training set. To study the convergence of the learning algorithm, we compute the change of the relative value of the loss components over epochs, namely, the percentage difference in the value of a loss component at a given epoch with respect to the initial epoch. We compute the Mean Absolute Percentage Error (MAPE) to analyze model estimation performance. 

\subsubsection{Logit-based stochastic user equilibrium (\SUELOGIT)}

Figure \ref{fig:siouxfalls-convergence-suelogit} shows the changes in the relative gap and the MSE over epochs when training the \SUELOGIT model. A relative gap of $4.9\cdot10^{-4}$ is achieved in 18 epochs with a runtime of 63.4 seconds. Figure \ref{fig:siouxfalls-flow-traveltime-suelogit} shows a scatter plot comparing the observed and estimated values of travel time and traffic flow. Although the set of observed travel times and traffic flows are not used for model training, the estimated values obtained at equilibrium closely match the observed measurements. This result is due to (i) the synthetic measurements of traffic flow and travel times are generated to satisfy the \SUELOGIT equilibrium condition (Section \ref{ssec:data-generation}) and (ii) the strict convexity of the \SUELOGIT that holds under our experimental setting and guarantees the existence of a unique global optimum in link flow space \citep{guarda_estimating_2024}.

\begin{figure}[H]
\centering
\begin{subfigure}[t]{0.3\columnwidth}
	\centering
	\includegraphics[width=\textwidth, trim= {0cm 0cm 0cm 0cm},clip]{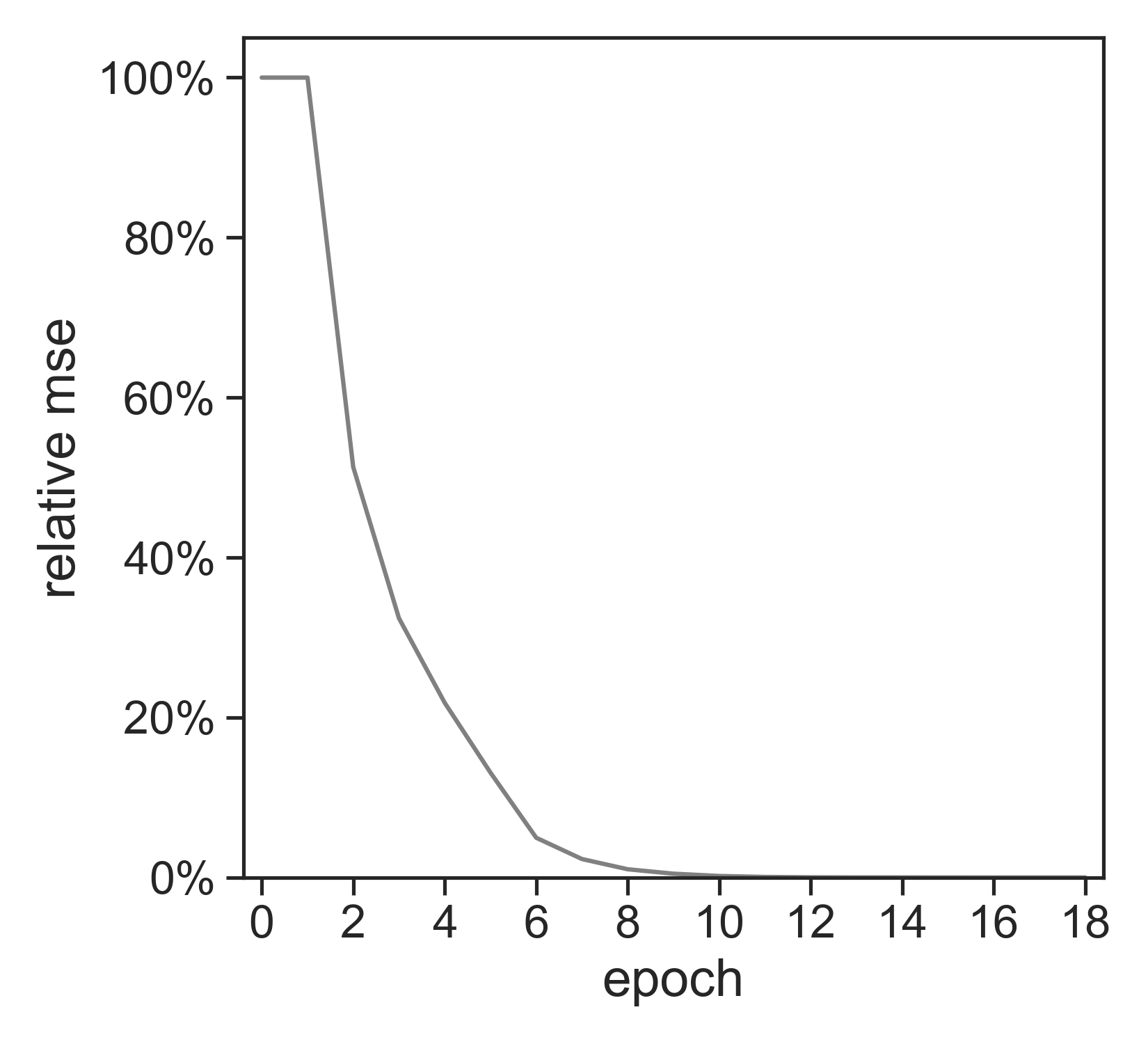}
	
    \caption{Change in relative MSE}
	\label{subfig:siouxfalls-convergence-suelogit}
\end{subfigure}
\begin{subfigure}[t]{0.3\columnwidth}
    \centering
	\includegraphics[width=\textwidth]{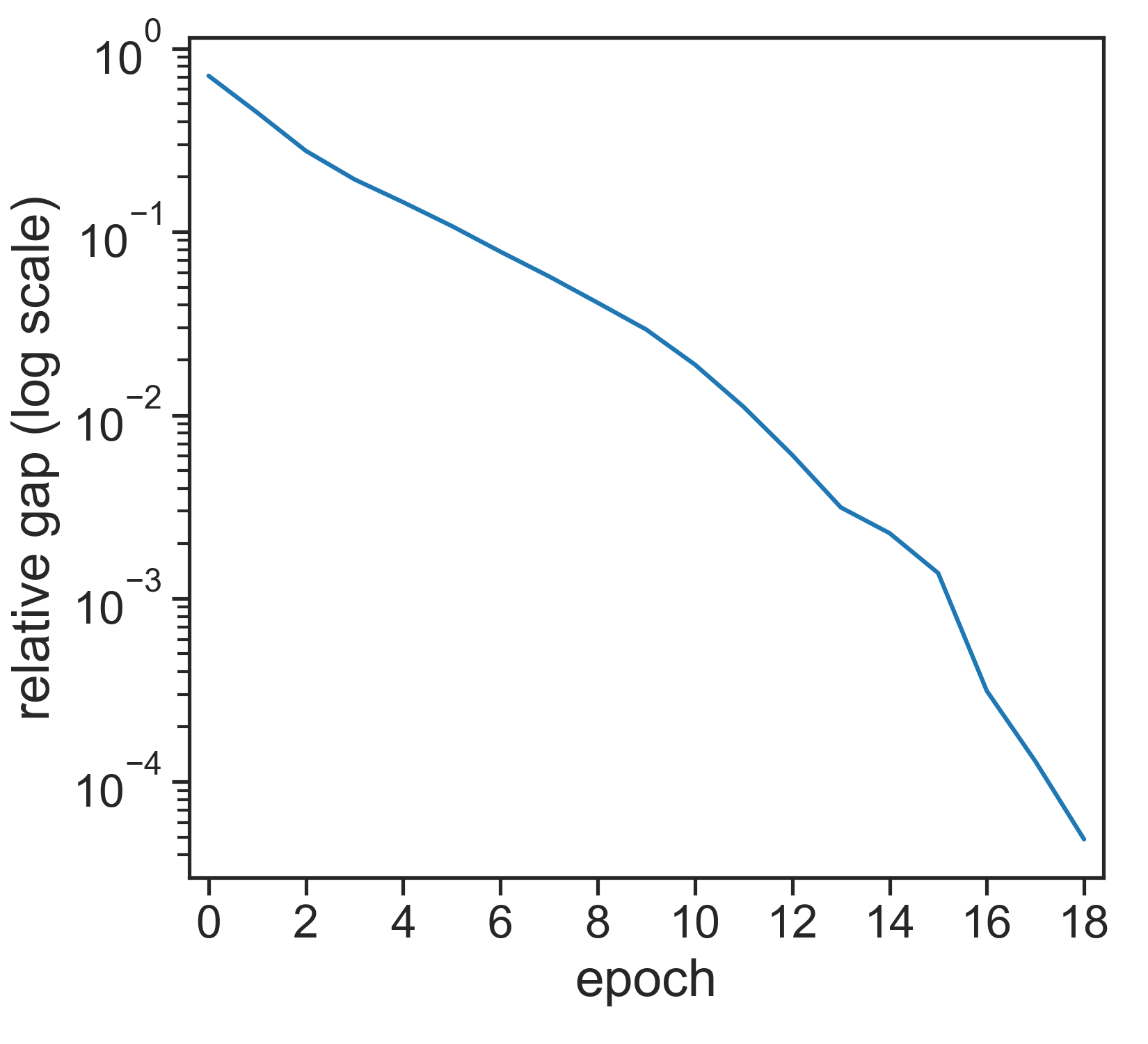}
	\caption{Change in relative gap}
	\label{subfig:siouxfalls-relative-gap-suelogit}
\end{subfigure}

\caption{Convergence of logit-based stochastic user equilibrium (\SUELOGIT) model using synthetic data from the Sioux Falls network}
\label{fig:siouxfalls-convergence-suelogit}

\end{figure}

\subsubsection{Benchmark model (\TVODLULPE)}
\label{sec:siouxfalls-tvodlulpe}

Figure \ref{fig:siouxfalls-convergence-tvodlulpe} shows the change in relative MSE, MAPE, and relative gap over epochs while training the \TVODLULPE model. A relative gap of $8.8\cdot10^{-3}$ is achieved in the final epoch (Figure \ref{subfig:siouxfalls-relative-gap-tvodlulpe}) and with a total runtime of 331 seconds. The relative MSE decreases steadily over epochs (Figure \ref{subfig:siouxfalls-relative-mse-tvodlulpe}) and the MAPEs of traffic flow and travel time at the final epoch are equal to 4.8\% and 7.1\%, respectively (Figure \ref{subfig:siouxfalls-mape-tvodlulpe}). The bottom plots in Figure \ref{fig:siouxfalls-flow-traveltime-tvodlulpe}, Appendix \ref{appendix:ssec:siouxfalls-model-training} suggest that estimations are accurate in the two hourly periods.

\begin{figure}[H]
\centering
\begin{subfigure}[t]{0.3\columnwidth}
    \centering
	\includegraphics[width=\textwidth]{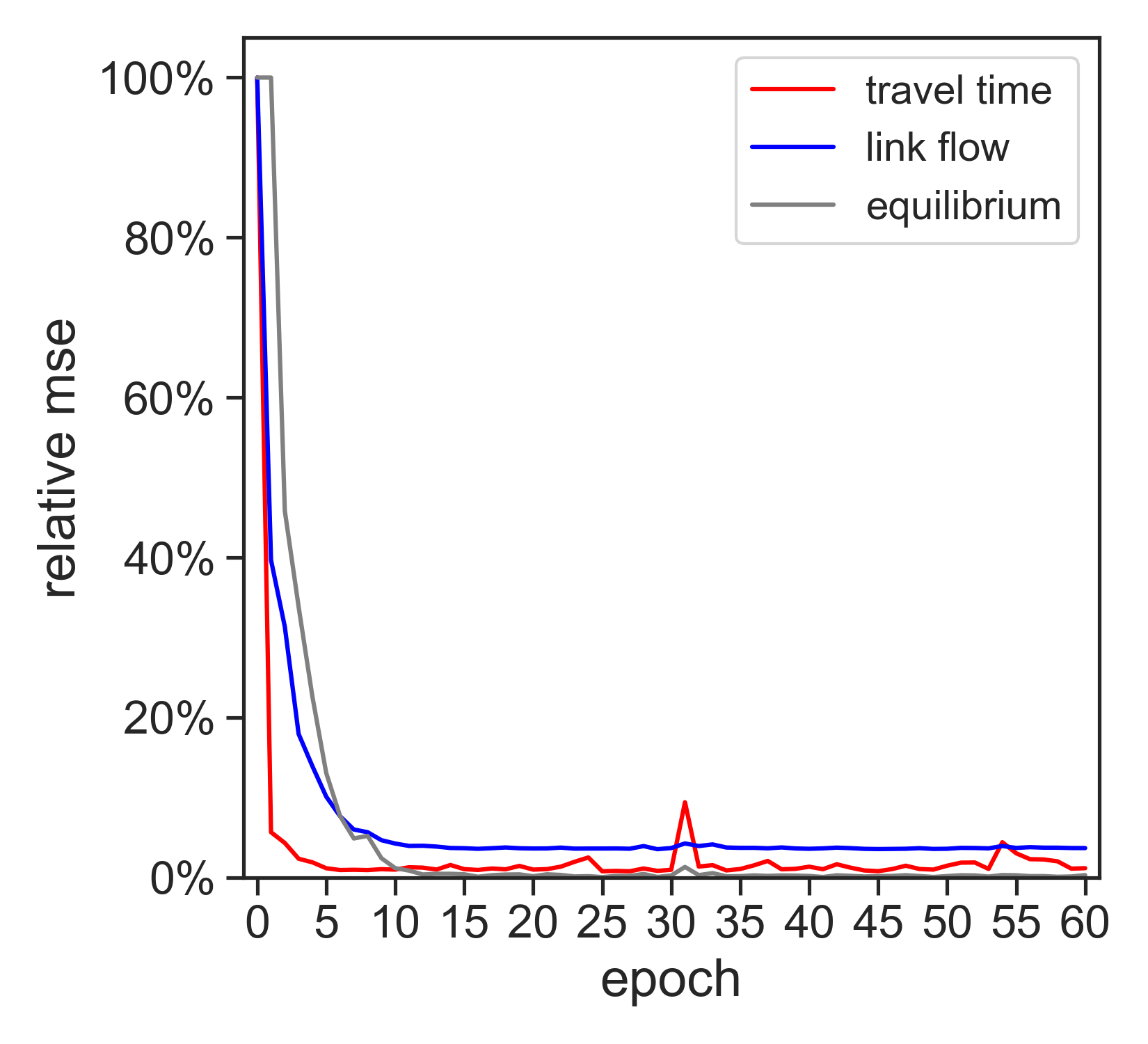}
    \caption{Relative MSE}
	\label{subfig:siouxfalls-relative-mse-tvodlulpe}
\end{subfigure}
\begin{subfigure}[t]{0.3\columnwidth}
	\centering
    \includegraphics[width=\textwidth]{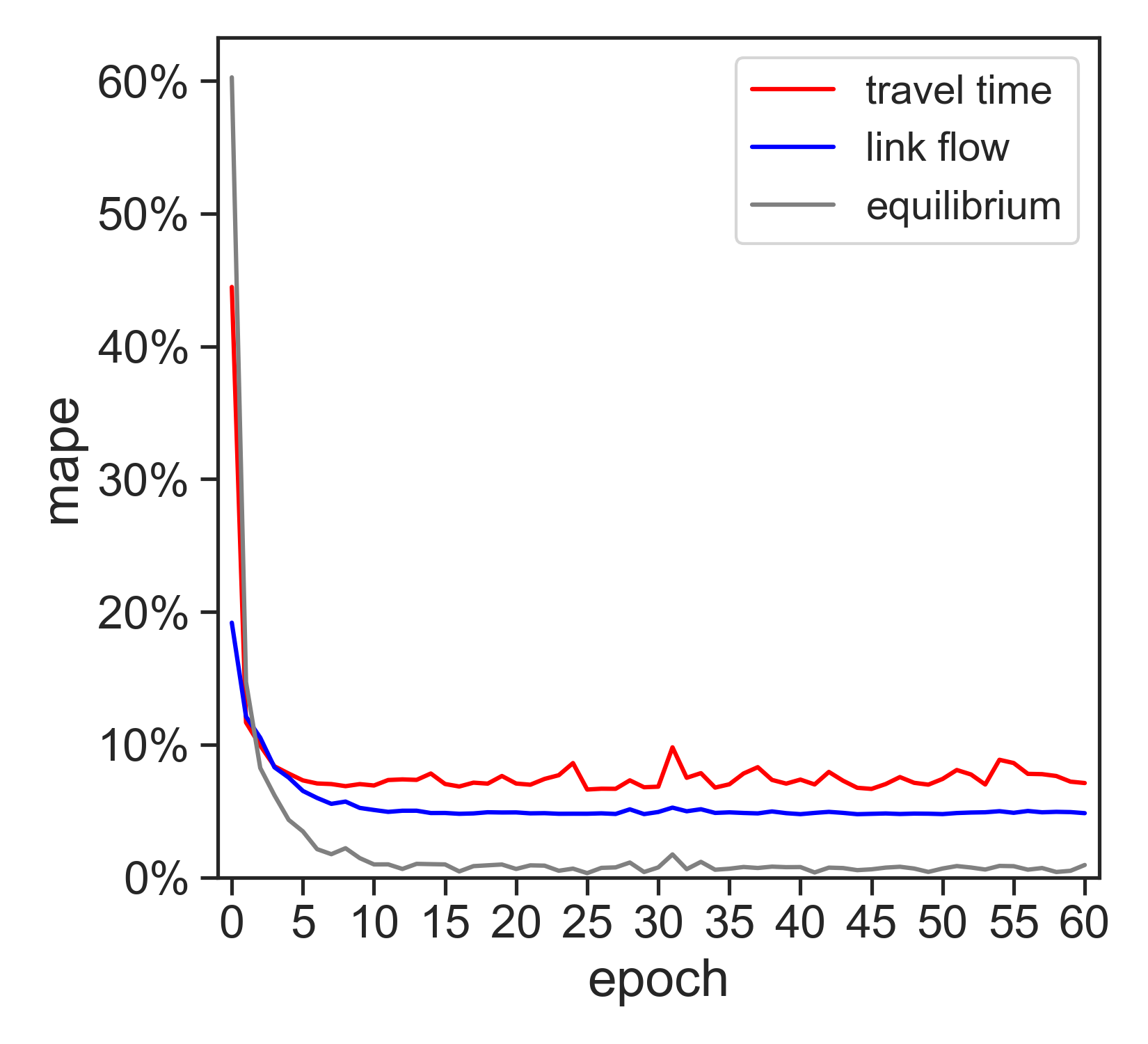}
    \caption{MAPE}
	\label{subfig:siouxfalls-mape-tvodlulpe}
\end{subfigure}
\begin{subfigure}[t]{0.3\columnwidth}
	\centering
    \includegraphics[width=\textwidth]{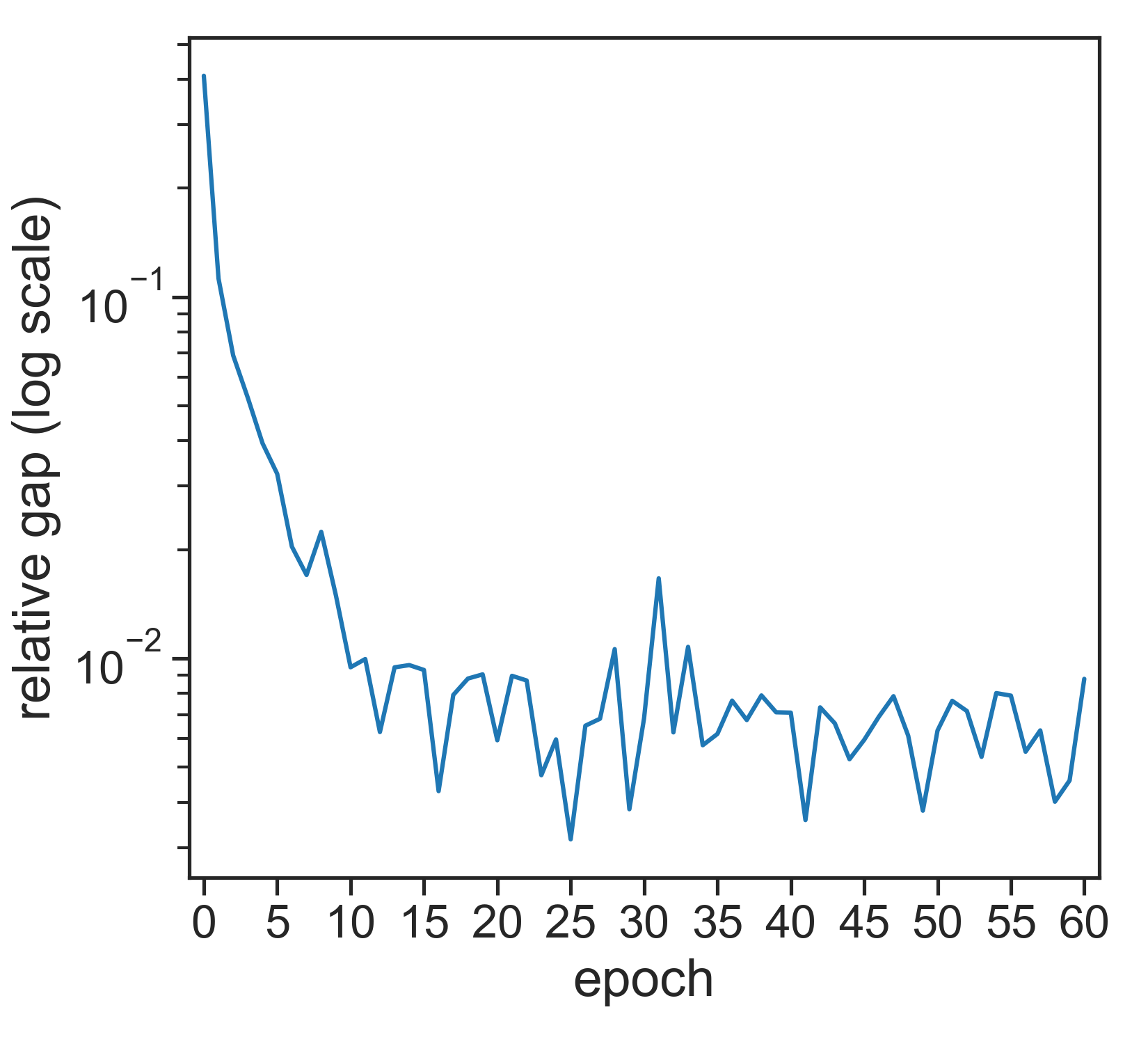}
    \caption{Relative gap}
	\label{subfig:siouxfalls-relative-gap-tvodlulpe}
\end{subfigure}
\caption{Convergence of \TVODLULPE model using synthetic data from Sioux Falls network}
\label{fig:siouxfalls-convergence-tvodlulpe}

\end{figure}

\subsubsection{Our model (\MTP)}
\label{sec:siouxfalls-mate}
 
Figure \ref{fig:siouxfalls-convergence-mate} shows the change in relative MSE, MAPE, and relative gap over epochs while training \MTP. The MAPE is reported from the second epoch to properly assess the reduction of MAPE over epochs (Figure \ref{subfig:siouxfalls-mape-mate}). \MTP uses a more flexible class of link performance functions and estimates a trip generation model that should make finding a solution more challenging than \TVODLULPE. Regardless, the final MAPEs of traffic flow and travel time are 5.6\% and 8.3\%, respectively, and slightly higher than the \TVODLULPE model. As shown in Figure \ref{subfig:siouxfalls-relative-gap-mate}, the relative gap is 0.018 in the final epoch. The relative MSE decreases steadily over epochs, and the large drop observed between epochs 0 and 1 is consistent with the corresponding drop in MAPE. The scatter plots at the bottom of Figure \ref{fig:siouxfalls-flow-traveltime-mate}, Appendix \ref{appendix:ssec:siouxfalls-model-training} suggest that estimations are accurate in the two hourly periods. 

\begin{figure}[H]
\centering
\begin{subfigure}[t]{0.3\columnwidth}
    \centering
	\includegraphics[width=\textwidth]{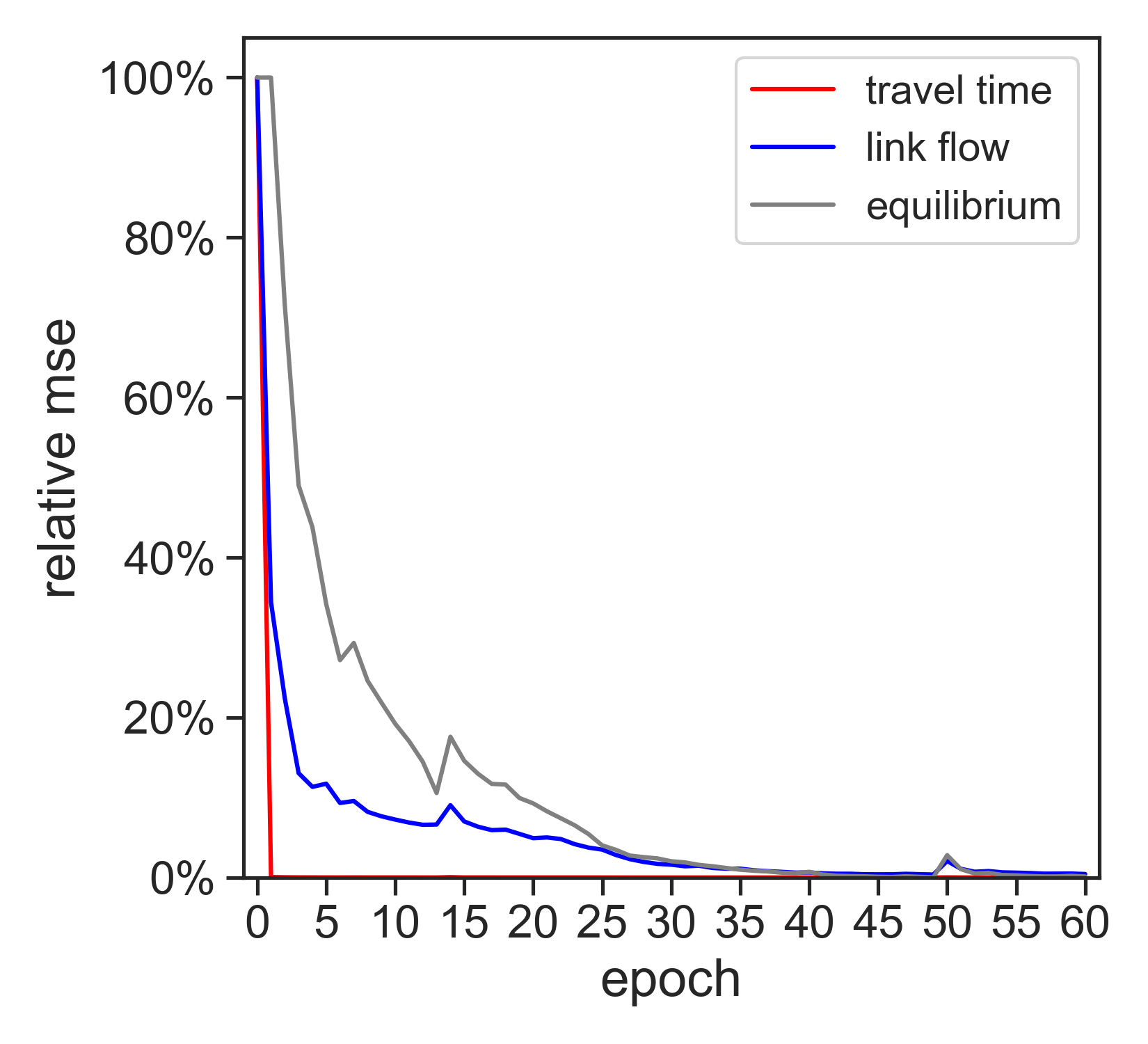}
    \caption{Relative MSE}
	\label{subfig:siouxfalls-relative-mse-mate}
\end{subfigure}
\begin{subfigure}[t]{0.3\columnwidth}
	\centering
    \includegraphics[width=\textwidth]{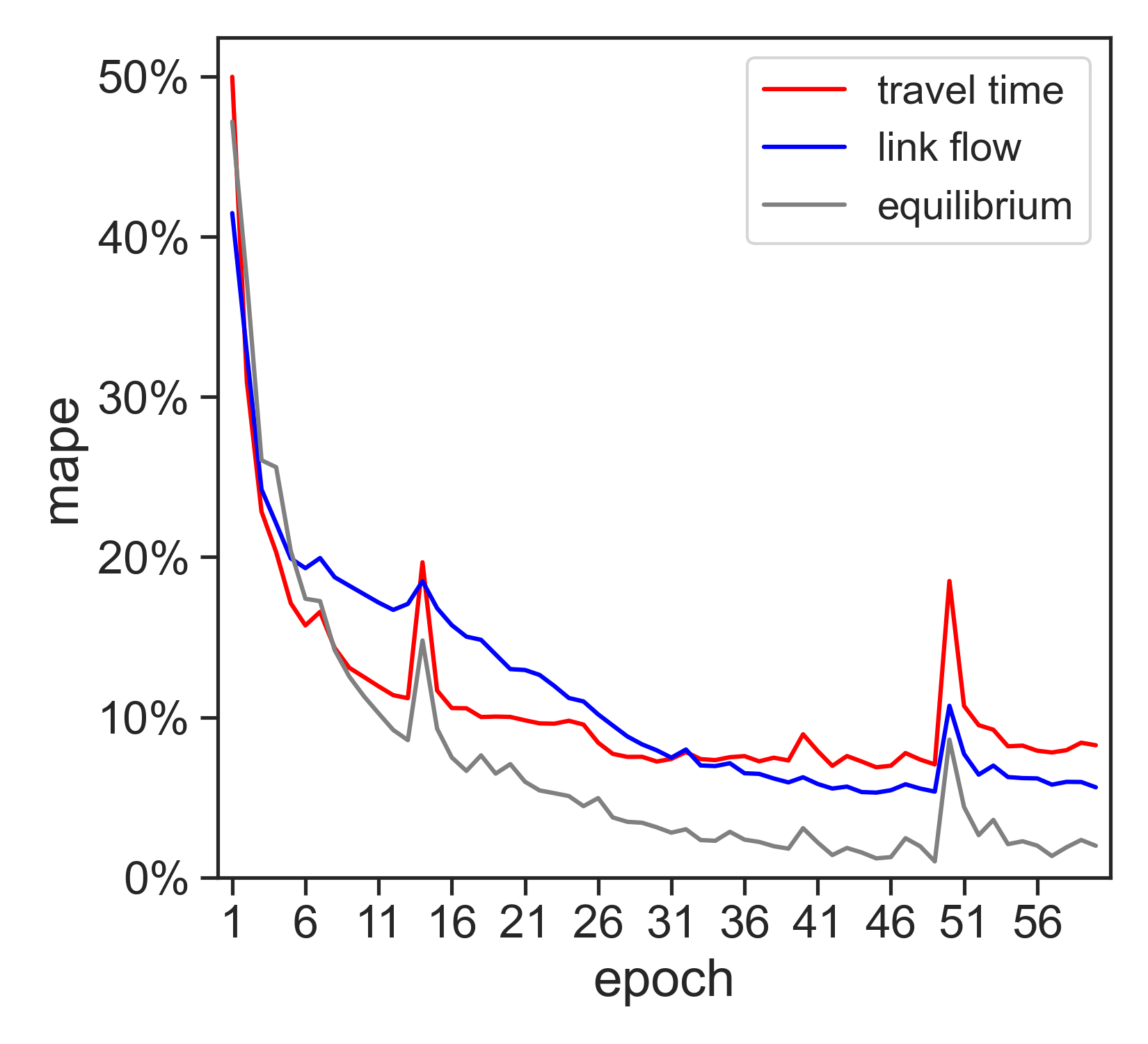}
    \caption{MAPE}
	\label{subfig:siouxfalls-mape-mate}
\end{subfigure}
\begin{subfigure}[t]{0.3\columnwidth}
	\centering
    \includegraphics[width=\textwidth]{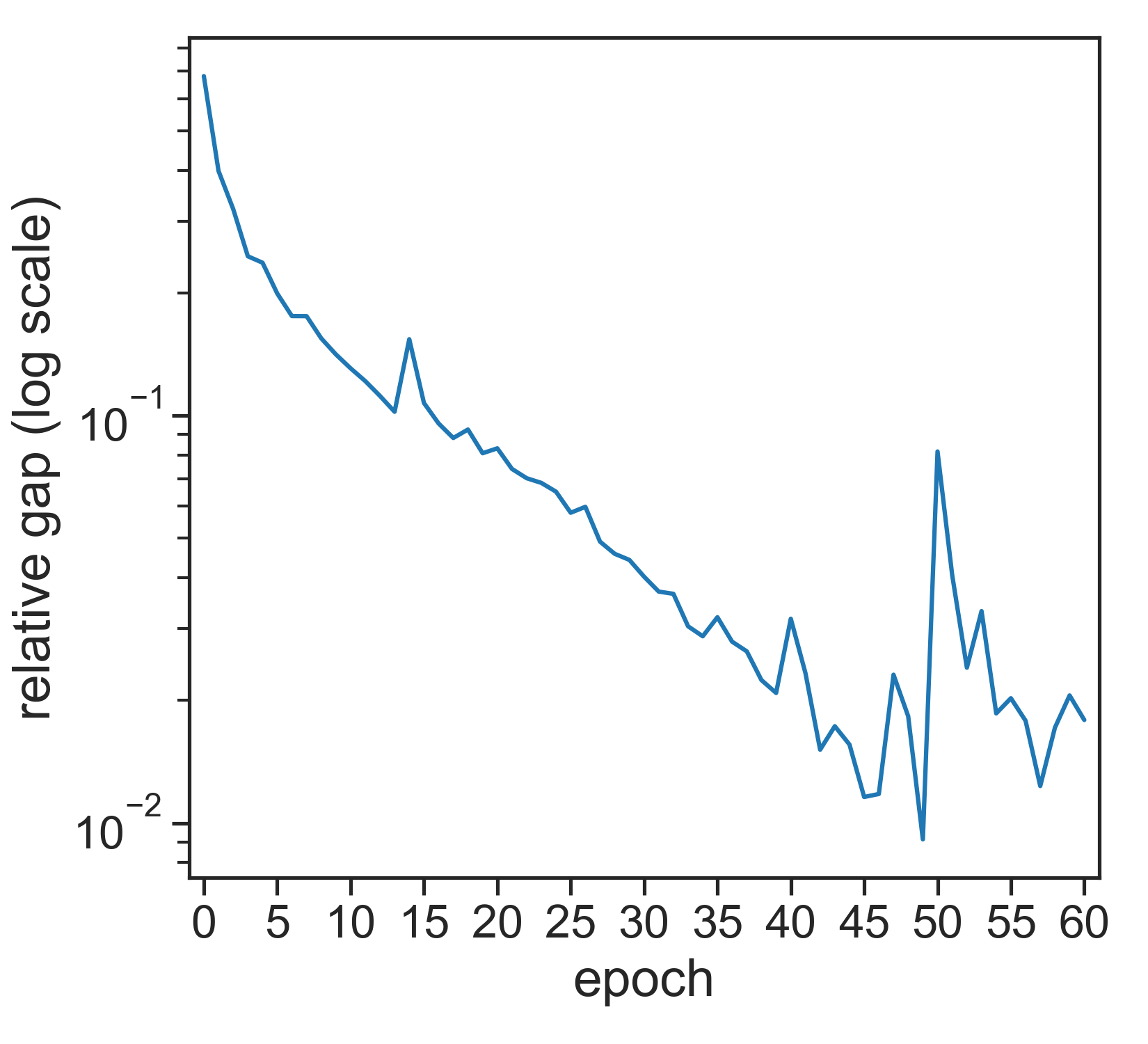}
    \caption{Relative gap}
	\label{subfig:siouxfalls-relative-gap-mate}
\end{subfigure}
\caption{Convergence of \MTP model using synthetic data from Sioux Falls network}
\label{fig:siouxfalls-convergence-mate}

\end{figure}

\subsubsection{Utility function}
\label{sssec:siouxfalls-utility-function}

Due to the high non-convexity of the learning problem, the initialization of parameters of the utility function (Section \ref{ssec:parameters-initialization-experiments}) is expected to have a high impact on the recovery of the ground truth parameters. However, for the chosen parameter initialization, we find no significant differences in the magnitudes of the utility function parameters estimated with the \TVODLULPE and \MTP models (Figure \ref{fig:siouxfalls-utility-function-estimation}). While the estimate values are not equal to their ground truth values, they are negative, and the reliability ratios are higher than 1 in both hourly periods. 

\begin{figure}[!htbp]
\centering

\begin{subfigure}[t]{0.3\columnwidth}
    \centering
 \includegraphics[width=\textwidth]{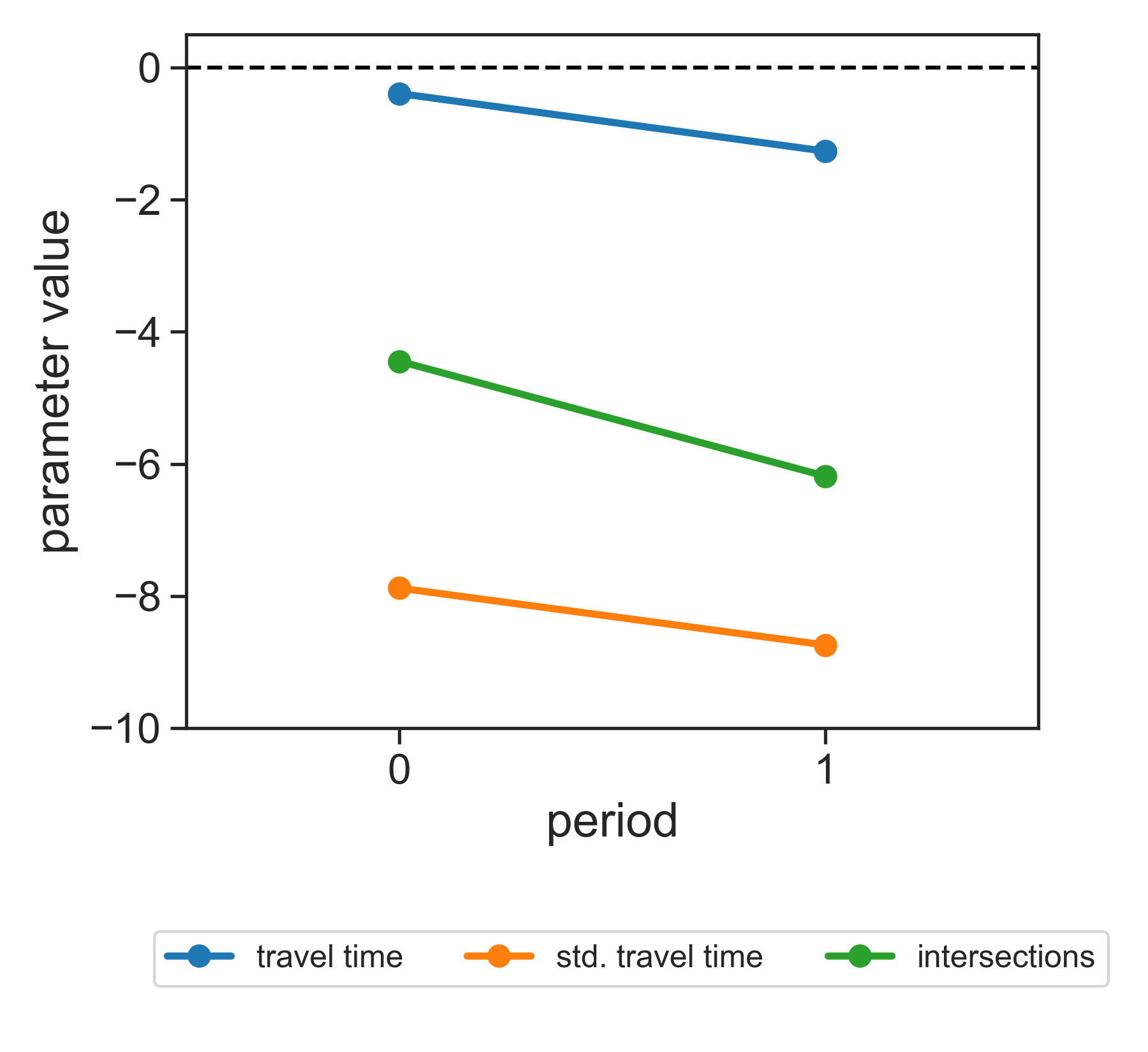}
	\caption{\TVODLULPE}
	\label{subfig:siouxfalls-utility-periods-tvodlulpe}
\end{subfigure}
\begin{subfigure}[t]{0.3\columnwidth}
	\centering
	 \includegraphics[width=\textwidth]{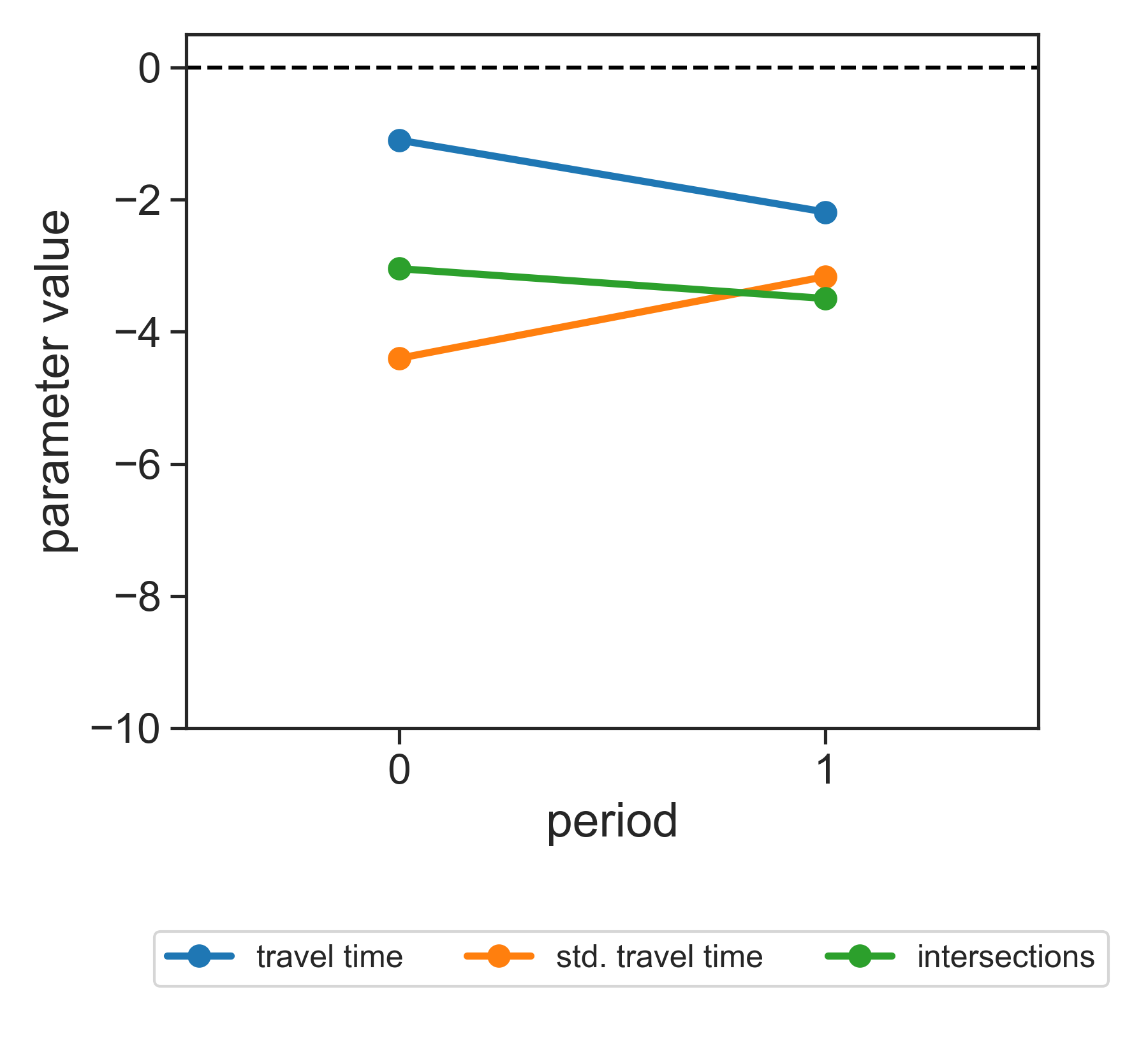}
	\caption{\MTP}
	\label{subfig:siouxfalls-utility-periods-tvhodlulpe}
\end{subfigure}

\caption{Estimates of utility function parameters by period of the day and model using synthetic data from the Sioux Falls network}
\label{fig:siouxfalls-utility-function-estimation}
\end{figure}

\subsubsection{Origin-destination matrices}
\label{sssec:siouxfalls-ode}

The \TVODLULPE model directly learns the O-D matrix parameters. In contrast, \MTP indirectly learns the O-D matrix by estimating the generated trips per location and then distributing these trips over the set of reachable destinations from each location through a logit-based destination choice model. Figure \ref{fig:siouxfalls-comparison-heatmaps-ode} shows heatmaps with the O-D matrices estimated by the \TVODLULPE and \MTP models for each hourly period. Since the O-D parameters of the \TVODLULPE model are initialized with the reference O-D matrix, the estimated and true O-D matrices become similar. This finding is also supported by the difference of the R$^2$ for the relationship between the true and estimated number of trips of both models (Figure \ref{subfig:siouxfalls-scatter-ode}). Despite that the estimation of \MTP cannot incorporate any priors on the O-D matrix, \MTP finds an O-D matrix that induces a solution at equilibrium (Figure \ref{subfig:siouxfalls-relative-gap-mate}) and that reproduces the traffic flow and travel time in training set with high accuracy (Figure \ref{fig:siouxfalls-flow-traveltime-mate}). To drive the O-D solution toward the reference O-D in \MTP, it suffices to incorporate the difference between the estimated and the reference O-Ds in the loss function. 
historic OD matrix.  

\begin{figure}[!htbp]
\centering
\begin{subfigure}[t]{0.65\columnwidth}
    \centering
    \includegraphics[width=\textwidth]{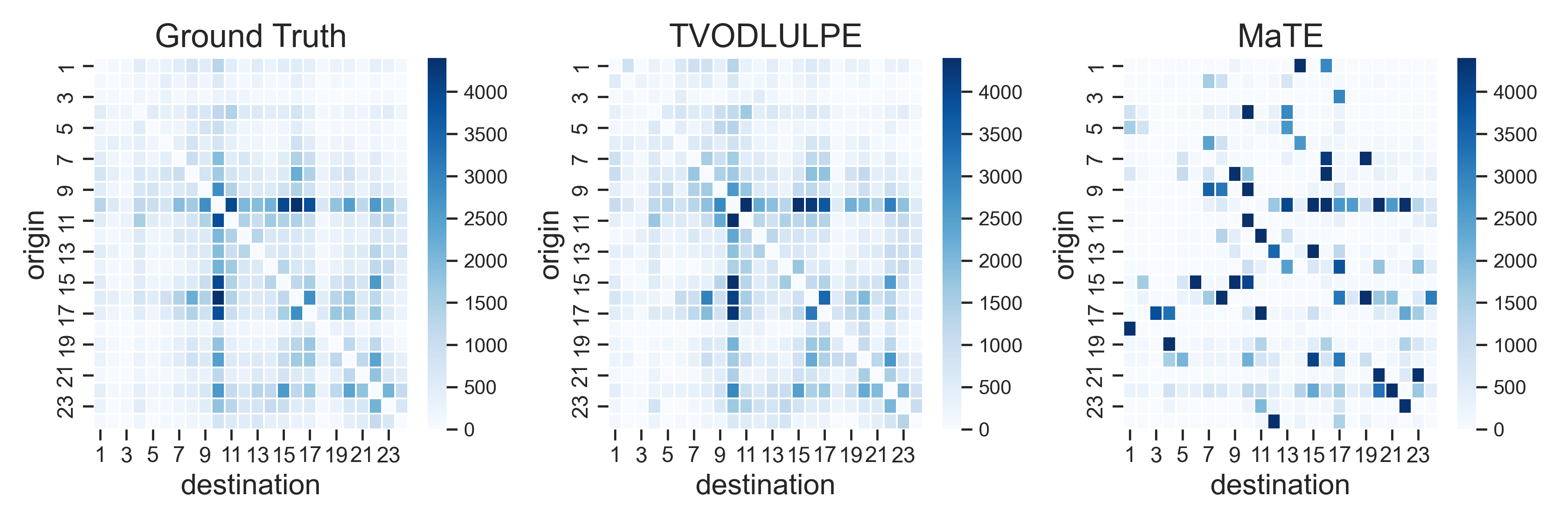}
    \caption{Period 0}
	\label{subfig:siouxfalls-comparison-heatmaps-ode-period-0}
\end{subfigure}
\begin{subfigure}[t]{0.65\columnwidth}
	\centering
    \includegraphics[width=\textwidth]{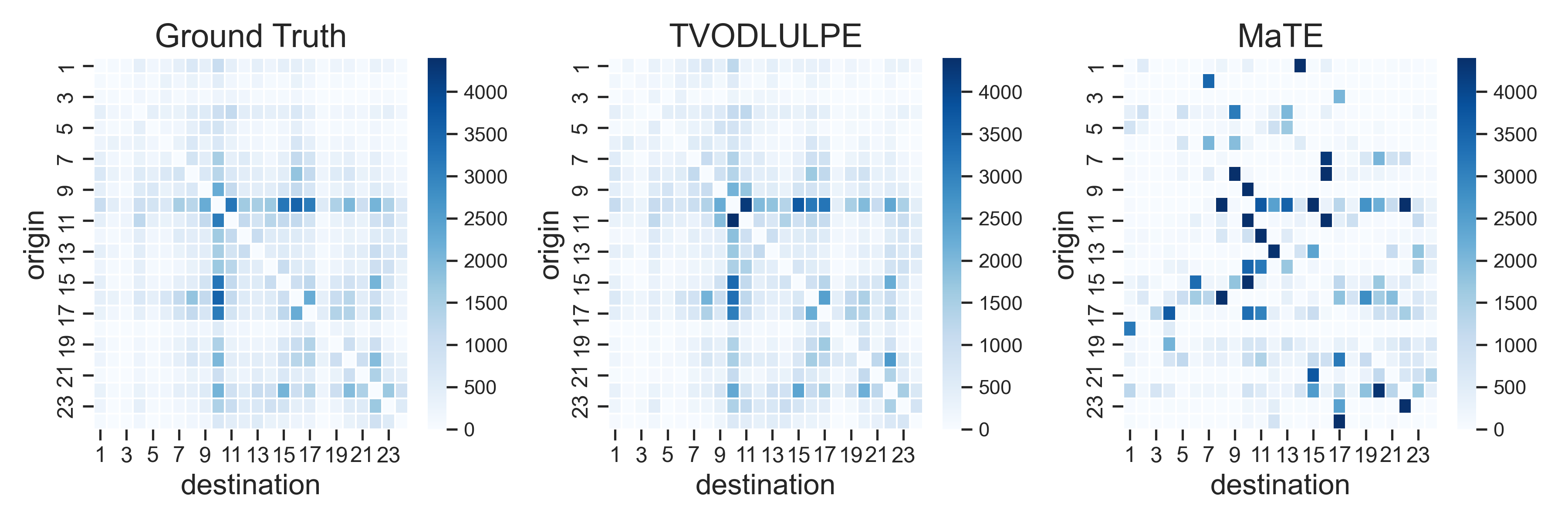}
    \caption{Period 1}
	\label{subfig:siouxfalls-comparison-heatmaps-ode-period-1}
\end{subfigure}
\caption{Comparison of ground truth and estimated O-D matrices by period using synthetic data from the Sioux Falls network}
\label{fig:siouxfalls-comparison-heatmaps-ode}
\end{figure}

\subsubsection{Performance functions}
\label{sssec:siouxfalls-performance-functions}

Figure \ref{fig:siouxfalls-comparison-link-performance-functions} shows the performance functions learned with the \TVODLULPE and \MTP models. On the left side, it shows ground truth values of the performance functions used to generate the synthetic data in the experiments. \MTP learns a polynomial of order 3 with a function form equal to $0.0002x_a + 0.0012x_a^2 + 0.0448x_a^3$, where $x_a$ is the value of the traffic flow in link $a \in \sA$. The average values of the parameters $\alpha$ and $\beta$ of the BPR function learned by the \TVODLULPE model are $0.44$ and $5.1$. Results suggest that the \TVODLULPE and \MTP models tend to underestimate the maximum value of the range of the performance functions. However, both models can reproduce the exponential shape of the ground truth performance functions. Notably, the \MTP and \TVODLULPE models achieve similar performance in reproducing travel times (Figures \ref{fig:siouxfalls-flow-traveltime-tvodlulpe} and \ref{fig:siouxfalls-flow-traveltime-mate}, Appendix \ref{appendix:ssec:siouxfalls-model-training}). 

\begin{figure}[H]
	\centering
    \begin{subfigure}[t]{0.3\columnwidth}
    \centering
    \includegraphics[width=\textwidth, trim= {11.5cm 0cm 0cm 1.1cm}, clip]{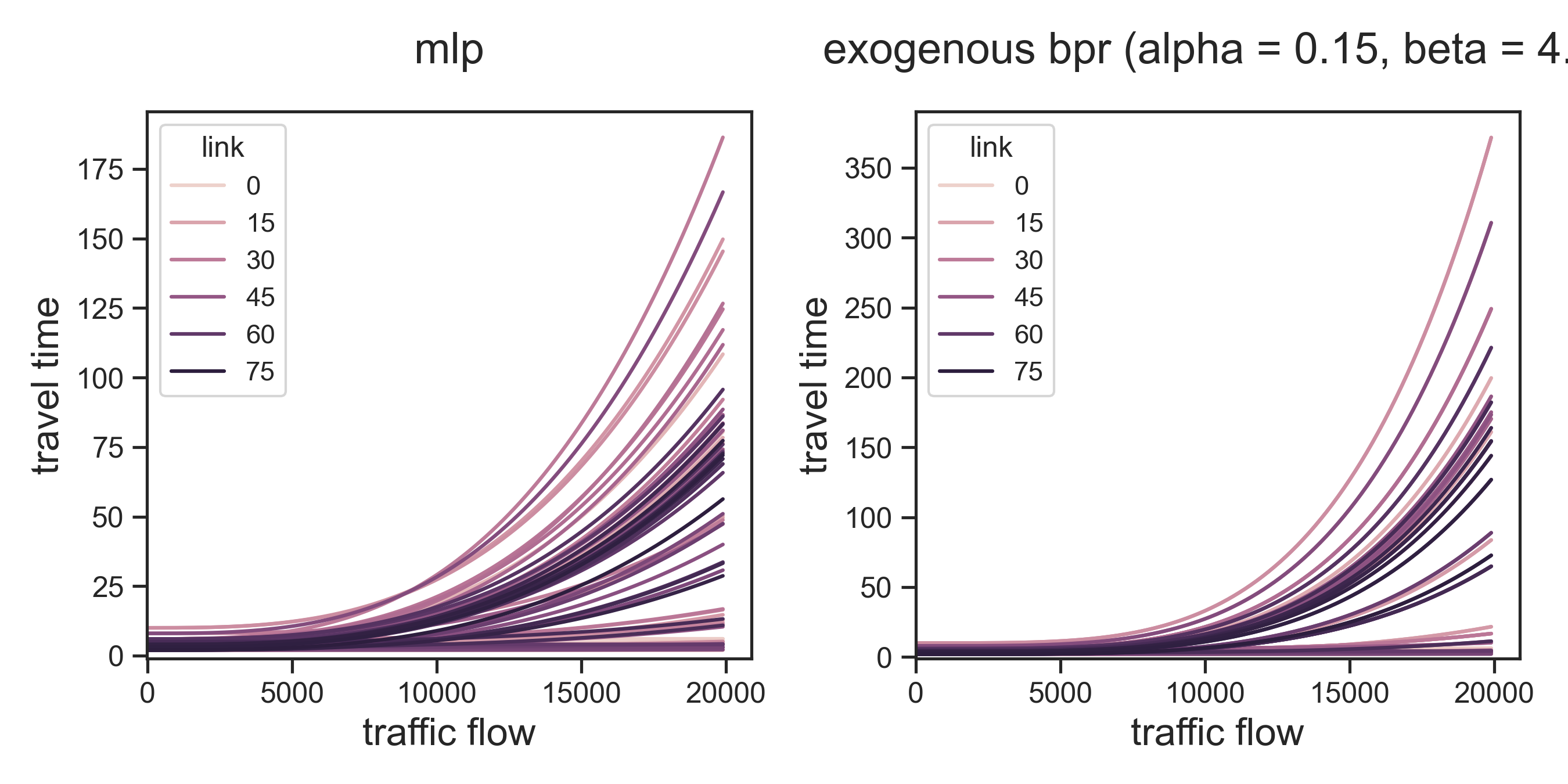}
    \caption{Ground truth}
	\label{subfig:siouxfalls-performance-functions-groundtruth}
\end{subfigure}
\begin{subfigure}[t]{0.3\columnwidth}
	\centering
 \includegraphics[width=\textwidth, trim= {0cm 0cm 11.5cm 1.1cm}, clip]{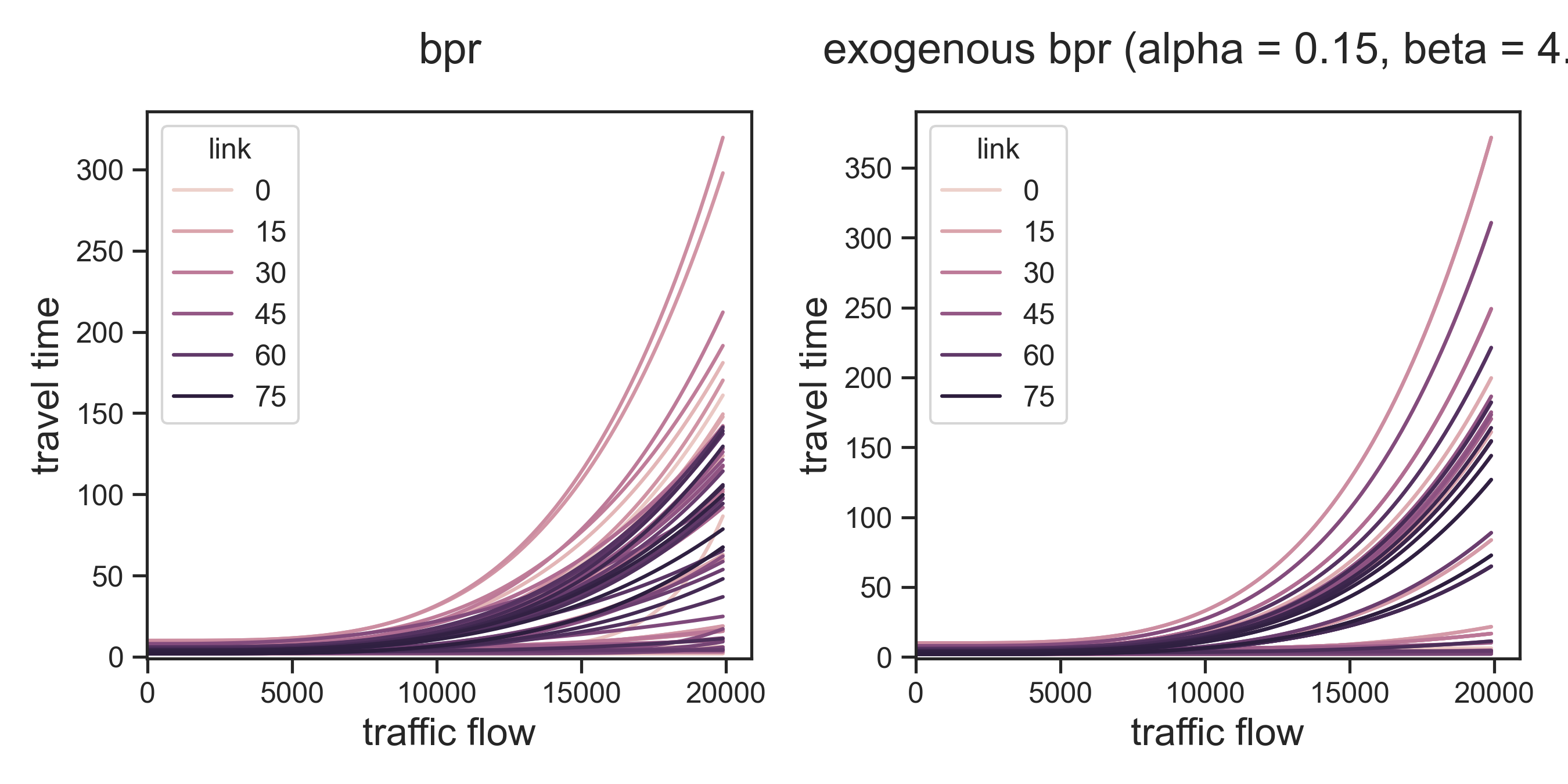}
    \caption{\TVODLULPE}
	\label{subfig:siouxfalls-performance-functions-tvodlulpe}
    \end{subfigure}
\begin{subfigure}[t]{0.3\columnwidth}
	\centering
	\includegraphics[width=\textwidth, trim= {0cm 0cm 11.5cm 1.1cm}, clip]{figures/experiments/siouxfalls-comparison-all-link-performance-functions-mate.png}
    \caption{\MTP}
	\label{subfig:siouxfalls-performance-functions-mate}
\end{subfigure}
    \caption{Comparison of ground truth and learned performance functions using synthetic data from the Sioux Falls network}
	\label{fig:siouxfalls-comparison-link-performance-functions}
\end{figure}

Figure \ref{fig:siouxfalls-kernel-link-performance-functions-mate} shows on the right side the kernel matrix learned by \MTP, which, consistent with the experimental setting, is diagonal primarily. The average values of the diagonal and non-diagonal parameters of the kernel matrix $\mW$ learned by \MTP are $4.686$ and $0.005$, respectively. 

\begin{figure}[H]
	\centering
	\includegraphics[width=0.7\textwidth]{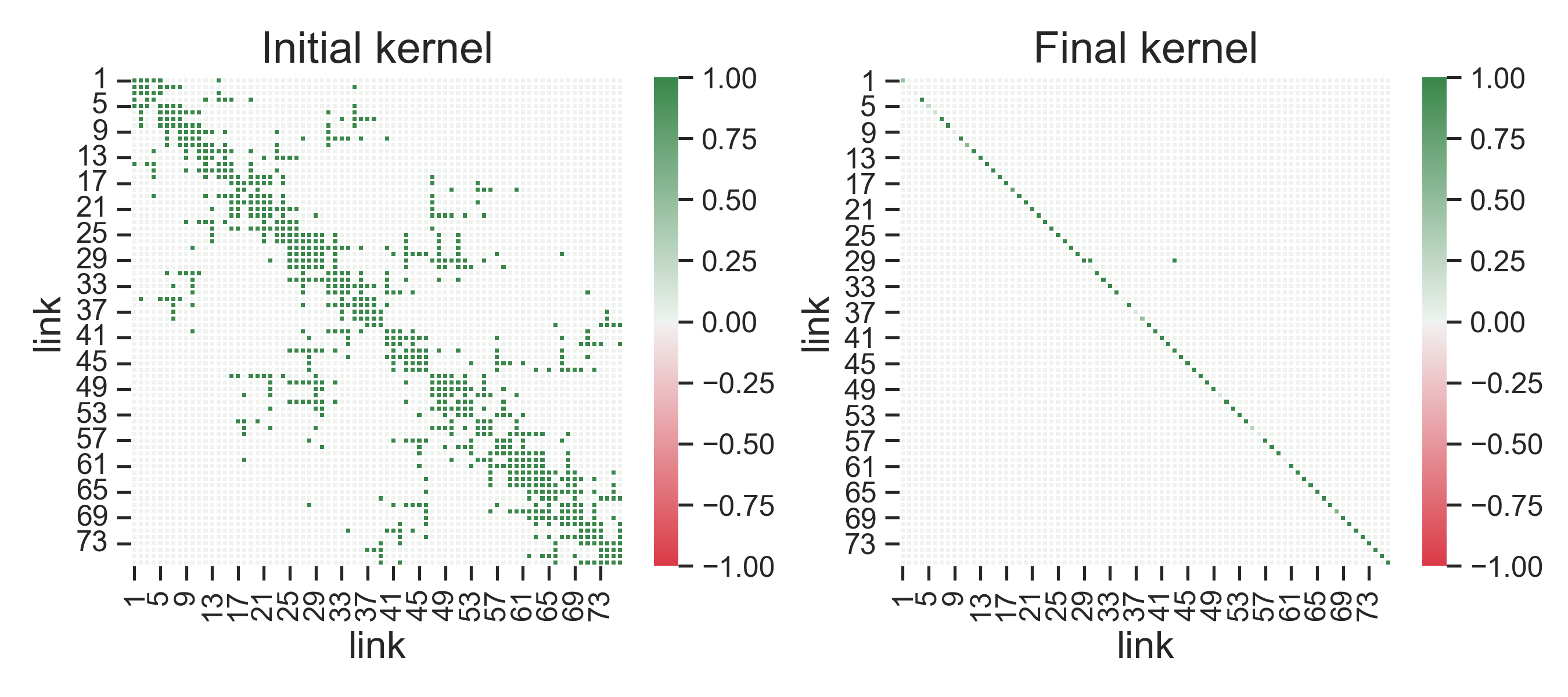}
	\caption{Kernel of traffic flow interactions learned by \MTP and using synthetic data from the Sioux Falls network}
	\label{fig:siouxfalls-kernel-link-performance-functions-mate}
\end{figure}

\subsection{Out-of-sample performance}
\label{siouxfalls-outofsample-performance}

Suppose there are measurements of traffic flow and travel times available in a subset of links in the network to train a model. The goal is to evaluate the \textit{out-of-sample performance} of this model, that is, the model's accuracy in estimating traffic flow and travel times in the remaining links in the network. To our knowledge, this analysis is rarely conducted in transportation network modeling studies, and we consider it to be key to assessing if a model does not overfit the training data. Typically, studies analyze the \textit{in-sample performance} of the model, that is, the estimation accuracy in links that report observations of traffic flow and travel time in the training set. The design of our validation strategy follows the standard cross-validation techniques used in machine learning studies. First, the links are randomly split into $K$ folds, among which $K-1$ are used to train the model, and one is used as a validation set. Then, the model is trained and validated with the $K$ combinations of folds. The estimation errors obtained in the training and validation folds are used to study the in-sample and out-of-sample performance of the model, respectively. We choose $K=5$ folds and the Mean Absolute Percentage Error (MAPE) as the metric to measure estimation performance in the validation folds. Furthermore, to analyze the impact of the different components of \MTP in estimation performance, we perform an ablation study with respect to the class of performance functions and the incorporation of the generation step. 

\subsubsection{Impact of choice of performance function}
Figure \ref{fig:siouxfalls-kfold-link-performance-function} shows the impact of the choice of performance function on the out-of-sample performance. The only difference between the model specifications is the choice of performance function used to map traffic flow into travel time. To isolate the effect of the class of performance function on estimation performance, we set the value of all the model parameters to their ground truth values, except for the link flow parameters. The top plots show the MAPE at the start and end of model training in both the training and the validation sets. The bottom plots compare the MAPE of the models against a benchmark model that uses the historical mean of travel time and traffic flow in the training set to make estimations in the validation set. Note that both models outperform the benchmark estimations by a wide margin in terms of in-sample and out-of-sample performance. However, using BPR functions with link-specific parameters significantly increases the MAPE for travel times in the out-of-sample links, suggesting overfitting. One way to mitigate this problem is not using link-specific parameters for the BPR function. However, the drawback is that this may increase the risk of underfitting when these parameters are not common across links. Overall, our results suggest that the use of \textit{neural performance functions} with (i) an MLP layer to capture traffic flow interactions and (ii) polynomial features to model the relationship between traffic flow and travel times improves performance in out-of-sample travel time estimation tasks. 

\begin{figure}[H]
\centering
\begin{subfigure}[t]{0.365\columnwidth}
    \centering
 \includegraphics[width=\textwidth, trim= {0cm 0cm 5.5cm 0cm}, clip]{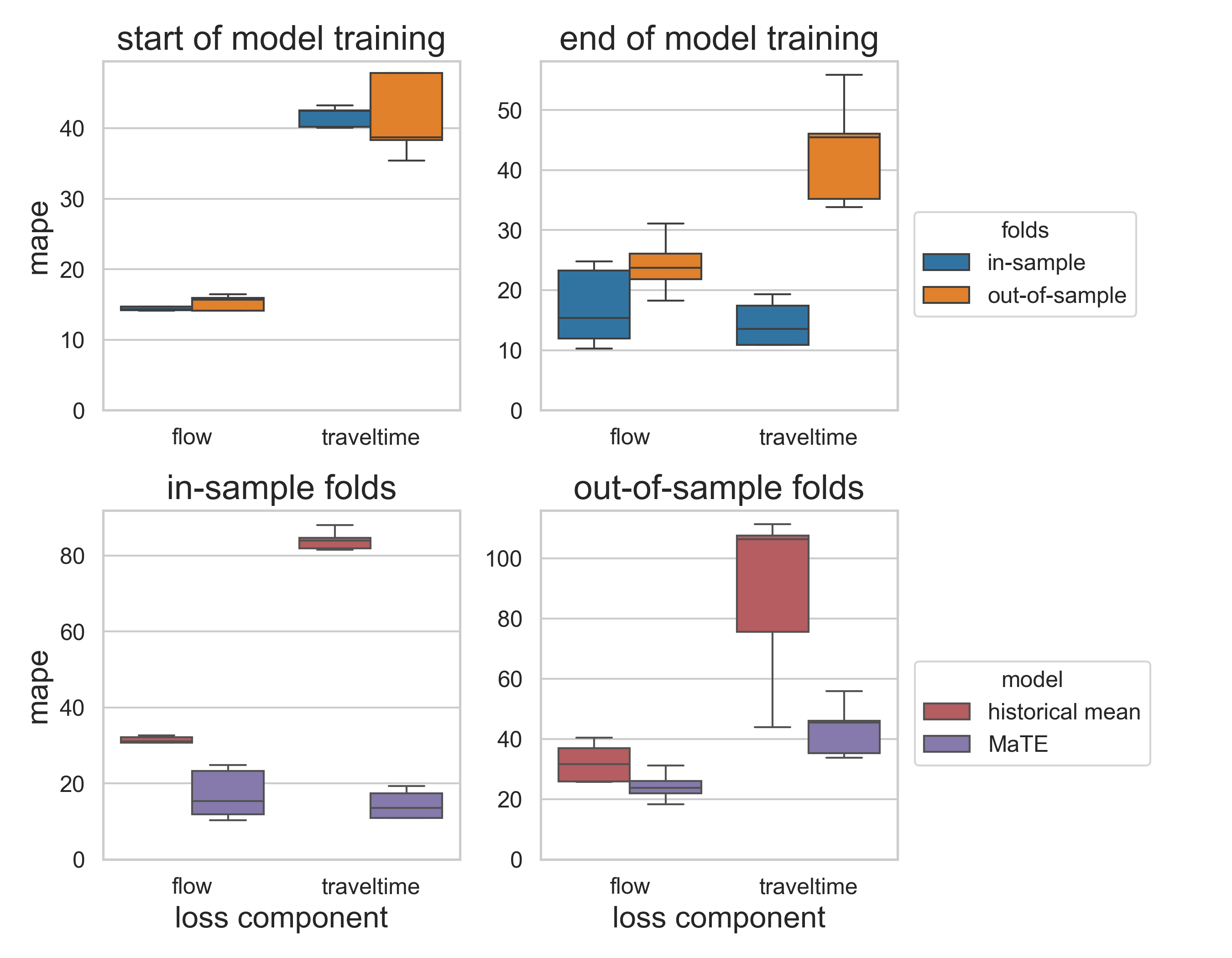}
	\caption{BPR with link-specific parameters}
	\label{subfig:siouxfalls-kfold-mape-tvlpe-bpr}
\end{subfigure}
\begin{subfigure}[t]{0.48\columnwidth}
	\centering
\includegraphics[width=\textwidth]{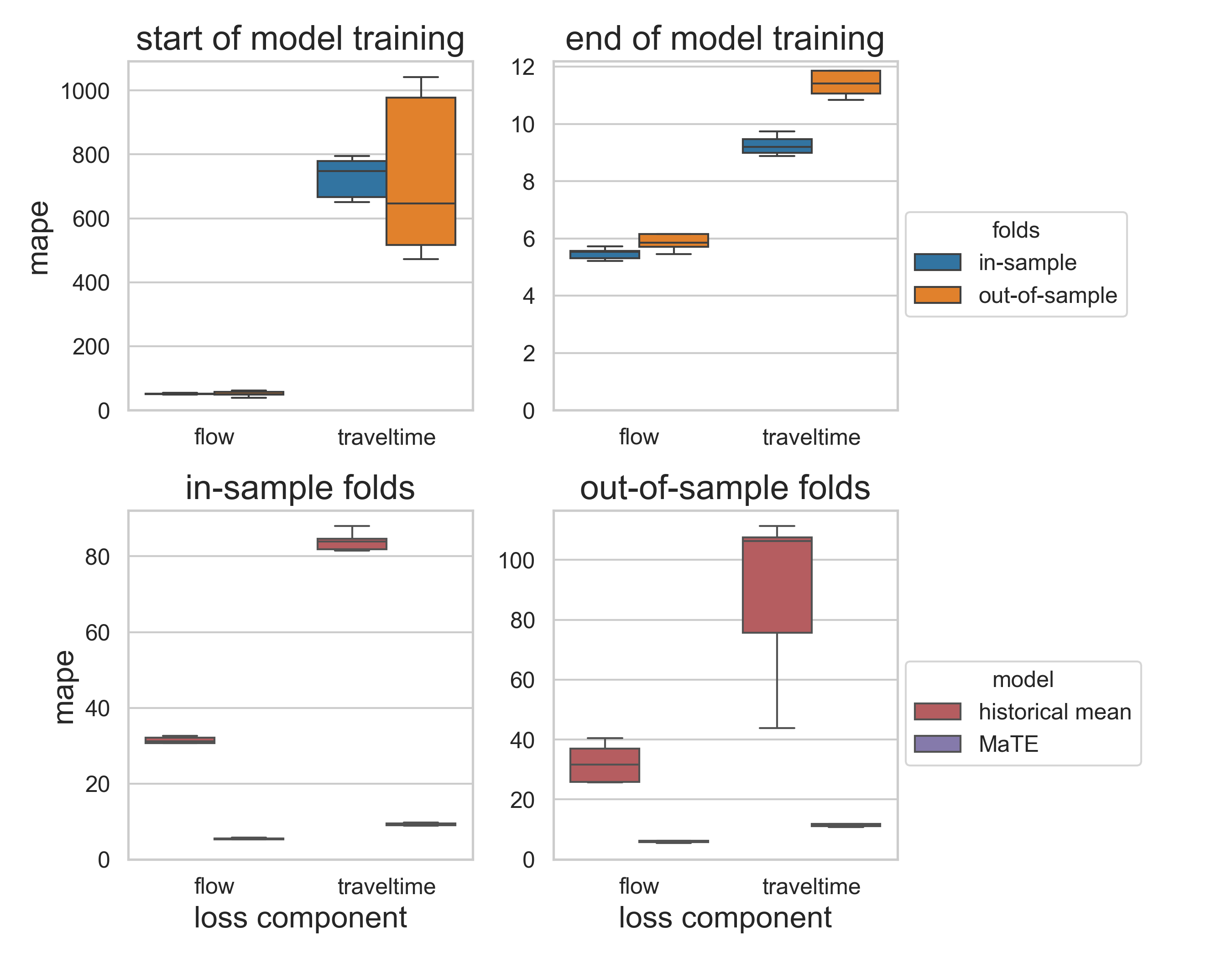}
	\caption{MLP + Polynomial of Degree 3}
	\label{subfig:siouxfalls-kfold-mape-tvlpe-mlp}
\end{subfigure}
\caption{Impact of choice of performance function in out-of-sample performance using synthetic data from the Sioux Falls network}
\label{fig:siouxfalls-kfold-link-performance-function}
\end{figure}

\subsubsection{Impact of incorporating the generation step}

Figure \ref{subfig:siouxfalls-kfold-mape-mate} shows the impact of incorporating the generation step on estimation performance. Results shown in Figures \ref{subfig:siouxfalls-kfold-mape-tvodlulpe-refod-mlp} and \ref{subfig:siouxfalls-kfold-mape-tvodlulpe-mlp} are obtained with the traditional \ODE method implemented in the \TVODLULPE model. To make a fair comparison between the results of the \MTP and \TVODLULPE models shown in Figures \ref{subfig:siouxfalls-kfold-mape-tvodlulpe-refod-mlp} and \ref{subfig:siouxfalls-kfold-mape-mate}, respectively, it is assumed that there is only access to a reference of the trip generation. In addition, to isolate the effect of the generation step in estimation performance, the two models use \textit{neural performance functions}. Compared to the \TVODLULPE model, results show that incorporating the generation step does not affect the median MAPE obtained in the out-of-sample links. Furthermore, access to a reliable O-D matrix reduces the estimation errors of traffic flow and travel time by around 10\% and 20\%, respectively. Therefore, without access to a reliable reference O-D matrix, the models have a higher risk of overfitting the set of traffic flow and travel time measurements in the in-sample links. While this does not significantly affect in-sample performance (Figure \ref{fig:siouxfalls-flow-traveltime-mate}, Appendix \ref{appendix:ssec:siouxfalls-model-training}), it negatively impacts estimation performance in out-of-sample links. 

\begin{figure}[H]
\centering
\begin{subfigure}[t]{0.295\columnwidth}
    \centering
 \includegraphics[width=\textwidth, trim= {0cm 0cm 5.5cm 0cm}, clip]{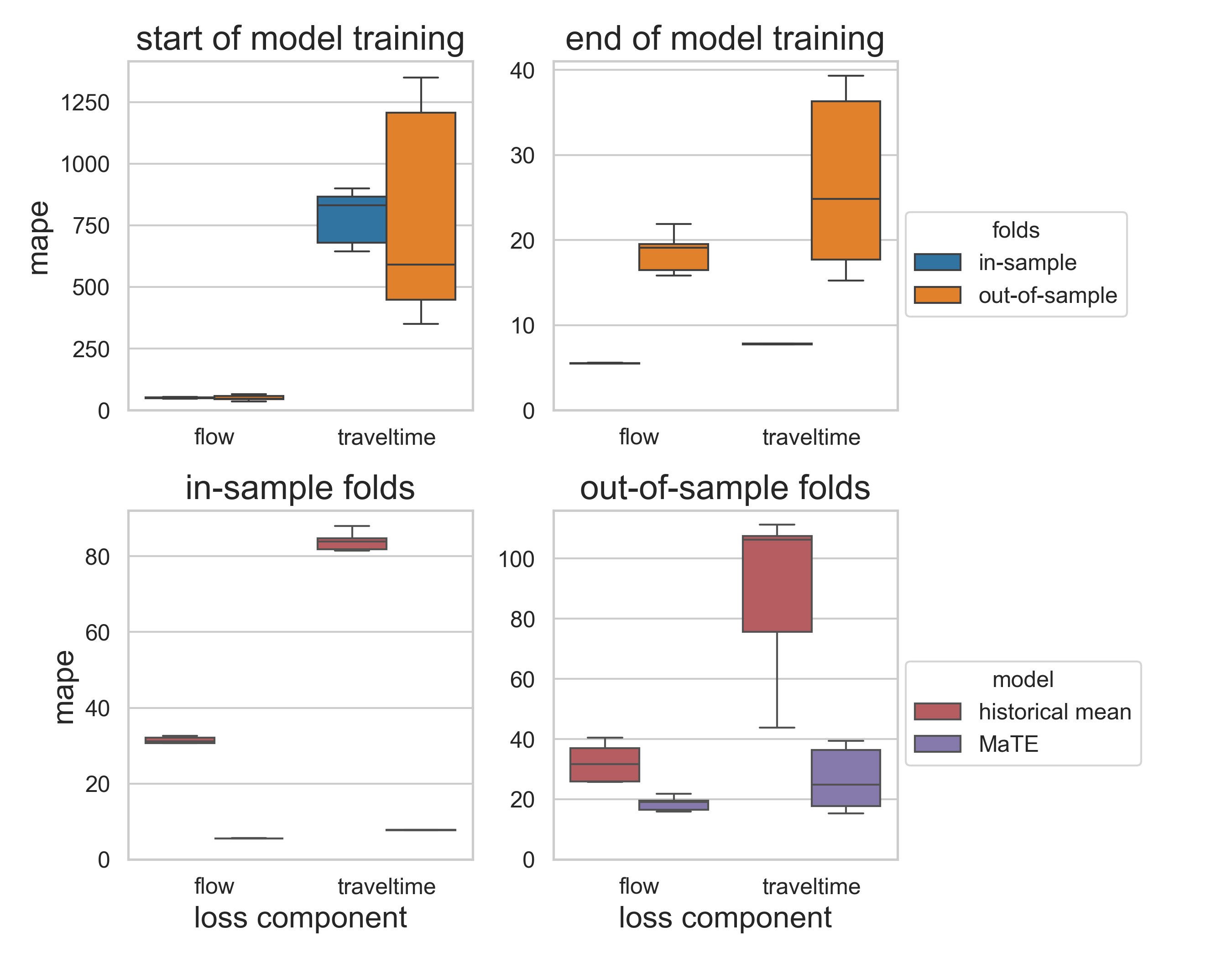}
    \caption{With reference O-D} 
	\label{subfig:siouxfalls-kfold-mape-tvodlulpe-refod-mlp}
\end{subfigure}
\begin{subfigure}[t]{0.295\columnwidth}
    \centering
 \includegraphics[width=\textwidth, trim= {0cm 0cm 5.5cm 0cm}, clip]{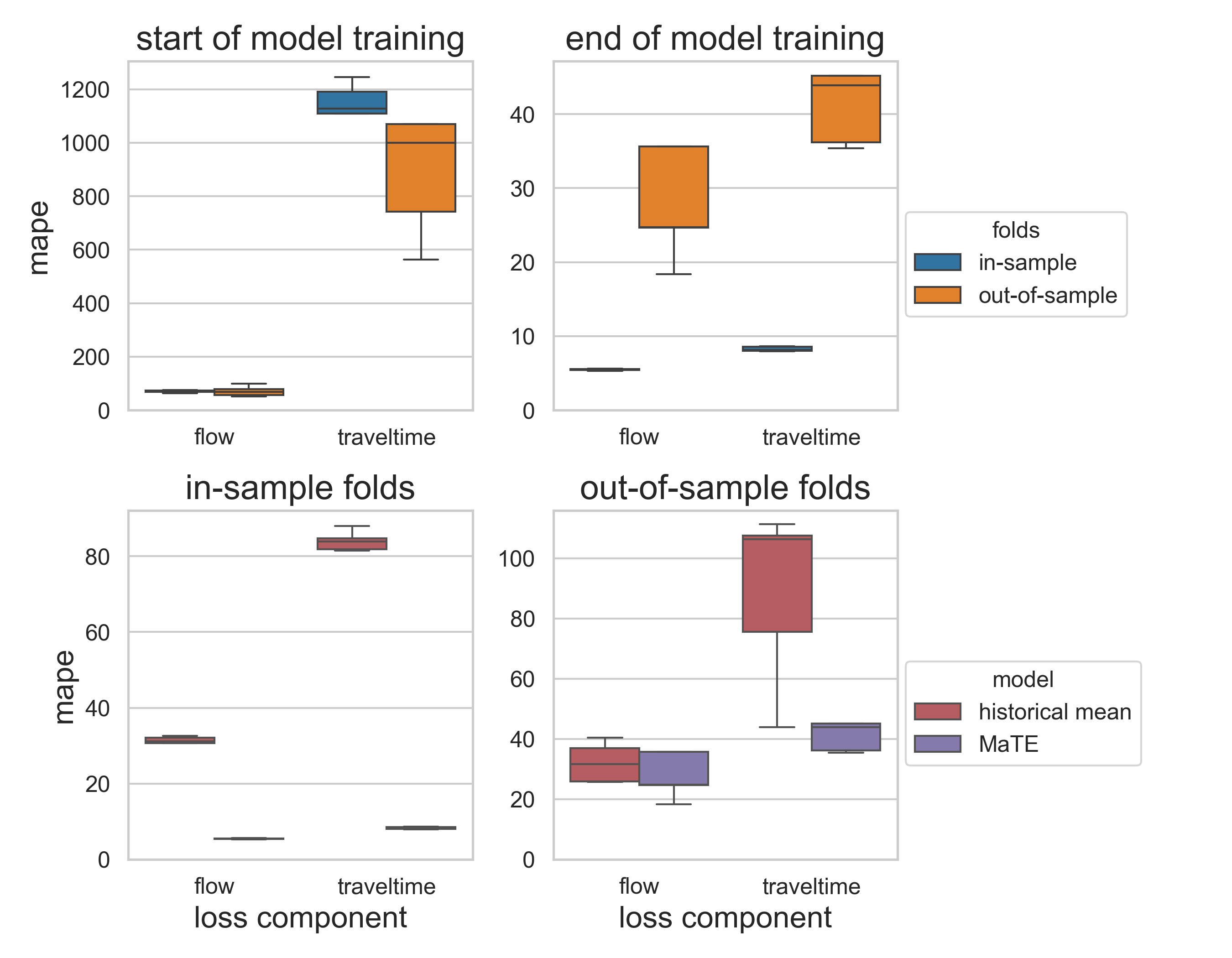}
    \caption{Without reference O-D}
	\label{subfig:siouxfalls-kfold-mape-tvodlulpe-mlp}
\end{subfigure}
\begin{subfigure}[t]{0.38\columnwidth}
	\centering
\includegraphics[width=\textwidth]{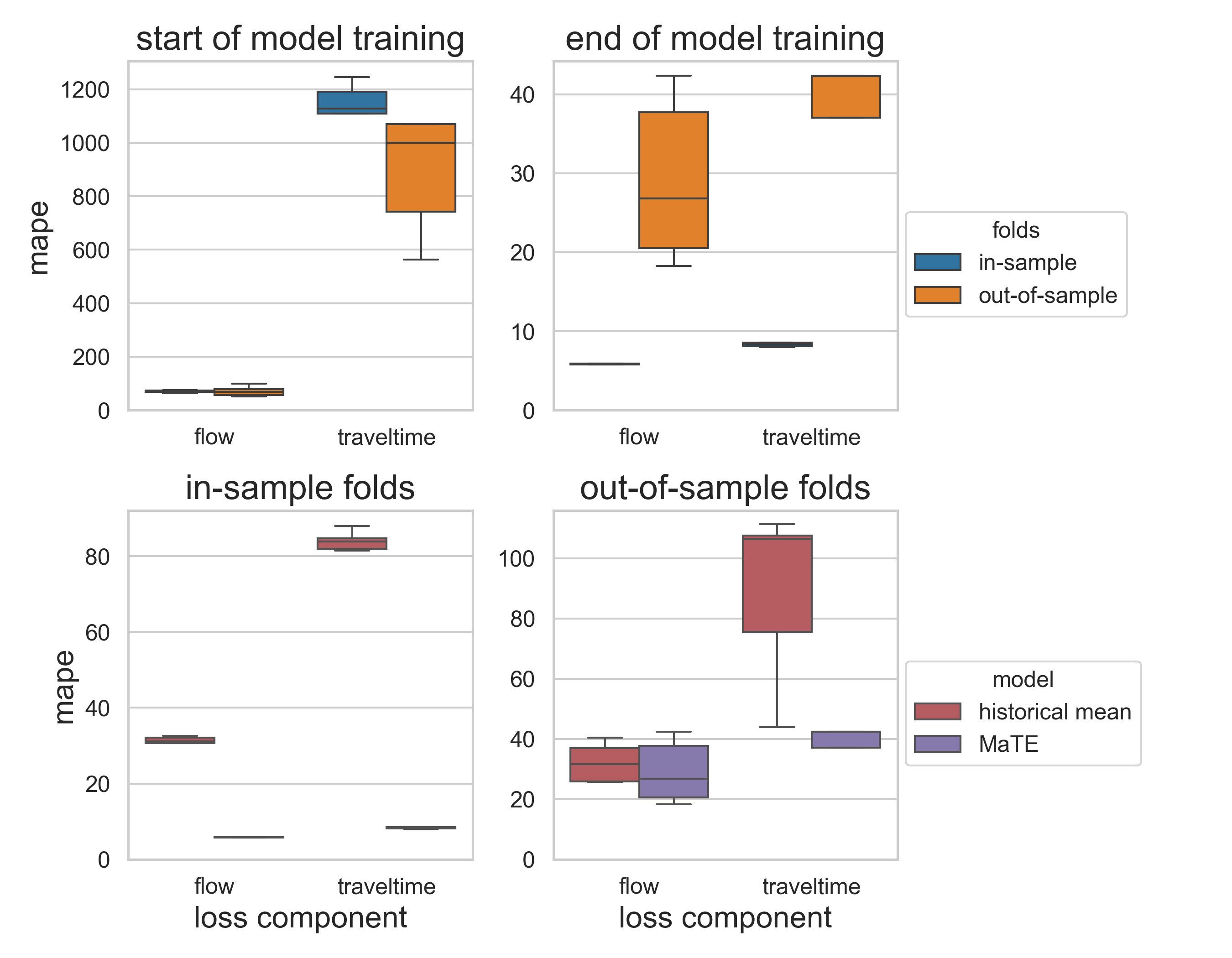}
    \caption{With generation step}
	\label{subfig:siouxfalls-kfold-mape-mate}
\end{subfigure}
\caption{Impact of incorporating generation step on out-of-sample performance}
\label{fig:siouxfalls-kfold-gode-versus-ode}
\end{figure}
\section{Large scale implementation}
\label{sec:large-scale-implementation}

Our methodology is applied to a large-scale transportation network located in the City of Fresno, California (Figure \ref{subfig:fresno-map}, Appendix \ref{appendix:sec:networks}). This network has 1,789 nodes and 2,413 links, and it primarily covers major roads and highways around the SR-41 corridor \citep{guarda_statistical_2024}. The models are trained using travel time and link flow hourly measurements collected during Tuesdays, Wednesdays, and Thursdays of October 2019 between 6:00 AM and 9:00 PM. The same measurements were collected in October 2020 for testing purposes. Each period of the model is defined as each hour of the day, and the periods are assumed to be common between different days of the week, which gives a total of 15 periods. A historic O-D matrix with 6,970 O-D pairs representative of a typical weekday at 4:00 PM is used to compute a reference value of the trip generation. The models are trained with the three shortest paths between every O-D pair. The data used for model training is described in Section \ref{ssec:fresno-eda}, Appendix \ref{appendix:sec:model-training}.

\subsection{Model specifications}
\label{ssec:fresno-model-specifications}

Table \ref{table:models-specifications-fresno} describes the number of parameters associated with different layers of \MTP. For comparative purposes, it also reports the parameters of the \TVODLULPE model for the same dataset and network instance. Although the number of parameters is large, every parameter of \MTP is interpretable. For example, each O-D estimation parameter corresponds to the O-D specific utility associated with a particular O-D pair and given hour of the day. Because the analysis with synthetic data proved that \MTP outperforms the \TVODLULPE (Section \ref{sec:numerical-experiments}), this section only reports estimation results for \MTP. The period-specific parameters of the models take 15 different values because the data is available for 15 different hours or periods of the day. The travelers' utility function is assumed linear-in-parameters and dependent on travel time, a link-specific effect, and five exogenous attributes weighted by time-specific parameters: the standard deviation of travel time, the number of bus stops, the number of street intersections, the number of yearly incidents, and the average monthly household income [1,000 USD per month] of the census block. The function that estimates the number of generated trips in a location in \MTP is assumed linear-in-parameters and dependent on a location-specific effect and on three feature-specific attributes that are weighted by period-specific parameters: the total population, the number of bus stops, and the level of income of the census block. 

\begin{table}[H]
\begin{adjustbox}{width=\textwidth}  
\renewcommand{\arraystretch}{1.05}
\centering
\begin{threeparttable}
\caption{Parameters of \MTP trained with data from Fresno, CA}
\label{table:models-specifications-fresno}
\begin{tabular}{ccccccc}
\hline
\multirow{2}{*}[-0.5cm]{Model} & \multicolumn{6}{c}{Parameters} \\ \cline{2-7} 
 & \begin{tabular}[c]{@{}c@{}}Link\\ flows\end{tabular} & \begin{tabular}[c]{@{}c@{}}Utility \\ function \end{tabular} & \begin{tabular}[c]{@{}c@{}} Generation \\ function \end{tabular} & \begin{tabular}[c]{@{}c@{}}O-D\\ estimation\end{tabular}  & \begin{tabular}[c]{@{}c@{}}Performance \\ function \end{tabular} & Total \\ \hline
\MTP & 2,413$T$ & 2,413 + 6$T$ & 1,789T + 3 & 6,970$T$ & $b + \|\mW\|_0 = 4 + 4,727$ & 7,147 + 11,178$T$ = 174,817 \\
\TVODLULPE & 2,413$T$ & 2,413 + 6$T$ & $-$ & 6,970$T$ & $2\times 2,413$ 
& 7,239 + 9,389$T$ = 148,074
\\ \hline
\end{tabular}
\begin{tablenotes}
      \footnotesize
        \item Note: $0 \leq \|\mW\|_0 \leq |A|^2$ corresponds to the number of cells in the flow interaction matrix that are different than zero. 
        \item $T$: number of distinct hourly periods in the model. 
        \item $b$: degree of the polynomial used to represent the link performance function. 
    \end{tablenotes}
\end{threeparttable}
\end{adjustbox}
\end{table}

\subsection{Estimation procedure}
\label{ssec:estimation-procedure-fresno} 

\MTP is trained with the Adam optimizer \citep{Kingma2015}, batches of size one and during 60 epochs. A learning rate of 0.05 is set to perform the gradient updates in all parameters except those associated with the fixed effects of the trip generation function. Increasing the learning rate to 10 for these parameters encourages larger changes in the parameter values over epochs. To represent a real-world setting where prior information can be leveraged for model training, we initialize the feature-specific parameters of the utility function of the \TVGODLULPE model with the estimates obtained for the \TVODLULPE model by \citet{guarda_estimating_2024}. The domain of the utility function parameters during the optimization is constrained such that the parameters have signs consistent with our prior expectations. The feature-specific parameters of the generation function are initialized with the estimates obtained from regressing the generated trips by location on the total population, the number of bus stops, and the level of income of the census block. The generated trips by location are obtained by aggregating the rows of the reference O-D matrix. 

The regression results show that all pre-trained parameters have signs consistent with our expectations. Only the parameter associated with the population is significant at a 1\% confidence level. The parameter weighting the number of bus stops by census block is negative, which is reasonable considering that the reference O-D matrix is for car trips, and these are negatively correlated with the density of bus stops. At training time, the feature-specific parameters are fixed to their pre-trained values. Therefore, the feature-specific parameters are not identifiable because the model learns location-specific effects that are period-specific. It is easy to check that the location-specific effects can reproduce any value taken by the weight between a feature and its parameter. To speed up convergence and increase training stability, the location-specific effects are initialized with the reference vector of generated trips by location. 

\subsection{In-sample performance}
\label{ssec:fresno-insample-performance}

This section studies the in-sample performance of \MTP. The training set in October 2019 compresses 31,624 and 468,303 measurements of traffic flow and travel times, equating to sensor coverages of 5.8\% and 86.3\%, respectively. The goodness of fit of the model is assessed with a variant of the MAPE known as the MeDian Absolute Percentage Error (MDAPE). By computing the median instead of the mean of the absolute percentage errors, the MDAPE becomes more robust to outliers that may be present in real-world data. 

\subsubsection{Convergence and goodness of fit}
\label{appendix:ssec:fresno-converegence}

Table \ref{table:fresno-gof-in-sample} reports the performance metrics in the final epoch. Model training takes about 174 minutes and the relative gap in the last epoch is 0.047. Figure \ref{fig:fresno-convergence-mate} shows the change in relative MSE and MDAPE over epochs in the training set. We observe that the error decreases smoothly over epochs for each component of the loss function. Figure \ref{fig:fresno-flow-traveltime-insample} shows scatter plots comparing the observed and estimated values of travel time and traffic flow. The bottom plots show $R^2$ equal to 0.77 and 0.96 for the traffic flow and travel time measurements, respectively, which suggest that the model can reproduce the observations of the training set. 

\begin{figure}[H]
\centering
\begin{subfigure}[t]{0.3\columnwidth}
    \centering
	\includegraphics[width=\textwidth]{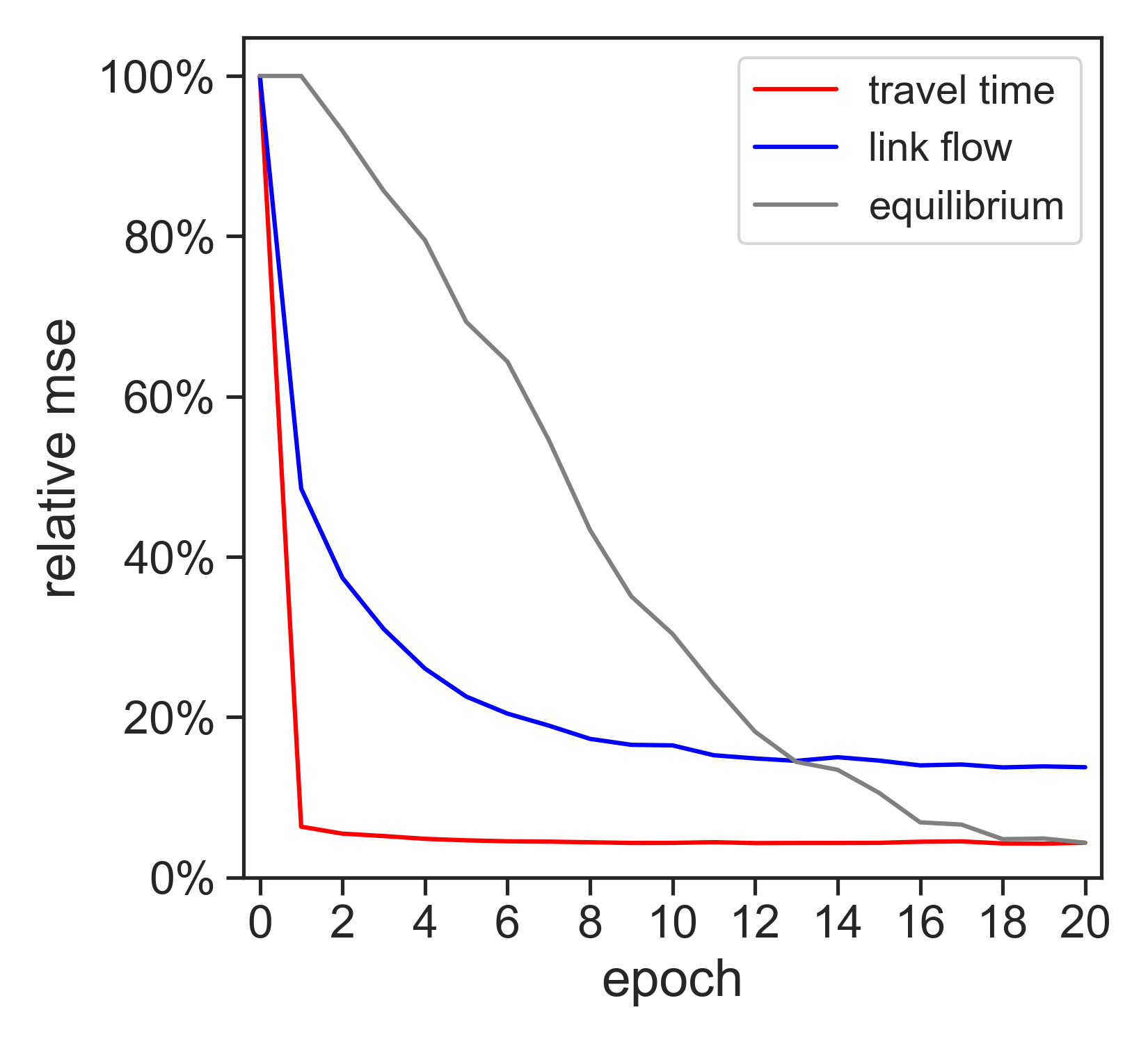}
    \caption{Relative MSE}
	\label{subfig:fresno-relative-mse-mate}
\end{subfigure}
\begin{subfigure}[t]{0.3\columnwidth}
	\centering
    \includegraphics[width=\textwidth]{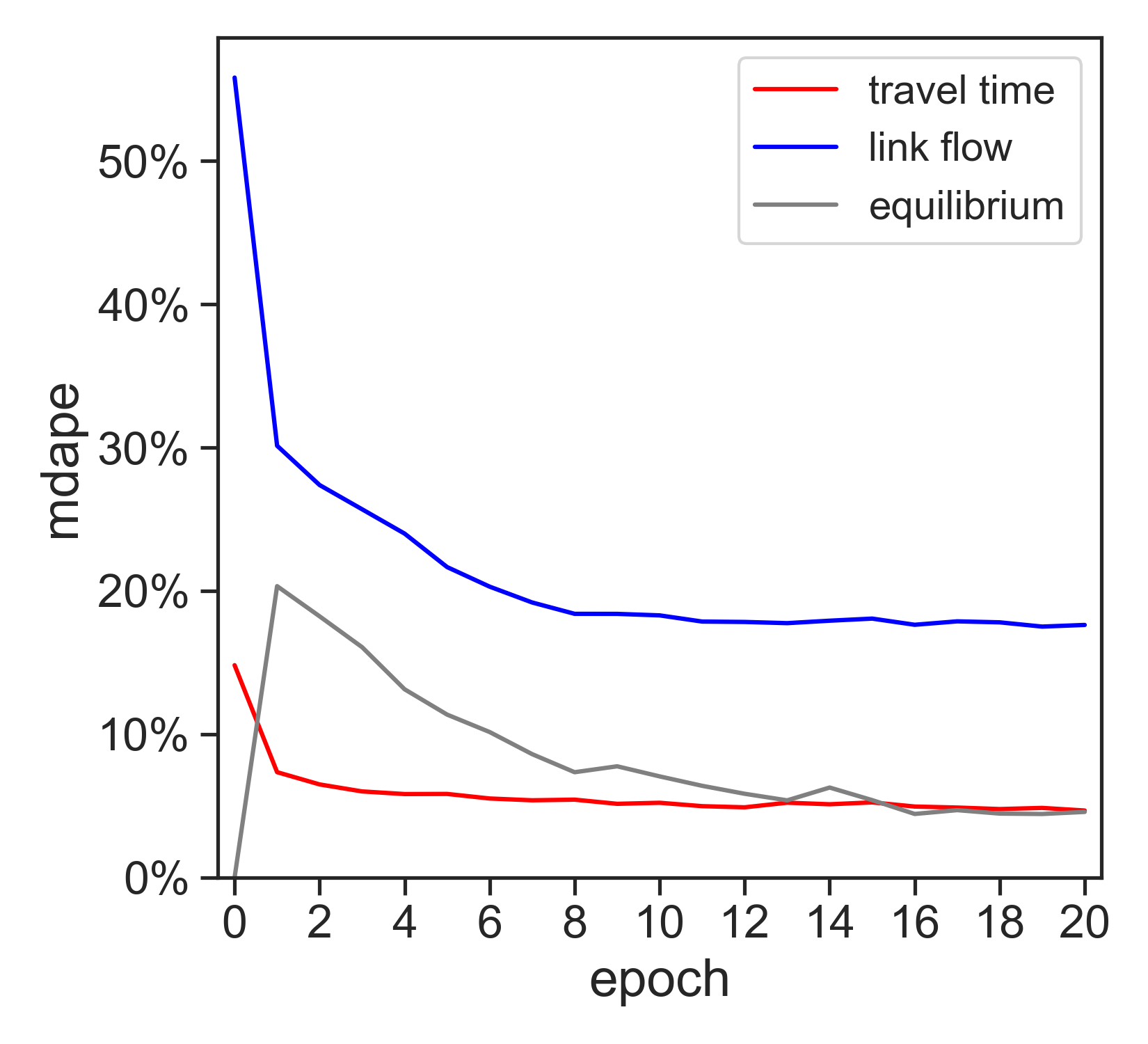}
    \caption{MDAPE}
	\label{subfig:fresno-mdape-mate}
\end{subfigure}
\caption{Convergence of \TVGODLULPE model in the Fresno network}
\label{fig:fresno-convergence-mate}

\end{figure}

\subsubsection{Parameter estimates}

\MTP is trained with machine learning algorithms and formulated according to transportation network theory. Thus, the model parameters are interpretable and useful to validate the properties of the model solutions. In particular, inconsistencies in the signs and relative magnitudes of the parameters of either the utility or the trip generation function can anticipate a bad estimation performance in the what-if-scenario analysis, even if the model can reproduce the training data. Figure \ref{fig:fresno-utility-generation-periods} shows the parameters of the trip generation and utility functions estimated by hour. As expected, the travel time, the standard deviation of travel time, the number of yearly incidents, and the number of street intersections have a negative effect on the utility of choosing a route in every hourly period. In contrast, the parameters associated with the number of bus stops and the income level have signs inconsistent with our prior expectation; thus, their values are projected to zero at training time. Interestingly, the parameters weighting the same features in the trip generation function have the expected sign (Section \ref{ssec:estimation-procedure-fresno}), which illustrates the importance of leveraging existing data in the appropriate layers of the model. 

\begin{figure}[H]
\centering
\begin{subfigure}[t]{0.3\columnwidth}
	\centering
	 \includegraphics[width=\textwidth]{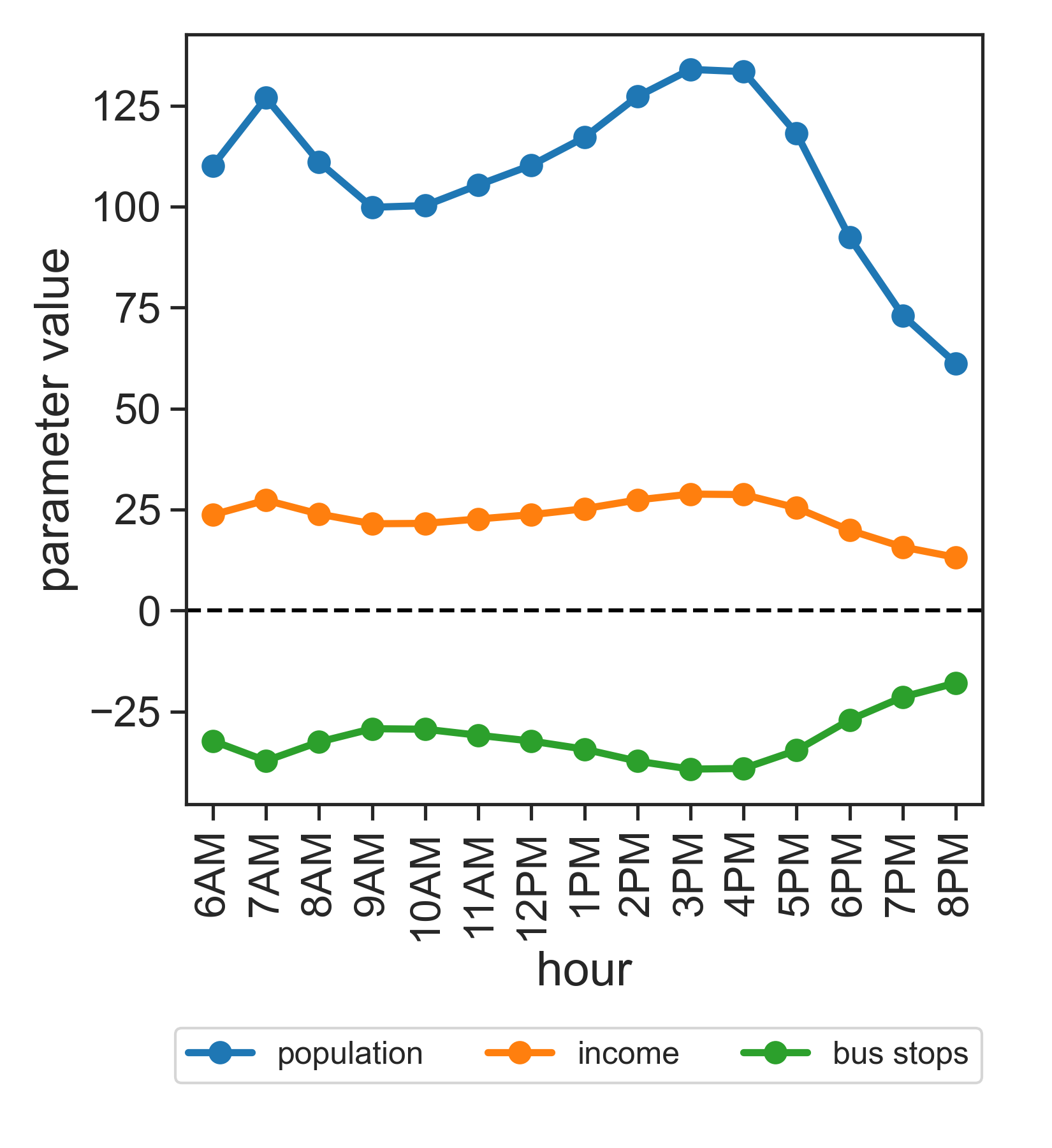}
	\caption{Generation}
	\label{subfig:fresno-utility-periods-mate}
\end{subfigure}
\begin{subfigure}[t]{0.3\columnwidth}
    \centering
 \includegraphics[width=\textwidth]{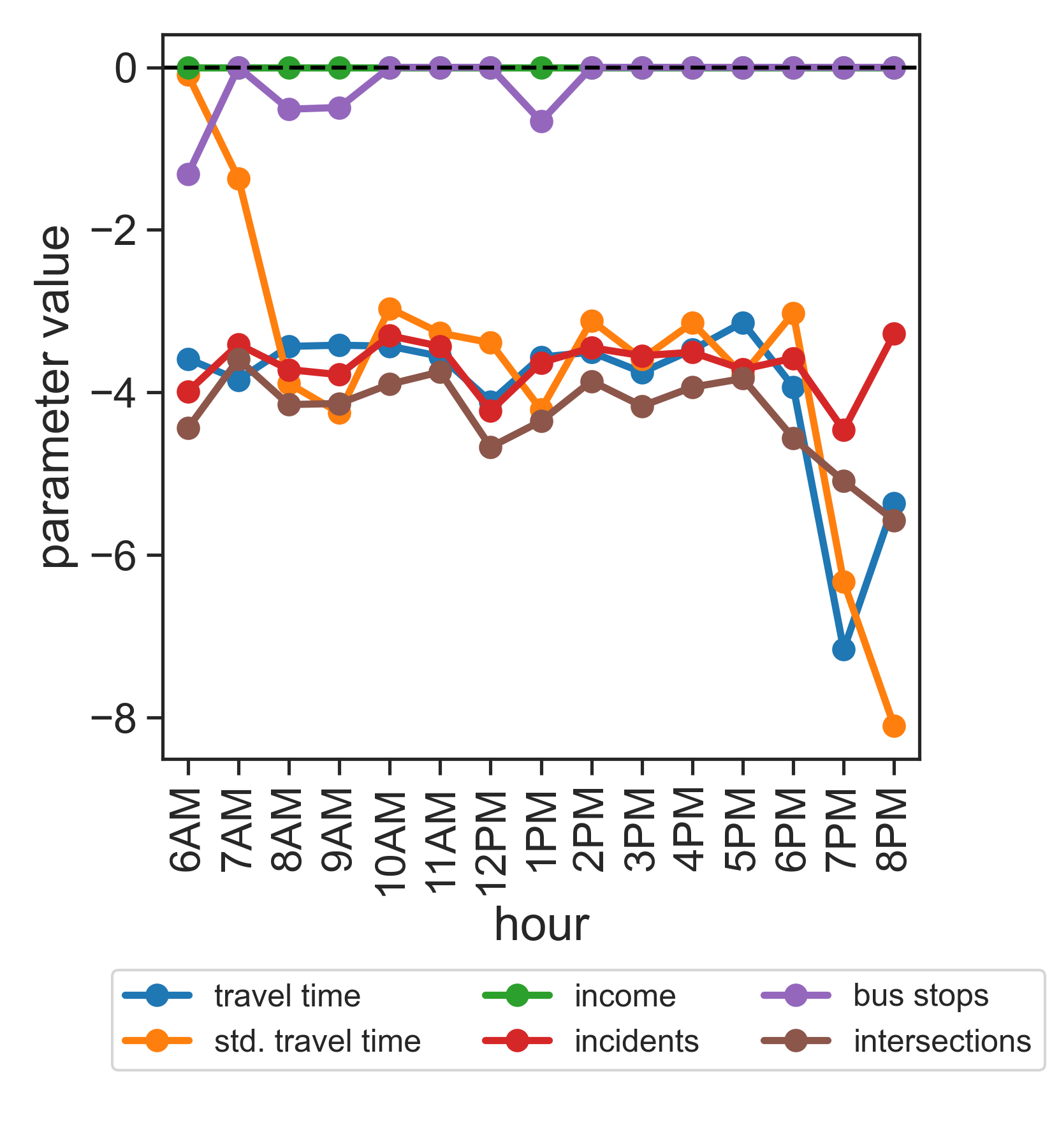}
	\caption{Utility}
	\label{subfig:fresno-generation-periods-mate}
\end{subfigure}

\caption{Parameters of utility and generation function by hour using data from Fresno, CA}
\label{fig:fresno-utility-generation-periods}
\end{figure}

Figure \ref{fig:fresno-estimates-by-hour-of-day} shows the change in the reliability ratios and the total number of trips by hour. The reliability ratios are calculated as the ratio of the utility function parameters associated with the standard deviation and the average of the travel time feature. The total trips are computed as the sum of the cells of the O-D matrices estimated by hour. As expected, the reliability ratio is positive every hour, except at 6:00 AM, which suggests that travelers consistently prefer to reduce travel time variability. The variation of the hourly trips resembles the pattern of hourly traffic flow per lane observed on the left side of Figure \ref{fig:fresno-hourly-eda}, Appendix \ref{ssec:fresno-eda}. For example, the estimated trips are higher in the afternoon peak hours, followed by a large drop in the number of trips in the evening hours. The total estimated number of trips is 1,178,180, which is reasonable for the city of Fresno, which has 175,000 households \citep{us_census_quickfacts_2022}. 

\begin{figure}[H]
\centering

\begin{subfigure}[t]{0.3\columnwidth}
	\centering
	\includegraphics[width=\textwidth]{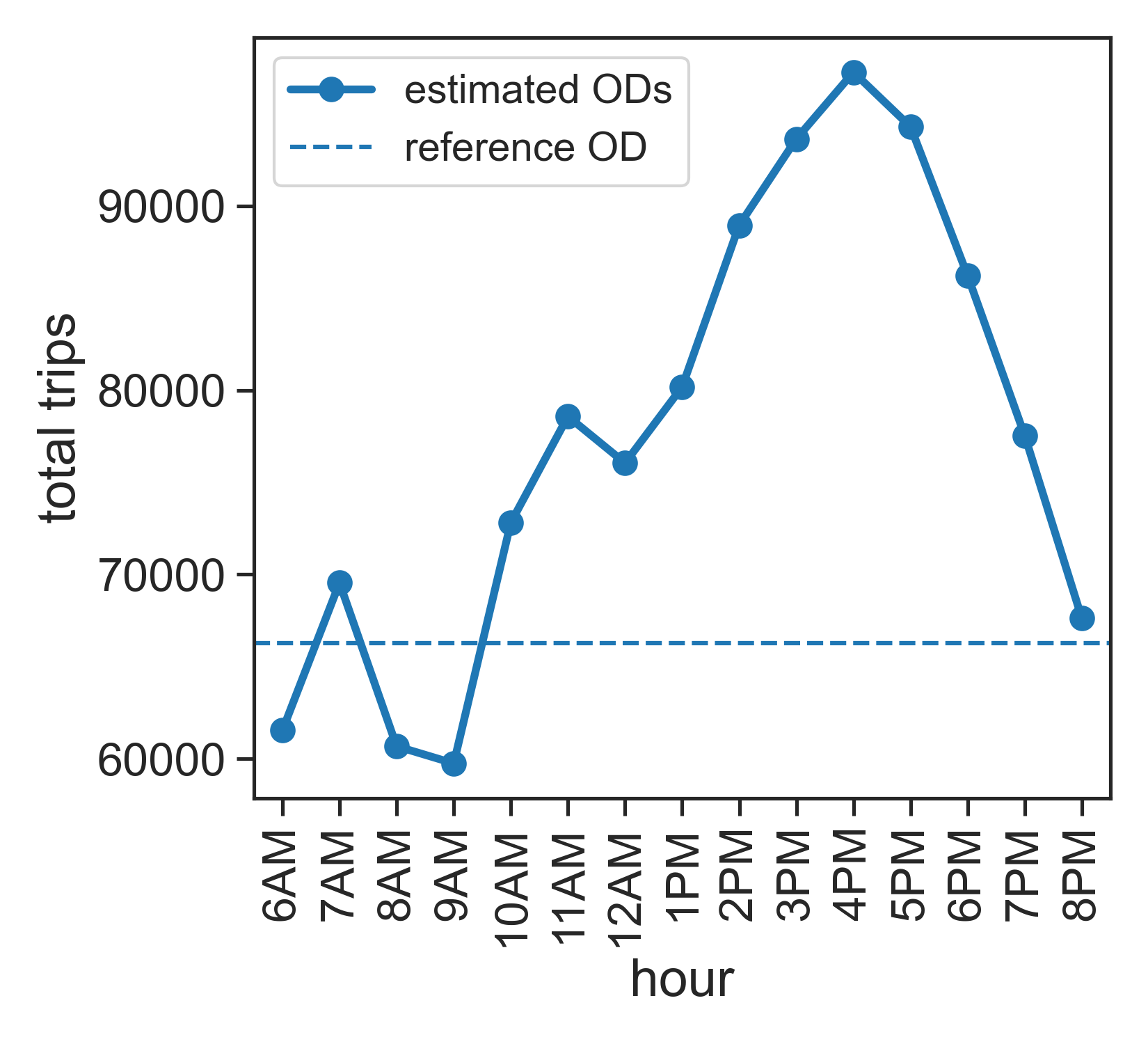}
	\caption{Total trips}
	\label{fig:fresno-total-trips-periods}
\end{subfigure}
\begin{subfigure}[t]{0.3\columnwidth}
    \centering
	\includegraphics[width=\textwidth]{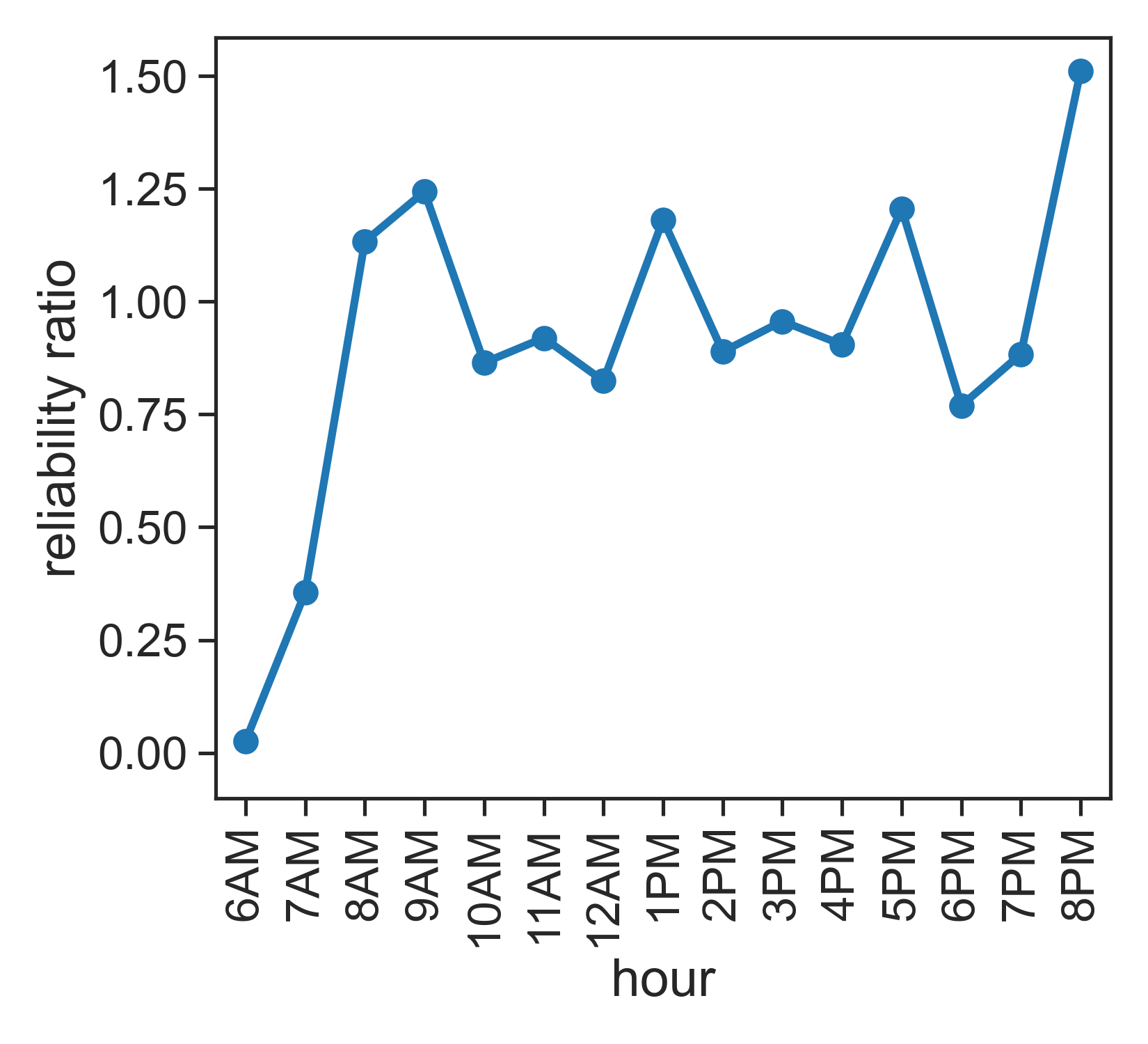}
	\caption{Reliability ratio}
	\label{fig:fresno-reliability-ratios-periods}
\end{subfigure}

\caption{Reliability ratios and total trips estimated hour of the day using data from the Fresno network}
\label{fig:fresno-estimates-by-hour-of-day}
\end{figure}

\subsubsection{Performance functions}

The right-hand side of Figure \ref{fig:fresno-performance-functions} shows the performance functions learned with \MTP. For comparative purposes, the left-hand side shows, for the same links, BPR functions with parameters $\alpha = 0.15, \beta=4$. The polynomial learned by \MTP is $0.1809x + 0.0093x^2 + 0.0095x^3$. As expected, we observe that all functions are monotonically increasing. The difference in curvature of the performance functions is due to the subsequent multiplication by the Kernel matrix (Section \ref{sssec:performance-function-constraint}). Figure \ref{fig:fresno-kernel-link-performance-functions-mate} shows the Kernel matrix for a subset of links of the Fresno network. The average values of the diagonal and non-diagonal parameters of $\mW$ are $3.16$ and $0.25$, respectively, in line with the hypothesized diagonal form of the kernel matrix. 

\begin{figure}[H]
	\centering
\begin{subfigure}[t]{0.3\columnwidth}
    \centering
    \includegraphics[width=1\textwidth, trim= {11.4cm 0.5cm 0.3cm 1.1cm}, clip]{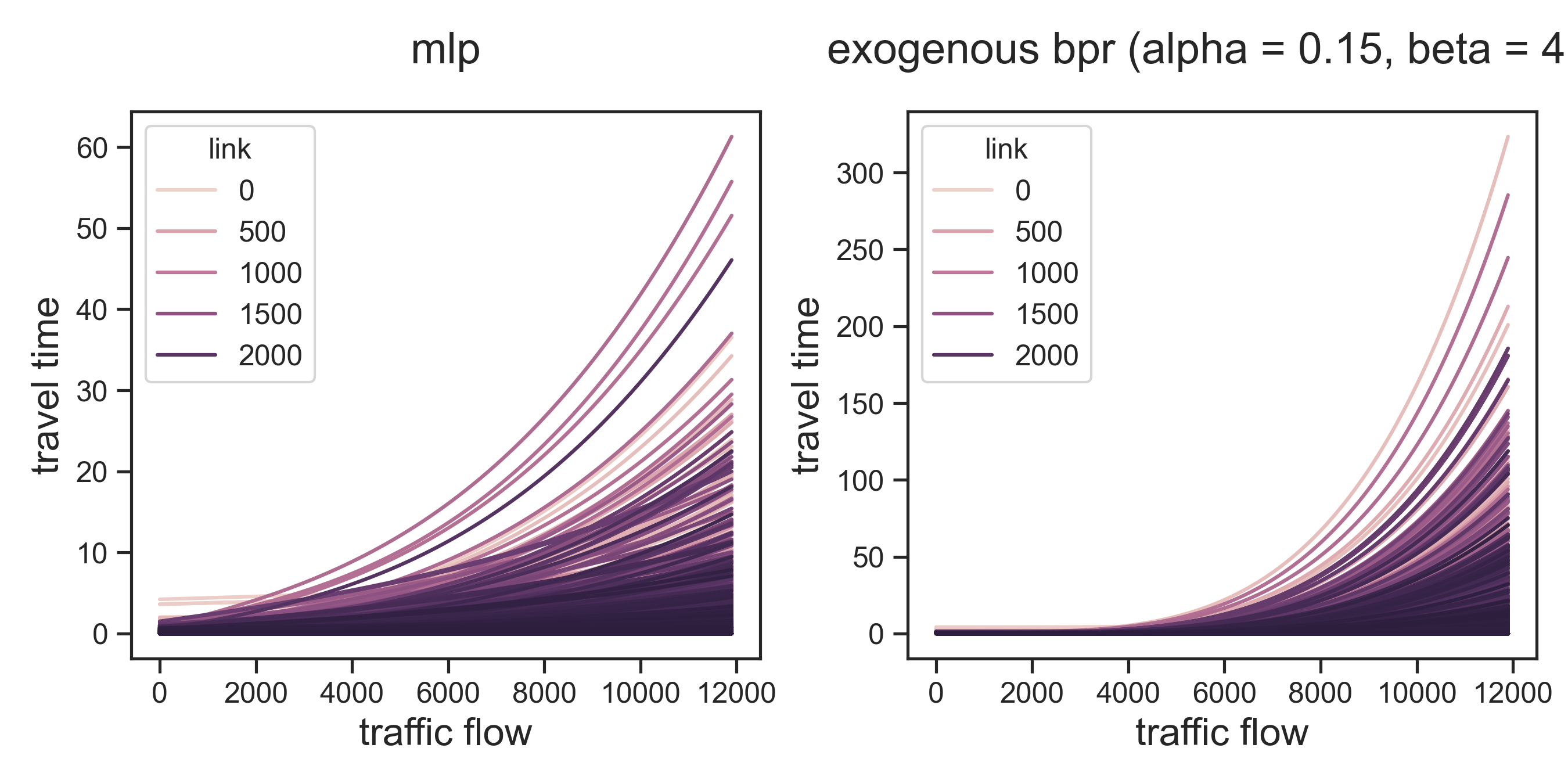}
 \caption{Exogenous BPR ($\alpha = 0.15, \beta=4$)}
	\label{subfig:fresno-performance-functions-mate}
    \end{subfigure}
\begin{subfigure}[t]{0.3\columnwidth}
    \centering
 \includegraphics[width=1\textwidth, trim= {0.3cm 0.5cm 11.4cm 1.1cm}, clip]{figures/results/fresno-comparison-all-link-performance-functions-mate.png}
    \caption{\TVGODLULPE}
	\label{subfig:fresno-some-performance-functions-mate}
\end{subfigure}
\caption{Performance functions learned by the model using data from Fresno, CA}
\label{fig:fresno-performance-functions}
\end{figure}

\begin{figure}[H]
	\centering
	\includegraphics[width=0.7\textwidth]{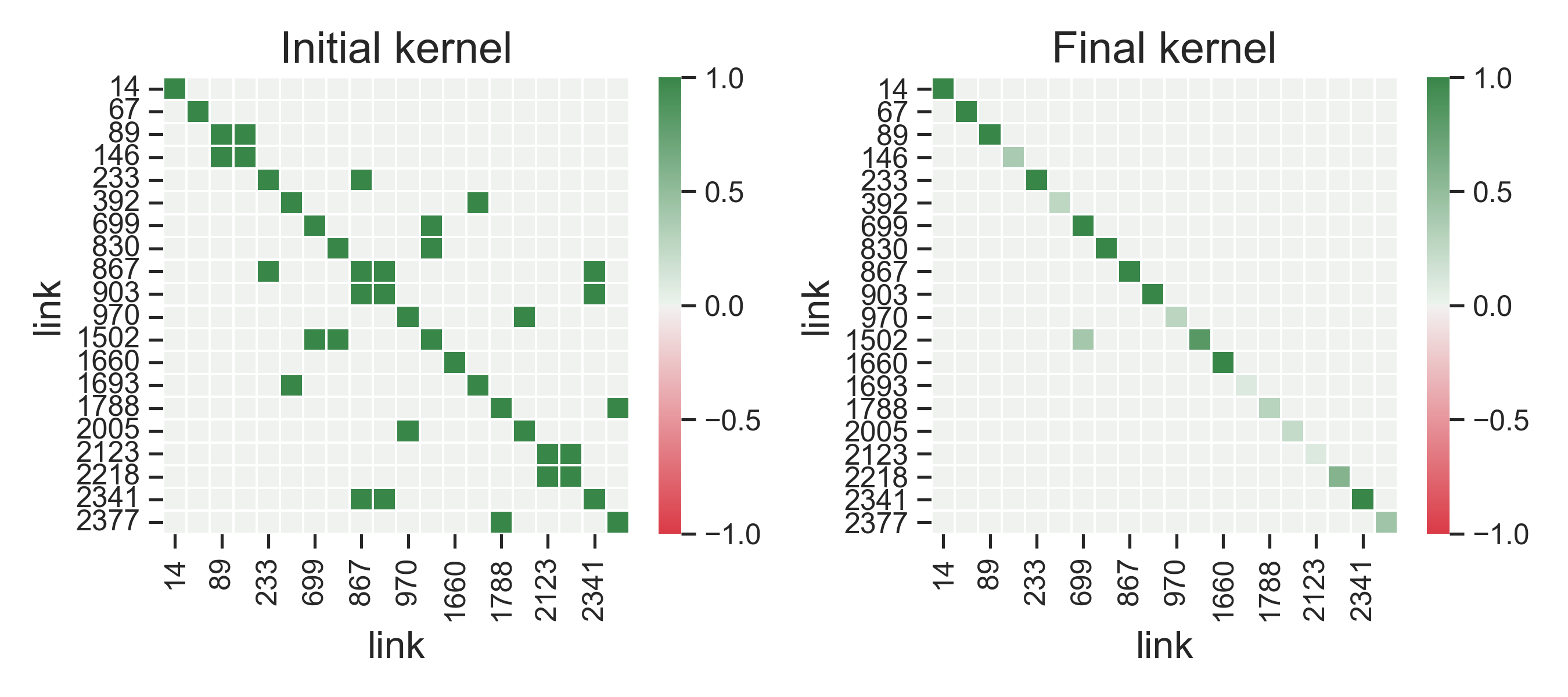}
	\caption{Traffic flow interactions learned by the \TVGODLULPE model using data from Fresno, CA}
	\label{fig:fresno-kernel-link-performance-functions-mate}
\end{figure}

\subsubsection{Traffic forecasting}
\label{ssec:fresno-forescasting}

The model in October 2020 is fine-tuned using \MTP pre-trained with data from October 2019. At training time, only the link flow parameters are defined as learnable and adjusted until the relative gap reported in October 2019 is achieved. Because most links with measurements of travel time and traffic flow in October 2020 also have measurements in October 2019, the traffic forecasting problem is another valid scenario to analyze in-sample performance. The main change in the input data in October 2020 with respect to October 2019 is in the values of the exogenous features of the utility function, e.g., the standard deviation of travel times or the number of yearly incidents. Other model inputs, such as the exogenous features of the generation model and the links' capacities, are assumed the same, but they could be intervened to study alternative what-if scenarios. 

Table \ref{table:fresno-gof-in-sample} reports the model's forecasting performance in October 2020. Figure \ref{fig:fresno-flow-traveltime-insample} shows scatterplots comparing the observed and estimated values of travel time and traffic flow by hour in October 2019 and October 2020. In both years, we observed that the positive correlation between travel times and traffic flow is preserved in the observations and the estimated values. The reduction in the $R^2$ reported in the bottom plots in October 2020 is in line with the increase in MSE and MDAPE with respect to October 2019. The increase in forecasting error between years may be due to changes in the O-D matrix or other factors related to the COVID-19 pandemic that could affect the distribution of link flows in the network. Despite that, the $R^2$ of traffic flow and travel times in 2020 are 0.63 and 0.93, respectively, which suggests a good estimation performance. 

\begin{table}[H]
\renewcommand{\arraystretch}{1.05}
\centering
\begin{threeparttable}
\caption{Goodness of fit of \MTP}
\label{table:fresno-gof-in-sample}
\begin{tabular}{ccccccc}
\hline
\multirow{2}{*}{\begin{tabular}[c]{@{}c@{}}  \\ Dataset\end{tabular}} & \multicolumn{5}{c}{Metric} \\ \cline{2-7} 
 & \begin{tabular}[c]{@{}c@{}}Relative \\ Gap \end{tabular} 
 & \begin{tabular}[c]{@{}c@{}}MSE \\ Equilibrium\end{tabular} 
 & \begin{tabular}[c]{@{}c@{}}MSE \\ Flow\end{tabular} 
 & \begin{tabular}[c]{@{}c@{}}MSE \\ Travel time\end{tabular} 
 & \begin{tabular}[c]{@{}c@{}}MDAPE \\ Flow \end{tabular} 
 & \begin{tabular}[c]{@{}c@{}}MDAPE \\ Travel time \end{tabular} 
 \\ \hline
October 2019
& $0.047$
& $1.5 \times 10^{4}$
& $9.5 \times 10^{5}$
& $2.9 \times 10^{-3}$
& $17.5$
& $4.7$ \\
October 2020
& $0.046$
& $3.8 \times 10^{4}$
& $1.3 \times 10^{6}$
& $4.8 \times 10^{-3}$
& $21.9$
& $6.8$ \\
\hline
\end{tabular}
\begin{tablenotes}
      \footnotesize
        \item MSE: Mean Squared Error. MDAPE: Median Absolute Percentage Error
    \end{tablenotes}
\end{threeparttable}
\end{table}

\begin{figure}[H]
\centering

\begin{subfigure}[t]{0.48\columnwidth}
    \centering
	\includegraphics[width=\textwidth]{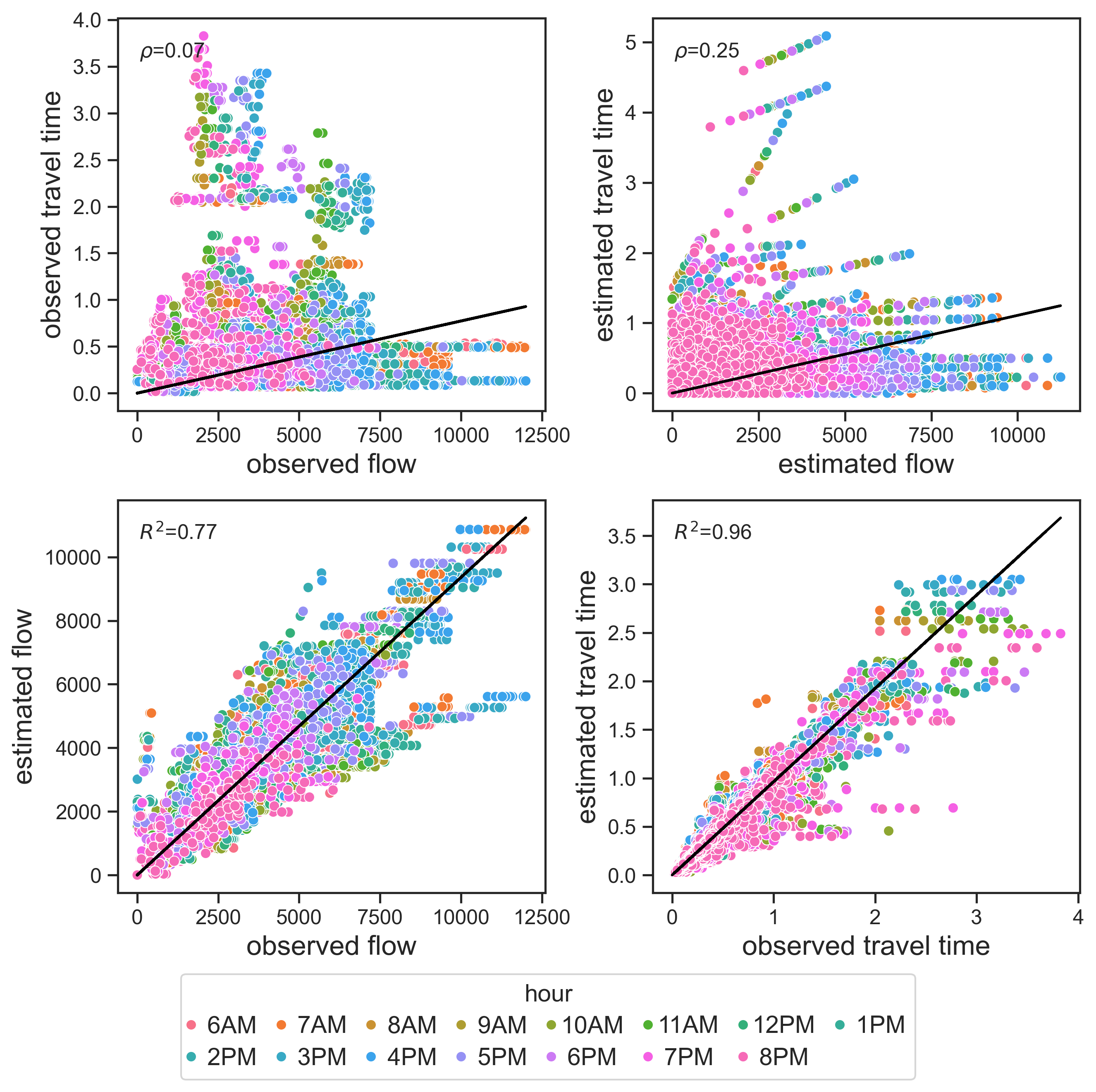}
 	\caption{October 2019}
	\label{subfig:fresno-flow-traveltime-insample-tvodlulpe}
\end{subfigure}
\begin{subfigure}[t]{0.48\columnwidth}
	\centering
	\includegraphics[width=\textwidth]{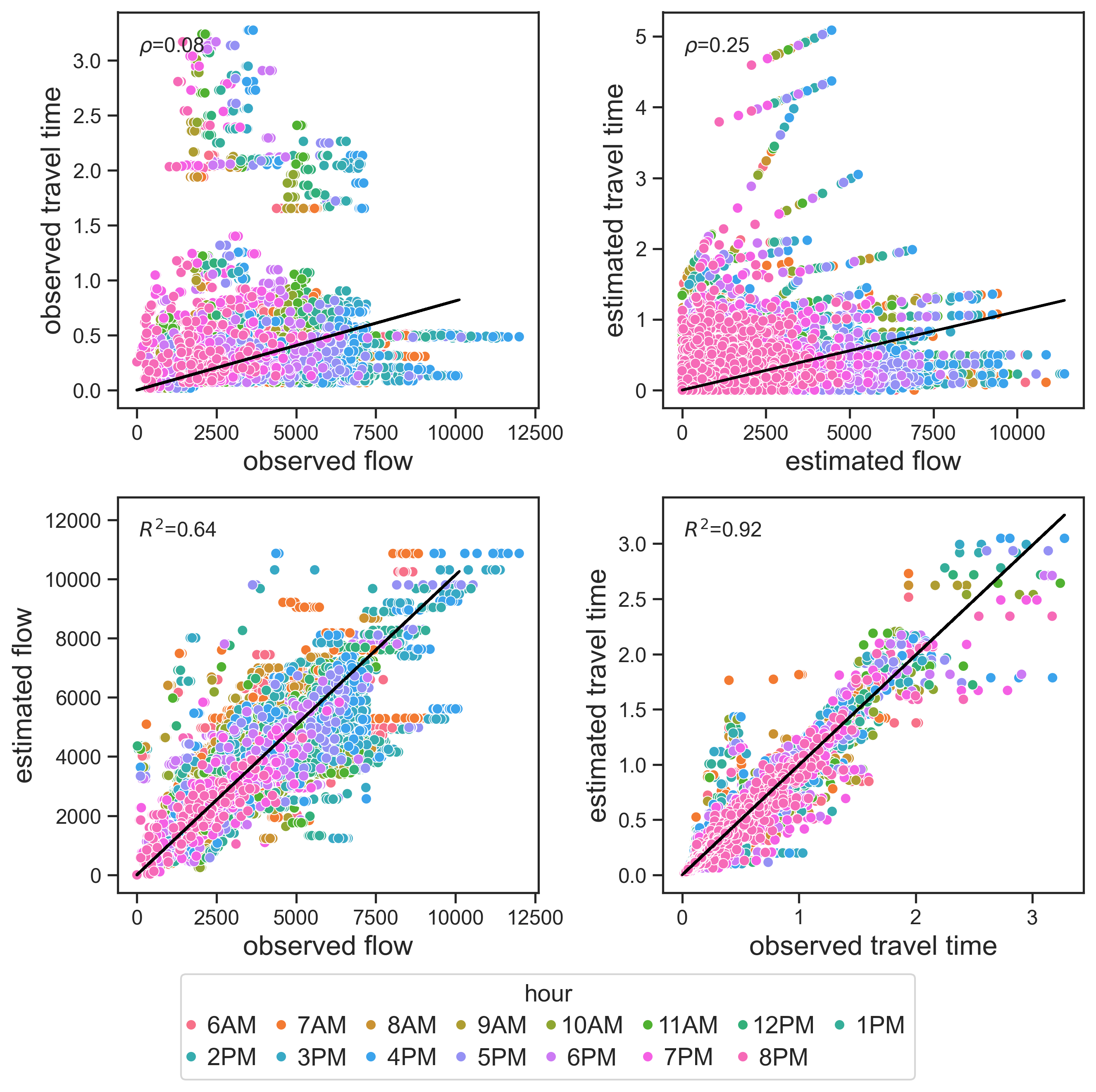}
 	\caption{October 2020}
	\label{subfig:fresno-flow-traveltime-insample-mate-2020}
\end{subfigure}

\caption{Observed versus estimated values of link flow and travel time in Fresno, CA}
\label{fig:fresno-flow-traveltime-insample}
\end{figure}

\subsection{Out-of-sample performance}
\label{ssec:fresno-outofsample-performance}

The out-of-sample performance of \MTP is analyzed using the data collected in October 2020 (Section \ref{ssec:fresno-eda}). This dataset compresses 27,472 and  419,510 measurements of traffic flow and travel times, equating sensor coverages of 5.1\% and 77.3\%, respectively. Out-of-sample performance is measured in links that do not report traffic flow or travel time observations in the training set. The out-of-sample performance of \MTP is compared against three data-driven benchmarks: historical mean, linear regression, and regression kriging. 

\subsubsection{K-fold cross-validation}

To perform the K-fold cross-validation, we follow the same procedure as the experiments conducted in Sioux Falls (Section \ref{siouxfalls-outofsample-performance}). Similar to the experiments, the estimation errors obtained in the training and validation folds are used to study the in-sample and out-of-sample performance of the model, respectively. First, we randomly split the links that report observations in the Fresno network into five folds. Because the coverages of travel time and traffic flow observations among links are significantly different, we create a different set of five folds for each data source, obtaining a total of five pairs of folds. Subsequently, the model is trained using four pairs of folds and validated with the remaining pair of folds. Finally, the process is repeated five times until all pairs of folds are used as validation sets. 

Figure \ref{fig:fresno-kfold-baselines-mdape} compares the out-of-sample performance of \MTP against our three data-driven benchmarks. Similar to the results obtained in the experiments (Figure \ref{siouxfalls-outofsample-performance}), we observe that \MTP outperforms the historical mean model in out-of-sample estimation of traffic flow by a small absolute percentage margin. The margin between \MTP and the kriging and linear regression models is larger, with a relative reduction of MDAPEs of approximately 15-20\%. It is important to consider that the estimations of \MTP are not only more accurate out-of-sample but also comply with fundamental constraints of network flow (Section \ref{ssec:constraints}). Regarding travel times, we observed that \MTP achieves MDAPE lower than 15\% and vastly outperforms the data-driven benchmarks, with relative reductions larger than 50\%. We believe that the larger sensor coverage of the travel time measurements could explain the higher accuracy of \MTP to estimate travel times relative to traffic flows. 

\begin{figure}[H]
\centering
\includegraphics[width=0.5\textwidth]{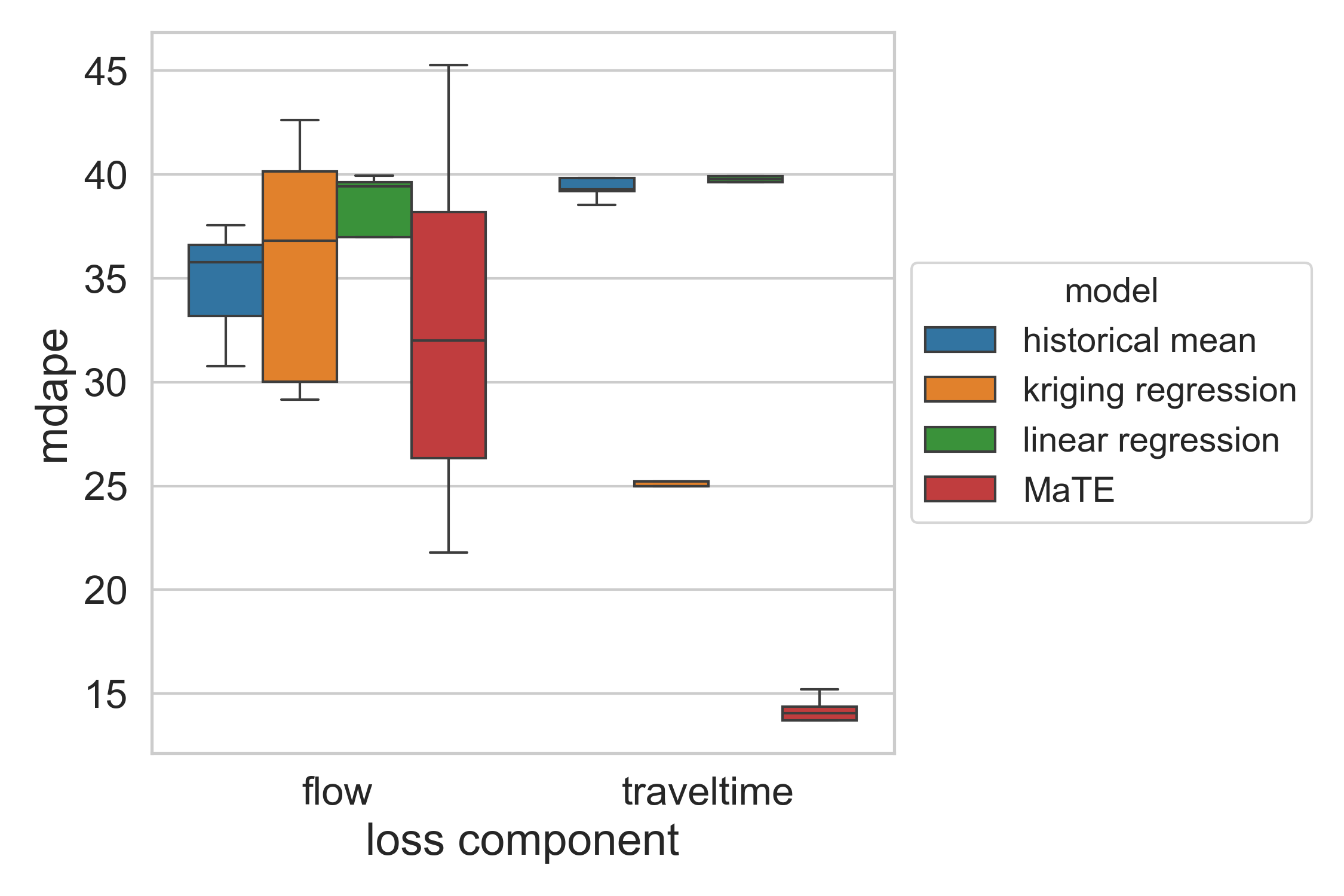}
\caption{MeDian Absolute Percentage Error (MDAPE) obtained with k-fold cross validation and using data from October 2020 in Fresno, CA}
\label{fig:fresno-kfold-baselines-mdape}
\end{figure}

\subsubsection{Network-wide estimation}

Figure \ref{fig:fresno-map-estimation} visualizes the network-wide estimations of \MTP against the regression kriging (\RK) model, the best-performing data-driven benchmark. Estimations are made in all links of the transportation network, regardless of whether the link has traffic flow or travel time measurements in the training set. The estimations of the benchmark models are made for the first Tuesday of October 2020 at 4:00 PM. The models are trained using data collected during the same hour and date to generate more accurate estimations. The estimations with \MTP are made with the model pre-trained with all data from October 2019. The first subplot on the left-hand side of Figure \ref{fig:fresno-map-estimation} shows a basemap obtained from Open Street Maps and the links of the transportation network which are highlighted in gray. To compare the estimations of traffic flow and travel time on a similar scale, we compute the ratio of the estimated and maximum traffic flows (Figure \ref{subfig:fresno-map-flow}) and the ratio of the estimated and maximum speeds (Figure \ref{subfig:fresno-map-speed}) for every link. The maximum flows and speeds are inputs of the model and they are used to compute the link performance functions (Section \ref{eq:neural-performance-function}). The travel times are transformed into speeds by using the length of each link. The maps of ground truth values for the traffic flow and travel speeds do not cover all links in the transportation network because observations are not available for every link. 

Regarding the traffic flow estimation, the \RK estimates that most links are highly congested, which aligns with the observed traffic flow pattern. As a spatial interpolation method, the out-of-sample estimations made by \RK are expected to be highly correlated with the distribution of values of the dependent variable in the training data. In contrast, \MTP generates more diverse traffic flow estimations, especially in out-of-sample links where traffic flow and travel time measurements are not observed in the training set and in links where the estimated level of traffic congestion is lower. This estimation pattern is reasonable, considering that the out-of-sample links are primarily local roads without PeMS traffic counters. Furthermore, the spatial distribution of the estimated speeds in the \RK and \MTP models are similar, except that \MTP seems to generalize better in out-of-sample links. For example, the set of links located farthest north on the map is assigned green and yellow colors in the \MTP and \RK models, respectively. However, the estimation of the \MTP seems more reasonable, considering that the link is part of a highway and the adjacent links are green. In the set of links located in the farthest east of the map and which forms an inverted T-shape, the estimation of \MTP is associated with a green color. In contrast, the \RK model finds that the out-of-sample links are yellow and that the contiguous links are green. The ability of \MTP to generate consistent estimations of travel times and traffic flow in adjacent links comes from the conservation constraints between path and link flows that are enforced by the computational graph during model training (Eq. \ref{eq:path-flows-link-flows-conservation-constraint}, Section \ref{sssec:flow-conservation-constraints}).

\begin{figure}[H]
\centering
\begin{subfigure}[t]{0.46\columnwidth}
	\centering
	 \includegraphics[width=\textwidth, trim= {0cm 0cm 4.8cm 1cm}, clip]{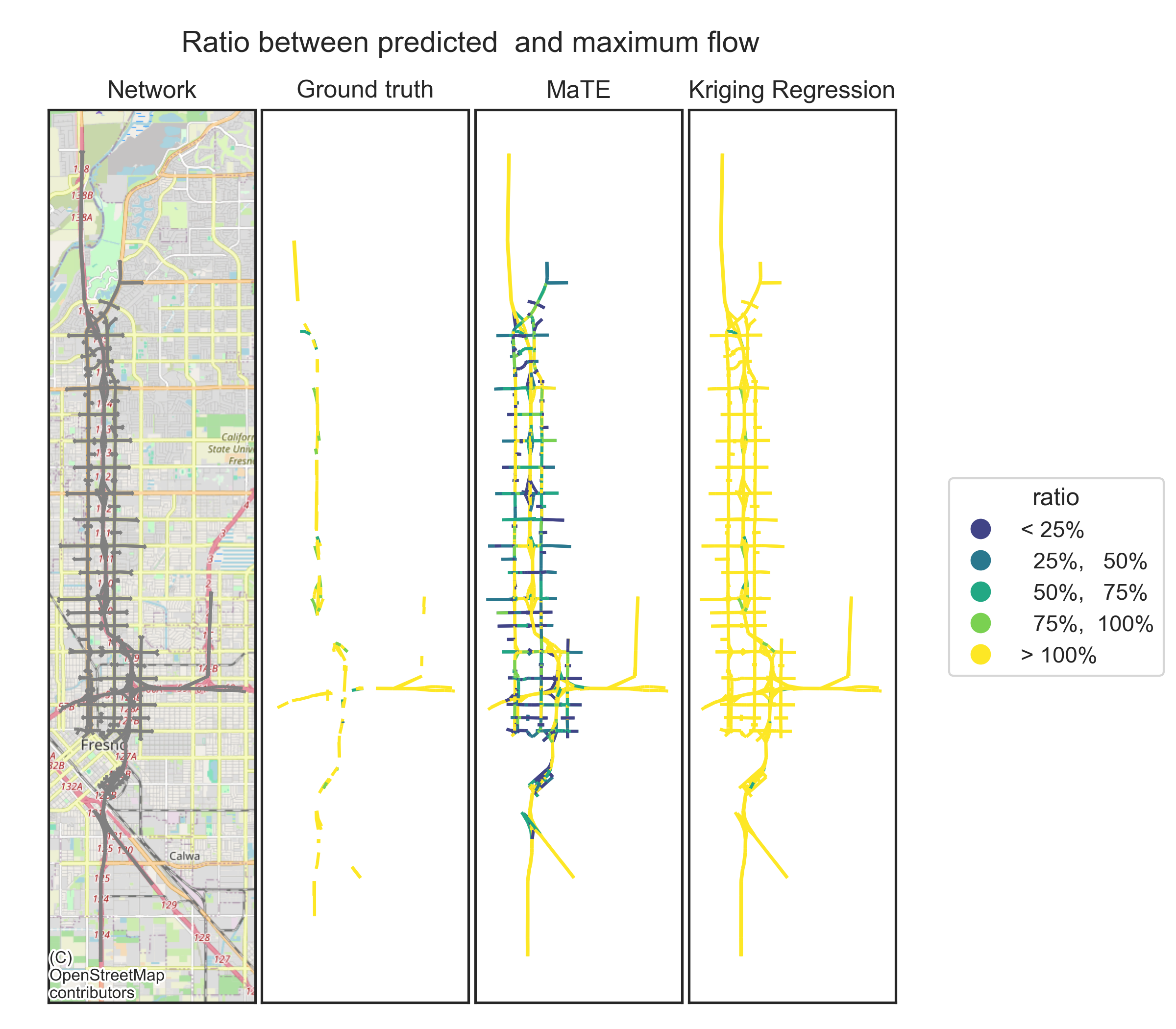}
	\caption{Ratio between estimated and maximum flows}
	\label{subfig:fresno-map-flow}
\end{subfigure}
\hspace{0.5cm}
\begin{subfigure}[t]{0.47\columnwidth}
    \centering
 \includegraphics[width=\textwidth, trim= {4.5cm 0cm 0cm 1cm}, clip]{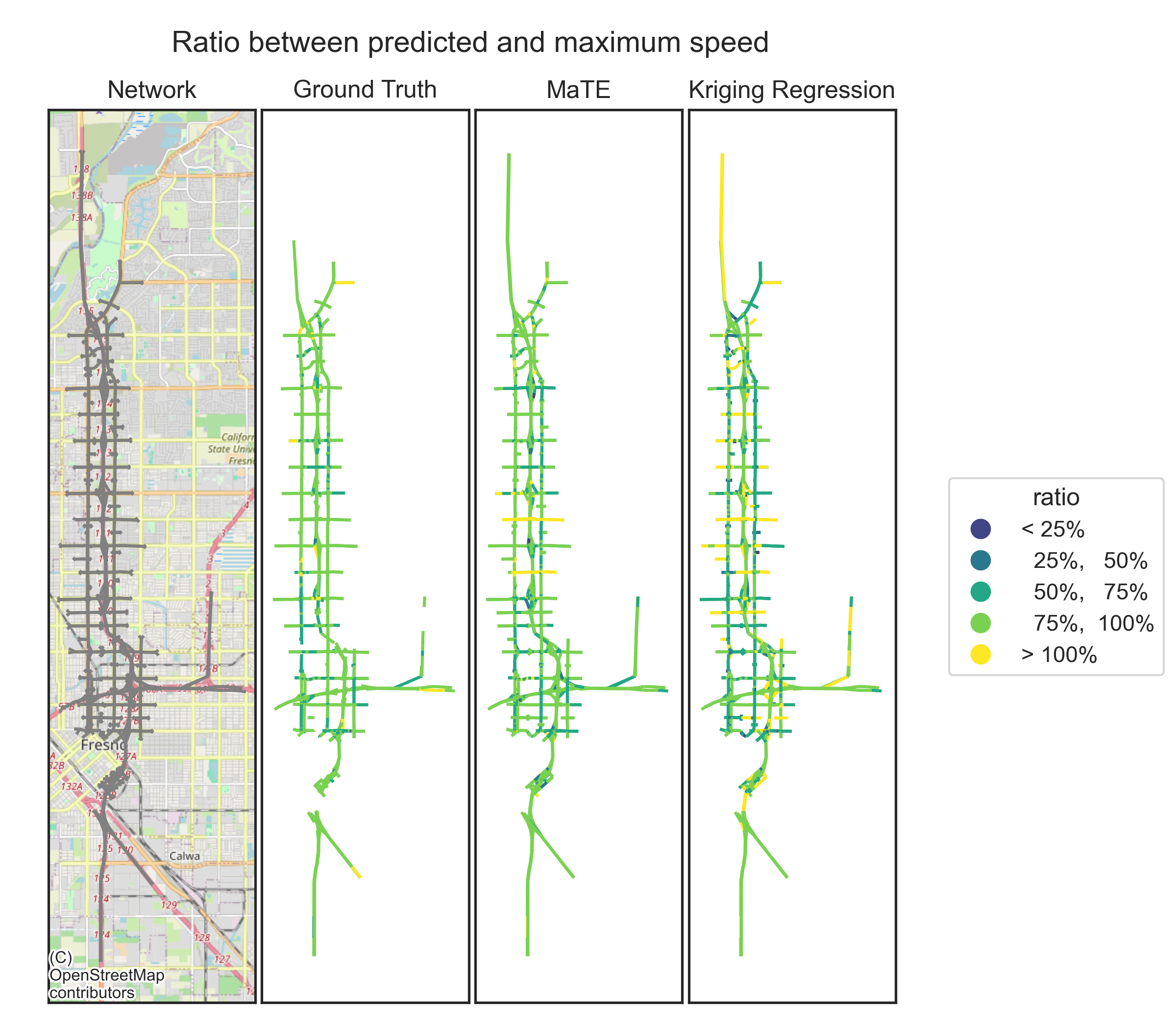}
	\caption{Ratio between estimated and maximum speeds}
	\label{subfig:fresno-map-speed}
\end{subfigure}
\caption{Network-wide estimation of speed and traffic flow during the first Tuesday of October 2020 at 4:00 PM}
\label{fig:fresno-map-estimation}
\end{figure}

\subsubsection{Impact of equilibrium condition in out-of-sample performance}

Figure \ref{fig:hyperparameter-grid} shows in the x-axis and y-axis the logarithm of $\ell_t$ and $\ell_x$, namely, the mean squared errors associated with the estimated travel time and traffic flow, respectively, obtained for different values of the hyperparameter $\lambda_e$ weighting the equilibrium component $\ell_e$ in the loss function. For each value of $\lambda_e$ in $[0,0.001,0.01,0.1, 1, 10]$, we conduct K-fold cross-validation using five folds and compute the average MSE of the estimated traffic flow and travel time for the in-sample (Figure \ref{subfig:hyperparameter-grid-training}) and out-of-sample links (Figure \ref{subfig:hyperparameter-grid-validation}). As expected, higher values of the logarithms of $\lambda_e$ in the (z-axis) induce lower equilibrium MSE and relative gaps in the in-sample folds. As expected, higher values of the logarithm of $\lambda_e$ in the z-axis cause a lower MSE for $\ell_e$ and lower relative gaps. Due to the high representational capacity of \MTP, we observe that the MSE associated with the estimated traffic flow and travel time are similar for all values of $\lambda_e$. Therefore, this evidence suggests that the in-sample performance analysis is insufficient for performing model selection. Figure \ref{subfig:hyperparameter-grid-validation} supports our intuition of the role of the equilibrium loss component as a regularizer in estimating travel time and traffic flow out-of-sample. Therefore, a higher equilibrium hyperparameter tends to improve the goodness of fit to reproduce traffic flow in out-of-sample links. When $\lambda_e = 1$, the trade-off between the traffic flow and travel time losses is Pareto-optimal. In contrast, the out-of-sample performance is suboptimal when the loss function does not account for the equilibrium condition by setting $\lambda_e = 0$. Overall, this evidence shows that regularization of the link flow parameters through incorporating the equilibrium loss component improves out-of-sample performance while having a negligible impact on in-sample performance. 

\begin{figure}[H]
\centering
\begin{subfigure}[t]{0.45\columnwidth}
	\centering
	 \includegraphics[width=\textwidth]{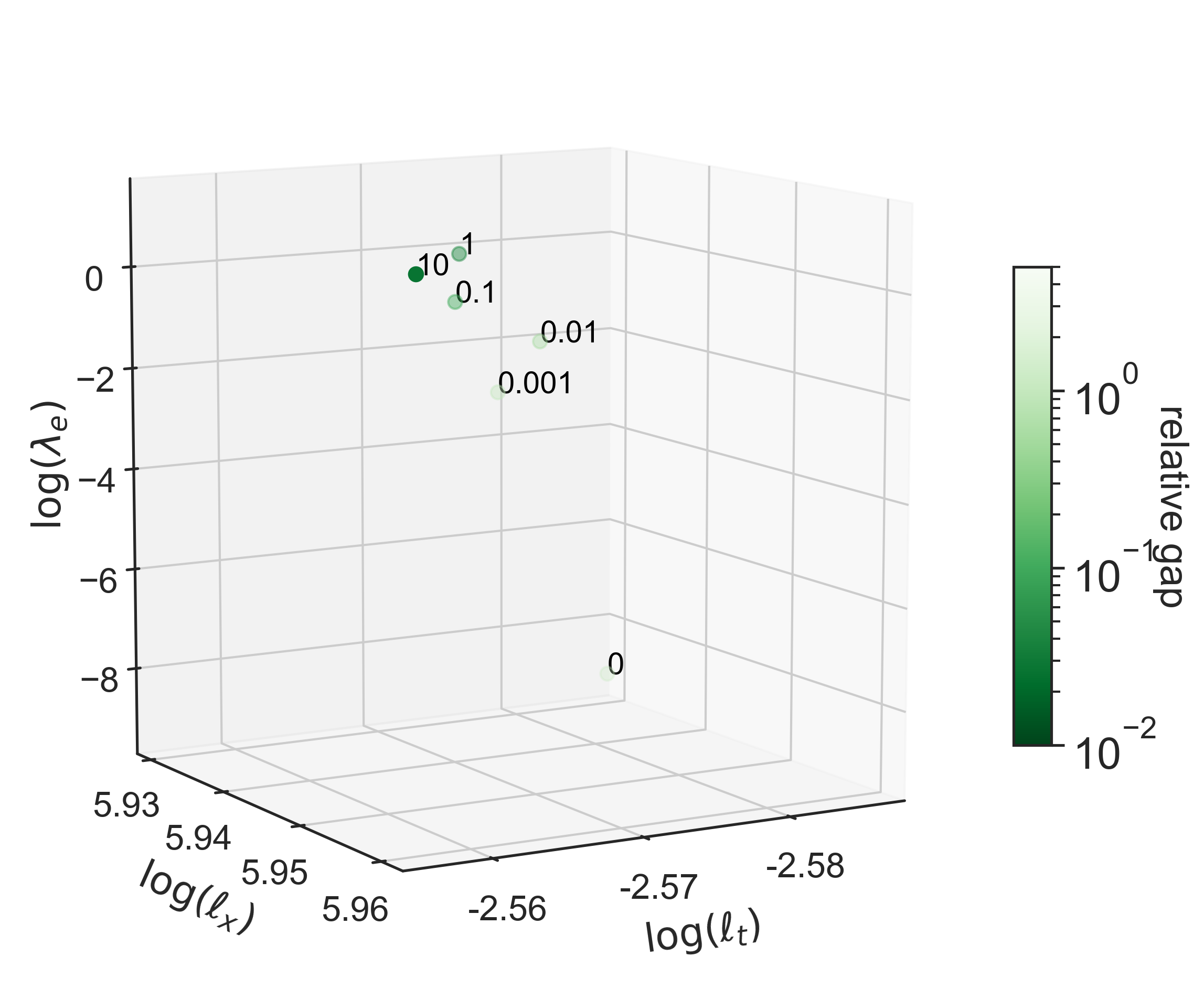}
	\caption{Training}
	\label{subfig:hyperparameter-grid-training}
\end{subfigure}
\begin{subfigure}[t]{0.45\columnwidth}
    \centering
 \includegraphics[width=\textwidth]{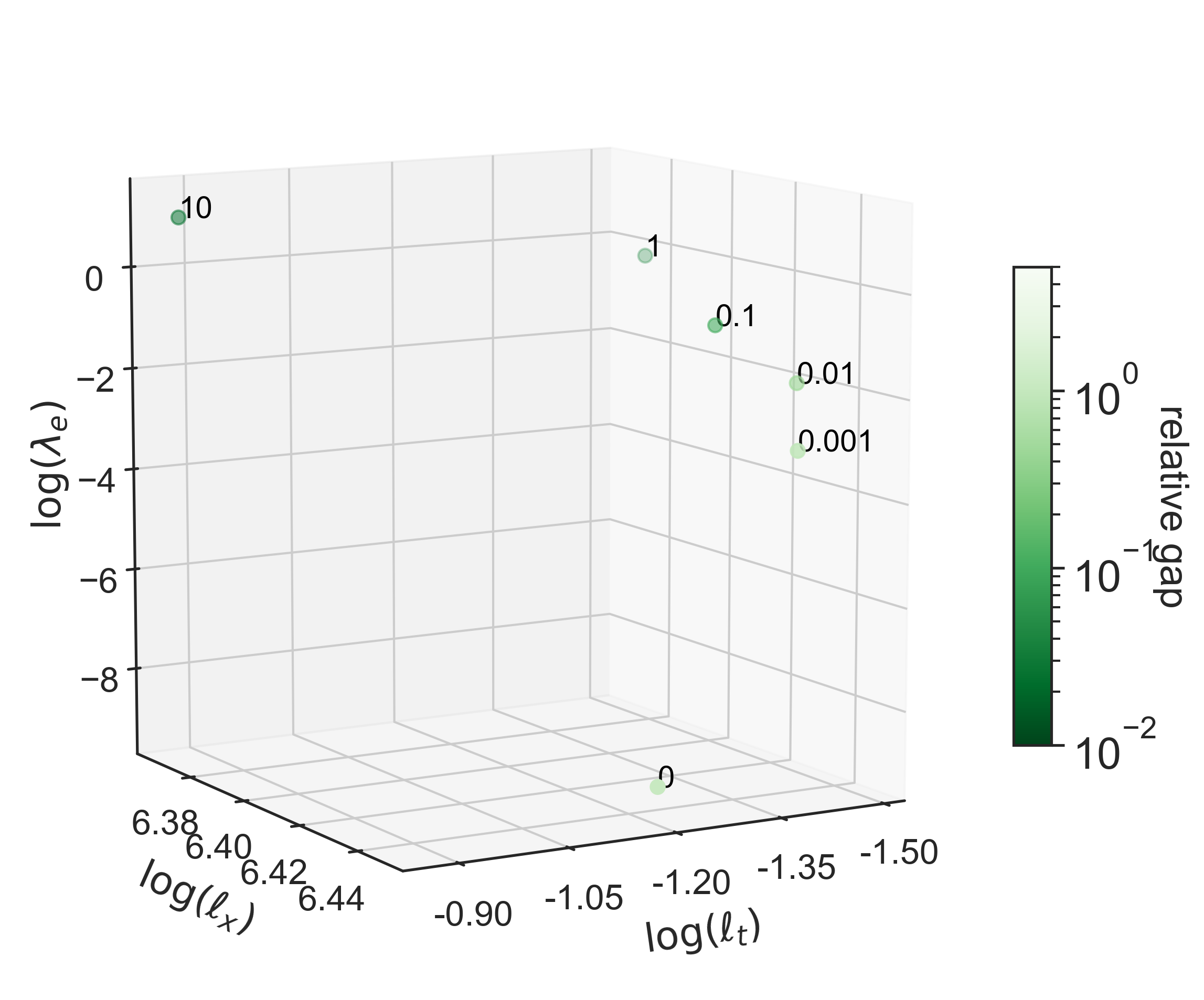}
	\caption{Validation}
	\label{subfig:hyperparameter-grid-validation}
\end{subfigure}
\caption{Average value of the loss components in the training and validation folds for different values of the equilibrium hyperparameter $\lambda_e$}
\label{fig:hyperparameter-grid}
\end{figure}
Traffic flow and travel time estimation are crucial tasks for intelligent transportation systems. Purely data-driven approaches are suitable for making estimations under recurrent traffic conditions and do not require incorporating domain knowledge into the modeling framework. However, these models do not generalize well in traffic contexts that have not been observed in the training data, and they usually require large amounts of data to outperform simple benchmarks based on historical means or lagged traffic features. On the other hand, traditional transportation planning models are grounded on network flow theory, but they are not data-friendly and formulated to perform accurate traffic estimations. In this context, a challenge that remains open is to leverage the prediction capabilities of data-driven methods with the interpretability and theoretical consistency of model-based approaches for real-world transportation applications that demand accuracy and robust network-wide estimation of traffic flow and travel time.

\section{Conclusions}
\label{sec:conclusions}
With the motivation of developing an interpretable model grounded in network flow theory and enhancing generalization performance in settings where historical data is unavailable in a subset of links in the network, this paper introduces the Macroscopic Traffic Estimator (\MTP) model. \MTP provides a network-wide estimation of traffic flow and travel time under recurrent traffic conditions and includes layers that explicitly model trip generation and the travelers' destination and route choices. \MTP also leverages the power of neural networks to learn a more flexible representation of the performance functions that can capture traffic flow interactions and does not require pre-specifying a class of performance functions. Thanks to incorporating location-specific features in the trip generation function, \MTP can also forecast how socio-demographic or population changes could affect travel demand and traffic conditions. Compared with the model proposed by \citet{guarda_estimating_2024}, the solutions of \MTP are proven to be consistent with logit-based stochastic traffic assignment (\STALOGIT), even when the solutions do not strictly satisfy the equilibrium conditions. Thus, our learning algorithm seeks to maximize the accuracy of estimating traffic flow and travel times while leveraging the traffic equilibrium principle as a regularizer of the link flow solutions. 

To compare the performance of the \MTP with well-established data-driven benchmarks for network-wide estimation of traffic flow and travel time, we design a rigorous cross-validation strategy that measures both the in-sample and out-of-sample estimation error associated with travel time and traffic flow. Estimations made on links that do and do not report historical measurements of travel time and traffic flow in the training set are referred to as \textit{in-sample} and \textit{out-of-sample} estimations, respectively. Results obtained in synthetic data show that \MTP can accurately reproduce traffic flow and travel time in both settings. Notably, \MTP learns neural performance functions that mimic the ground truth BPR functions and do not overfit the training data in out-of-sample estimation. An ablation study of the effect of incorporating a trip generation function and a destination choice model in \MTP shows no detrimental impact on out-of-sample performance compared with traditional O-D estimation methods. The application of the model with real-world data collected from a large-scale transportation network shows that \MTP outperforms existing data-driven models in out-of-sample performance. Compared with the regression kriging model, the best-performing data-driven benchmark, \MTP shows relative reductions of 15\% and 50\% in the median absolute percentage error to estimate traffic flow and travel time, respectively. Furthermore, a sensitivity analysis confirms the role of the equilibrium loss component in preventing overfitting and acting as a regularizer of the link flow solution. In fact, higher values of the equilibrium hyperparameter tend to improve out-of-sample performance. Overall, these results make \MTP a trustful tool for transport planning applications aiming at forecasting the impacts of various interventions in the transportation network that can be captured through the model, such as congestion pricing or road capacity improvements. 
\section{Limitations and further research}
Further research should evaluate the performance of our model in other real-world transportation networks and against additional model-based benchmarks. The comparison should consider at least computational speed, the feasibility of the solutions, and prediction accuracy in settings of in-sample and out-of-sample predictions. An example of a model-based benchmark is the path-flow estimation model \citep{bell_stochastic_1997}, which has already been used for fitting travel time and traffic count data. The specification of the neural performance functions could also be extended in further studies. In reality, the relationship between flow and speed demonstrates a bi-valued nature, indicating that low flow values can represent either a state of free flow or congestion. Considering the availability of flow and speed data, it becomes feasible to develop a relationship that is more aligned with physical reality and offers a more accurate depiction. Thanks to the flexibility of our computational graph, we expect these extensions to be feasible to implement and validate in real-world transportation networks. Some limitations of our model could also be addressed in future work. Using additional data sources, such as GPS trajectories, could generate a better prior on the path sets and improve the identification of the utility function coefficients. In addition, the current model architecture includes a destination choice layer that should be enriched with destination-specific attributes. This extension could be valuable to analyze how interventions in the network could affect trip distribution over O-D pairs and to identify new drivers of destination choices. Furthermore, our model currently focuses on prediction under recurrent traffic conditions. A natural step is to adapt the model to perform predictions under non-recurrent traffic conditions where the network equilibrium conditions are not necessarily satisfied. To address this challenge, enriching the route choice layer with a model that can depict travel behavior under incidents and disruptions in the transportation network may be necessary. Cross-validation schemes to tune the hyperparameter weighting of the loss components should contribute to properly balancing the traffic equilibrium principle with the network routing behavior observed under disruptions. Finally, \MTP should incorporate additional layers to capture mode choices and accommodate multiple classes of travelers. 
\section{Model implementation and data}
\label{sec:model-implementation-data}

The codebase that implements our methodology is available at \url{https://github.com/pabloguarda/mate}. The folder \texttt{notebooks} contains Jupyter notebooks that reproduce all the experiments and results presented in this paper.

\section{Acknowledgments}

This research is supported by a National Science Foundation grant CMMI-1751448. Pablo Guarda acknowledges that 80\% of the work was completed at Carnegie Mellon University. He also thanks Fujitsu Research of America for providing access to multiple Linux servers that helped to speed up model experimentation.  



\bibliographystyle{templates/elsearticle/elsarticle-harv}\biboptions{authoryear}

\renewcommand{\refname}{}

\section*{References} 
\vspace{-1cm}
\bibliography{references.bib}

\begin{appendices}
\appendix
\makeatletter
\renewcommand{\thesection}{\Alph{section}}  
\makeatother
\section{Notation}
\label{appendix:sec:notation}

\begin{table}[h]
	\caption{Network model and inputs} 
	\begin{tabularx}{\textwidth}{ll@{}}
		\toprule
		Notations & Definitions \\
		\midrule
		$\sA, \sV, \sW, \sH$ & The set of all links, nodes, O-D pairs and paths \\
		$\overline{\mD} \in \sR^{|\sA|\times |\sH|}$ & The path-link incidence matrix \\
        $\overline{\mM} \in \sR^{|\sW|\times |\sH|}$ & The path-demand incidence matrix \\
        $\overline{\mL} \in \sR^{|\sV| \times |\sW|}$ & The demand-generation incidence matrix \\
		$\overline{\vx}^{\max} \in \sR^{|\sA|}_{\geq 0}$ & The vector of links' capacities \\
		$\overline{\vt}^{\min} \in \sR^{|\sA|}_{+}$ & The vector of links' free flow travel times \\
        $\overline{\vx}, \overline{\vt} \in\sR^{|\sS| \times |\sA|}$ & The matrices of observed link flow and travel time \\
        $\sK_{\mZ}$ & The set of exogenous features in the route choice utility function \\
        $\sK_{\mO}$ & The set of exogenous features in the trip generation function \\
        $\overline{\mZ}  \in \sR^{|\sS| \times |\sA| \times |\sK_{\mZ}|}$ & The tensor of exogenous link features in the route choice utility function \\
        $\overline{\mO} \in \sR^{|\sS| \times |\sV| \times |\sK_O|}$ & The tensor of exogenous features in the trip generation function\\
		\bottomrule
	\end{tabularx}
	\label{table:notation1}
\end{table}

\begin{table}[H]
	\caption{Trip generation, destination choice and route choice parameters and variables} 
	\begin{tabularx}{\textwidth}{ll@{}}
		\toprule
		Notations & Definitions \\
		\midrule
		$\vt, \tilde{\vt}  \in \sR_{>0}^{|\sA|}$ & The vectors of link travel times and auxiliary travel times \\
        $\vg \in \sR^{|\sS| \times |\sV|}$ & The matrix of generated flow \\
        $\vq \in \sR^{|\sS| \times |\sW|}$ & The matrix of O-D flow \\
  	$\vf  \in \sR_{\geq 0}^{|\sS| \times |\sH|}$ & The matrix of path flows\\
        $\vx  \in \sR_{\geq 0}^{|\sS| \times |\sA|}$ & The matrix of link flows \\
  	$\hat{\vx} \in \sR^{|\sS| \times |\sA|}$ & The matrix of link flow parameters \\
		$\hat{\vtheta} \in \sR^{|\sS| \times |\sK_{\mZ}+1|}$ & The matrix of feature-specific parameters in the utility function\\
  	$\hat{\theta}_t \in \sR^{|\sS|}_{\leq 0}$ & The vector of travel time parameters in the utility function \\
        $\hat{\vgamma} \in \sR^{|\sA|} $ & The vector of link-specific parameters in the route choice utility function \\
        $\hat{\vtheta} \in \sR^{|\sS| \times (1+|\sK_Z|)}$ & The matrix of feature-specific parameters in route choice utility function\\
		$\vv \in \sR^{|\sS| \times |\sH|}$ & The matrix of path utilities \\
  	$\vpf  \in \sR_{]0,1[}^{|\sS| \times |\sH|}$ & The matrix of path choice probabilities \\
        $\vphi \in \sR^{|\sS| \times |\sW|}$ & The matrix of destination choice probabilities \\
        $\hat{\vkappa} \in \sR^{|\sS| \times |\sK_O|}$ & The matrix of feature-specific parameters of the trip generation function \\
        $\hat{\vdelta}  \in \sR^{|\sS| \times |\sV|}$ & The matrix of location-specific parameters of the trip generation function \\
        $\hat{\vomega}  \in \sR^{|\sS| \times |\sW|}$ & The matrix of O-D specific parameters in destination choice utility function\\
		\bottomrule
	\end{tabularx}
	\label{table:notation2}
\end{table}

\pagebreak
\section{Illustrative example}
\label{appendix:sec:illustrative-example}

Consider a network with five nodes and six links (Figure \ref{fig:ilustrative-example}). Assume that only pair 1-4 ($q_1$) and 1-5 ($q_2$) carry flow and that the travelers' consideration set includes the following paths:  1-2-4 ($f_1$), 1-3-4 ($f_2$), 1-2-5 ($f_3$), 1-3-5 ($f_4$). Each link is associated with a performance function that maps the flow in that link into travel time. The travelers' utility function depends on the travel time $t$ and an exogenous feature $z$ whose values are obtained from Census data and vary among links but not over time. 

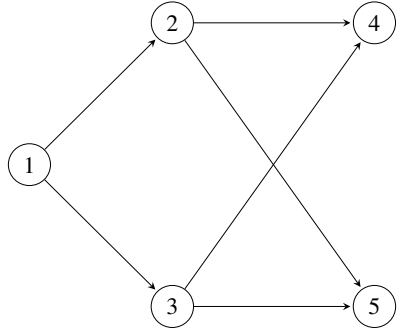
\begin{figure}[h]
	\centering
 \resizebox{0.32\textwidth}{!}{
\begin{tikzpicture}[>=stealth,shorten >=1pt,node distance=3cm,on grid,initial/.style={}]
  \node[circle, draw] (1) {1};
  \node[circle, draw] (2) [above right=of 1] {2};
  \node[circle, draw] (3) [below right=of 1] {3};
  \node[circle, draw] (4) [right=of 2] {4};
  \node[circle, draw] (5) [right=of 3] {5};

  \draw[->] (1) to (2);
  \draw[->] (2) to (4);
  \draw[->] (2) to (5);
  \draw[->] (1) to (3);
  \draw[->] (3) to (4);
  \draw[->] (3) to (5);
\end{tikzpicture}
}
\caption{Network in illustrative example}
\label{fig:ilustrative-example}
\end{figure}
Suppose sets of measurements of travel times $\overline{\vt}$ and link flow $\overline{\vx}$ are available in some links in the network. Assume that the travel times and link flows are obtained under recurrent traffic conditions, i.e., a state where travelers' choices are consistent with logit-based Stochastic User Equilibrium (\SUELOGIT). Suppose that the goal is jointly learning the parameters of the utility function, the performance functions, and the O-D flows using $\overline{\vt}$ and $\overline{\vx}$.

Figure \ref{fig:computational-graph} describes the structure of the computational graph that would solve the learning problem in this small network. Each step in Algorithm \ref{alg:tvgodlulpe}, including the forward and backward passes, can be applied in the computational graph to learn the model parameters and obtain a solution. The elements highlighted in blue summarize the main improvements to the computational graph proposed by \citet{guarda_estimating_2024}. 

\begin{figure}[H]
	\centering
	\includegraphics[width=0.9\textwidth, trim= {3.5cm 2cm 3cm 2.4cm},clip]{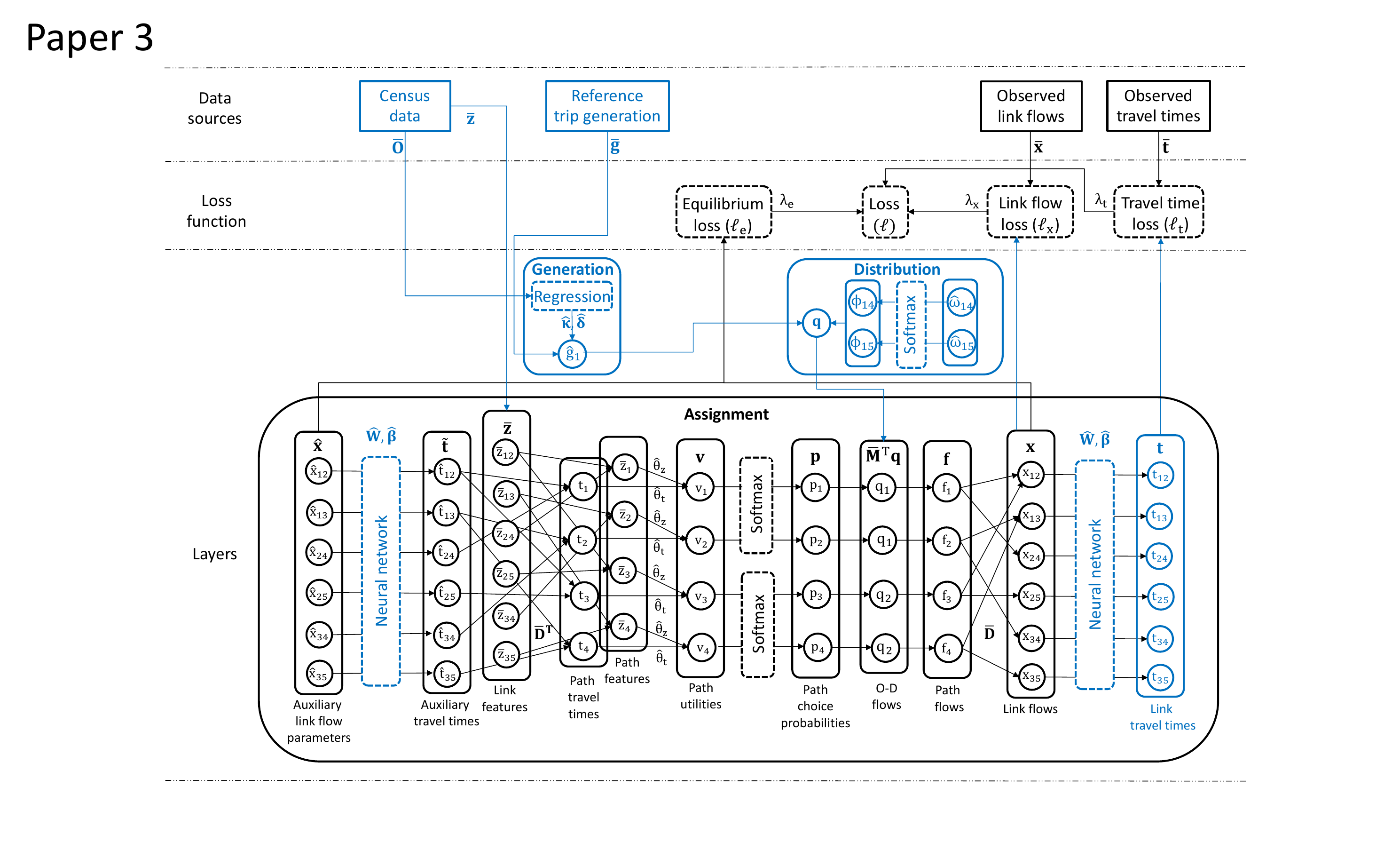}
	\caption{An illustration of the computational graph of \MTP using the toy network}
	\label{fig:computational-graph}
\end{figure}

\pagebreak

\section{Networks}
\label{appendix:sec:networks}

\begin{figure}[H]
	\centering
	
	\begin{subfigure}[t]{0.45\columnwidth}
		\centering
		\includegraphics[width=0.9\columnwidth, trim= {7cm 1.5cm 10cm 0cm},clip]{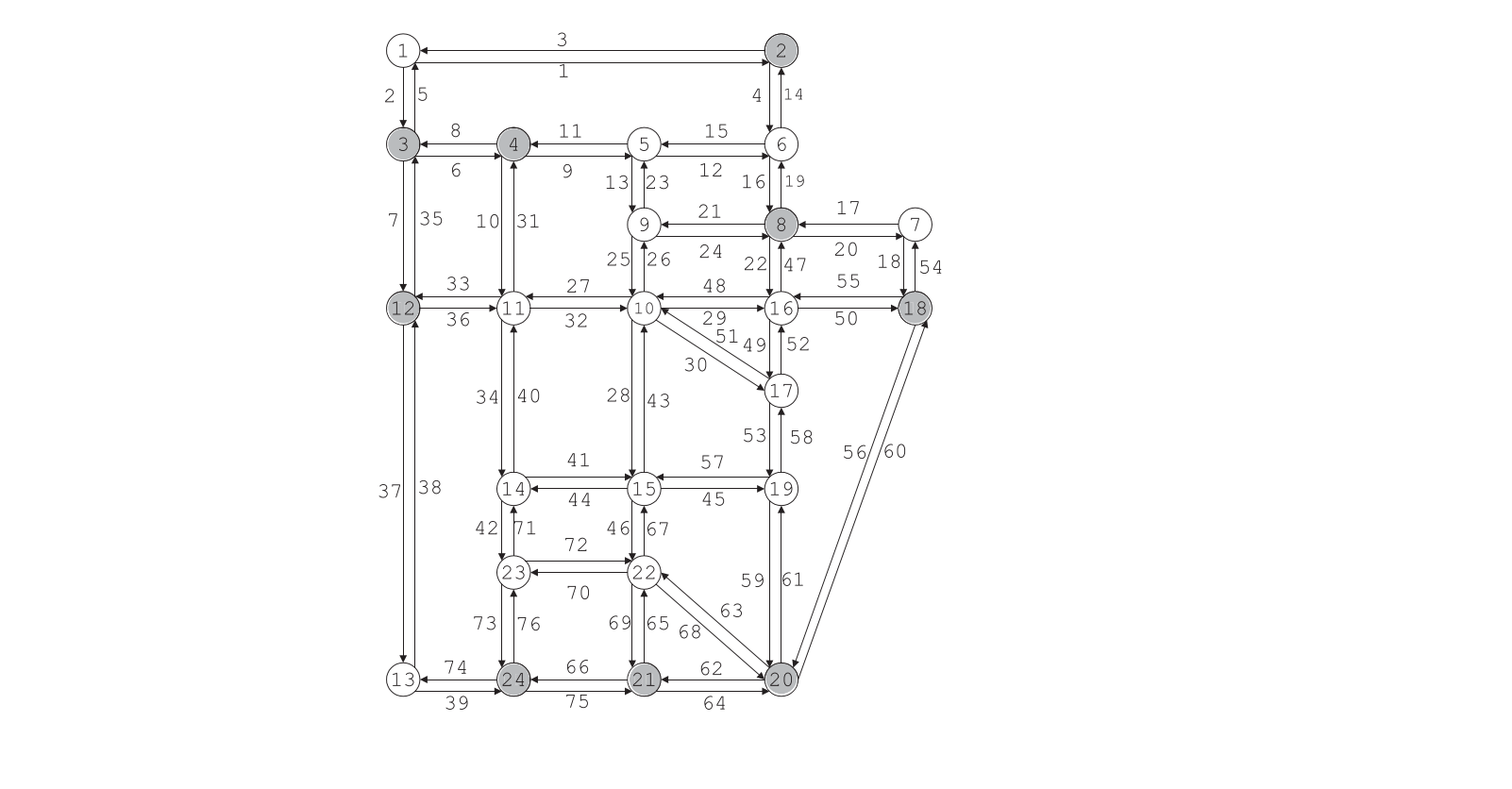}
        \hspace{-2cm}
		\caption{Sioux Falls, SD}
		\label{subfig:sioux-falls-network}
	\end{subfigure}
	\begin{subfigure}[t]{0.45\columnwidth}
        \centering
        \includegraphics[width=0.8\textwidth]{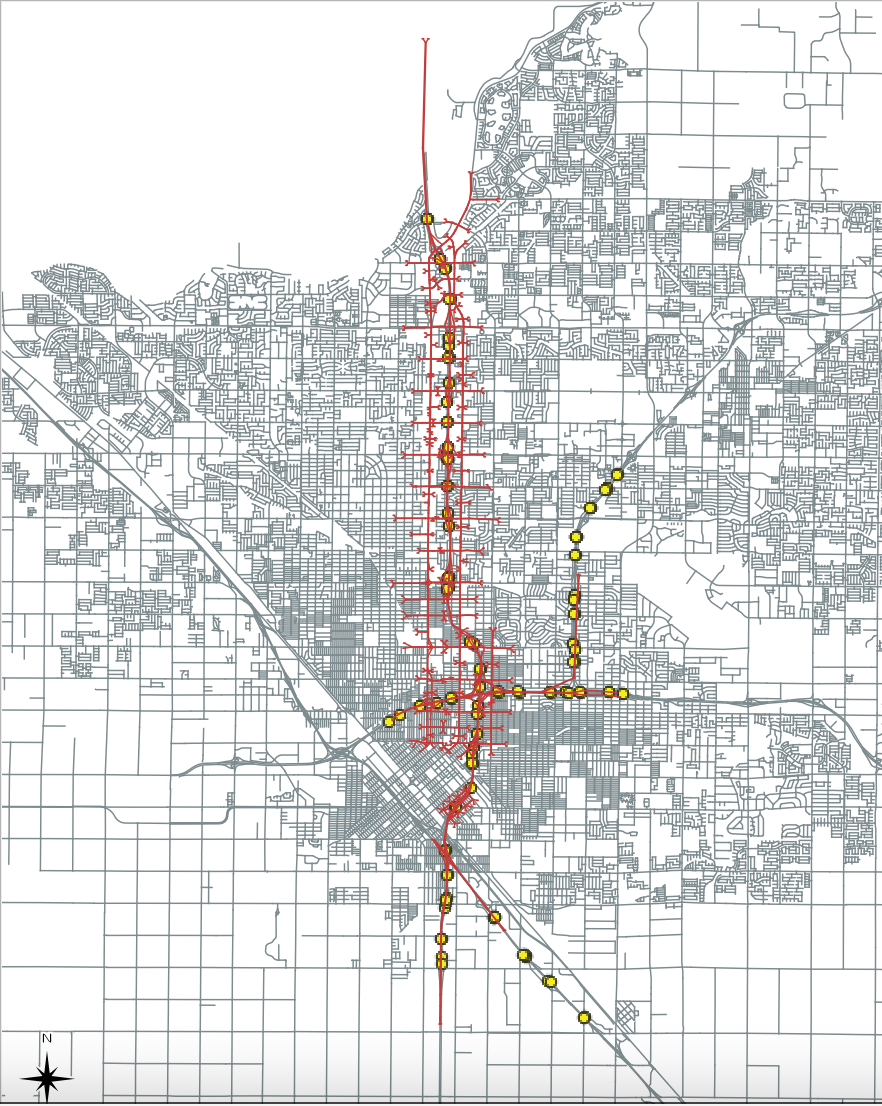}
        \caption{SR-R1 corridor in Fresno, CA}
        \label{subfig:fresno-map}
	\end{subfigure}
	
	\caption{Topologies of transportation networks}
	\label{fig:sioux-falls}
	
\end{figure}


\section{Model training}
\label{appendix:sec:model-training}

\subsection{Sioux Falls network}
\label{appendix:ssec:siouxfalls-model-training}

\subsubsection{Comparison of observed versus estimated values}

\begin{figure}[H]
	\centering
 \begin{subfigure}[t]{0.3\columnwidth}
 \centering
	\includegraphics[width=\textwidth]{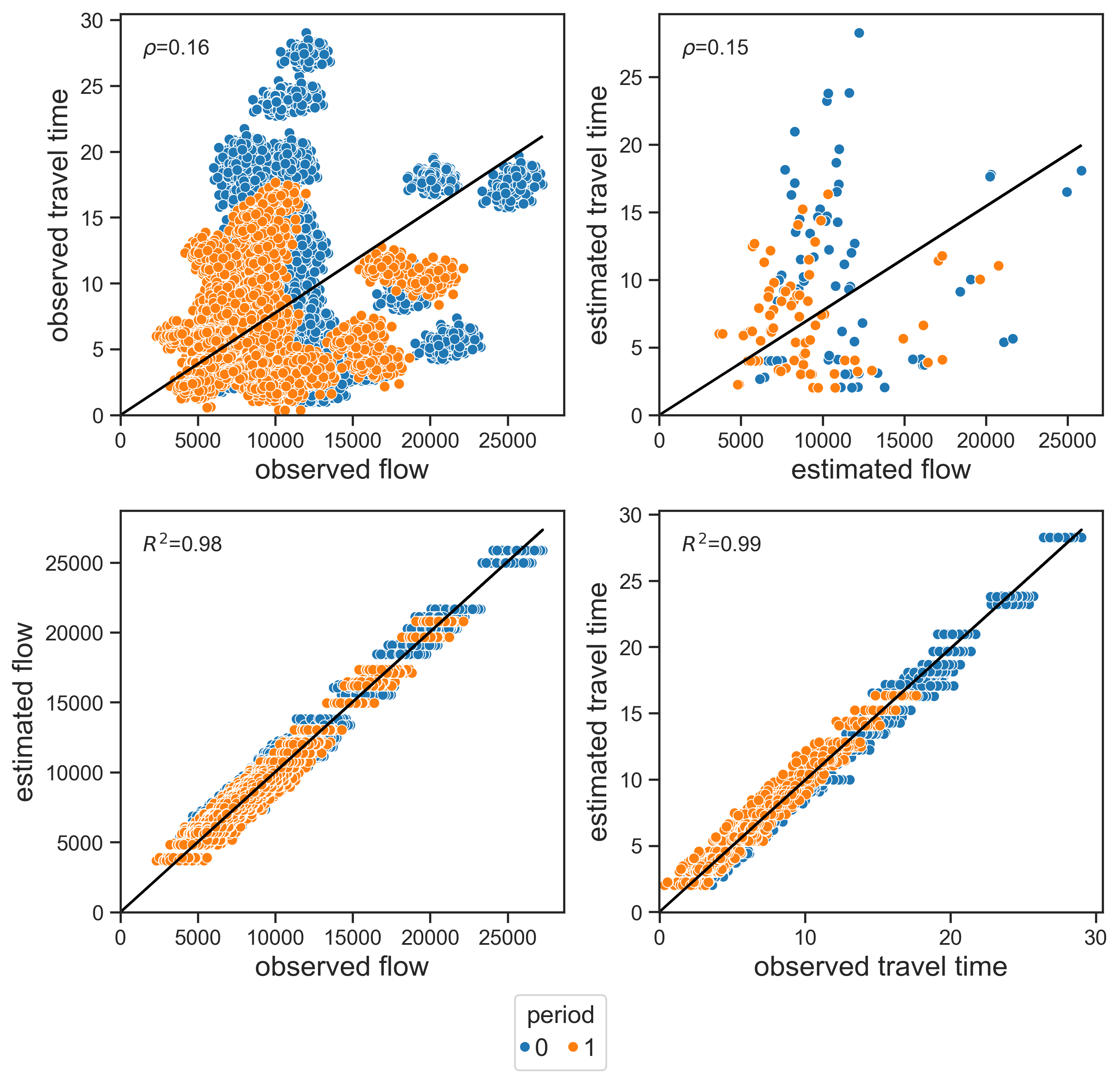}
        \caption{\SUELOGIT}
	\label{fig:siouxfalls-flow-traveltime-suelogit}
 \end{subfigure}
 \begin{subfigure}[t]{0.3\columnwidth}
 \centering
	\includegraphics[width=\textwidth]{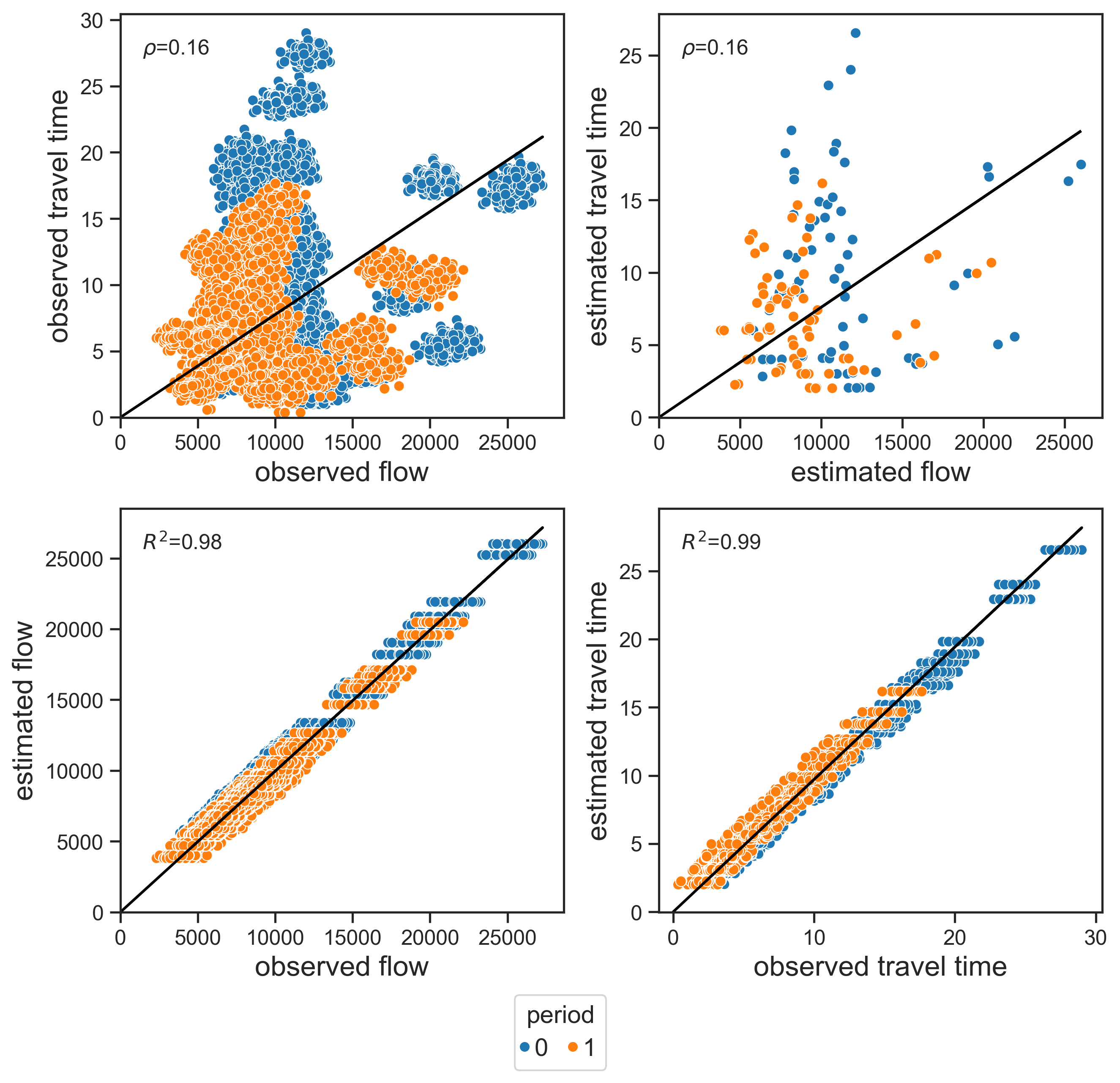}
	\caption{\TVODLULPE}
	\label{fig:siouxfalls-flow-traveltime-tvodlulpe}
  \end{subfigure}
  \begin{subfigure}[t]{0.3\columnwidth}
  \centering
	\includegraphics[width=\textwidth]{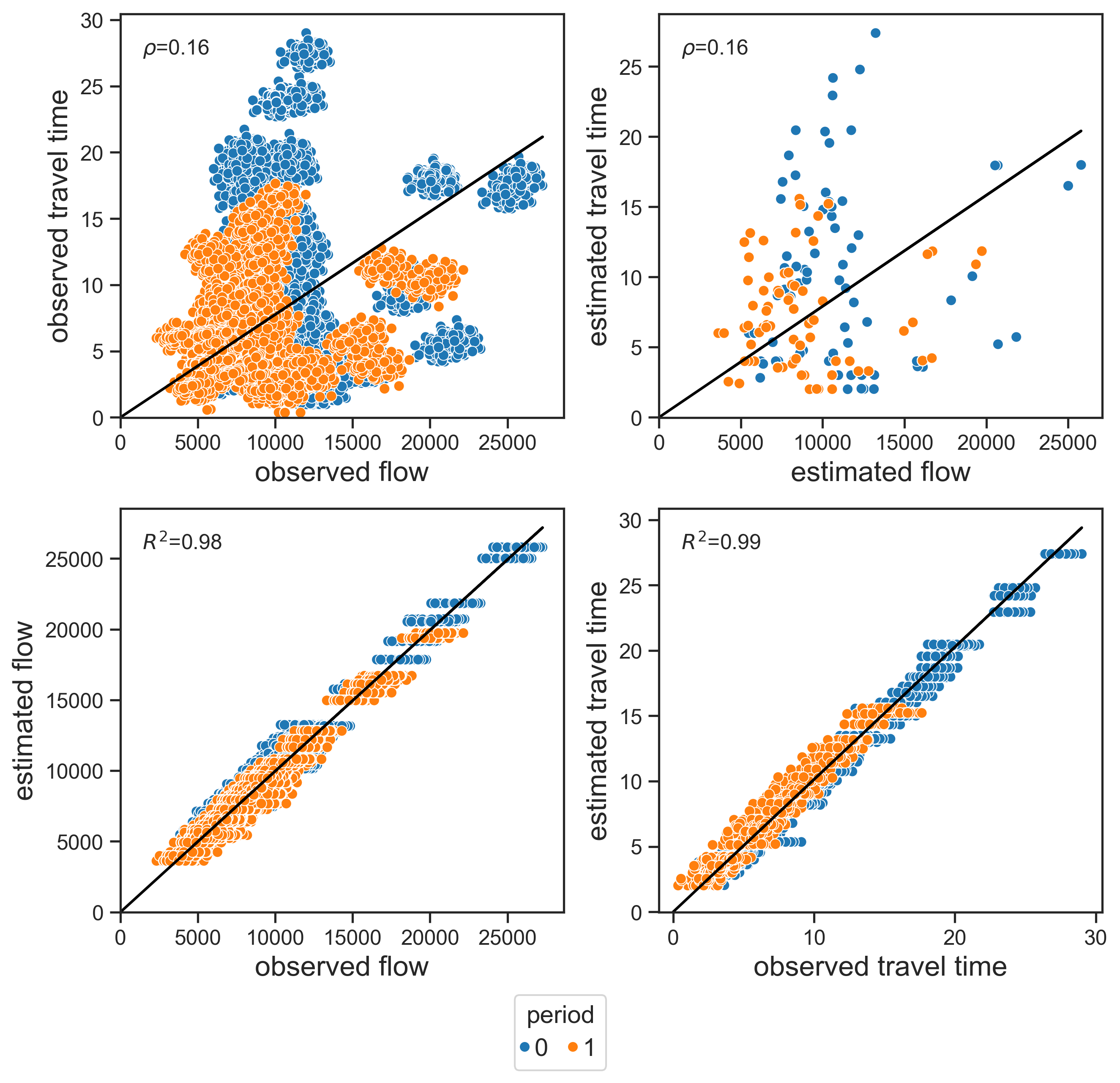}
 	\caption{\MTP}
	\label{fig:siouxfalls-flow-traveltime-mate}
    \end{subfigure}
\caption{Comparison of observed and estimated values of link flow and travel time using synthetic data from Sioux Falls network}
\label{fig:siouxfalls-flow-traveltime}
\end{figure}

\begin{figure}[H]
\centering
\begin{subfigure}[t]{0.3\columnwidth}
    \centering
	\includegraphics[width=\textwidth]{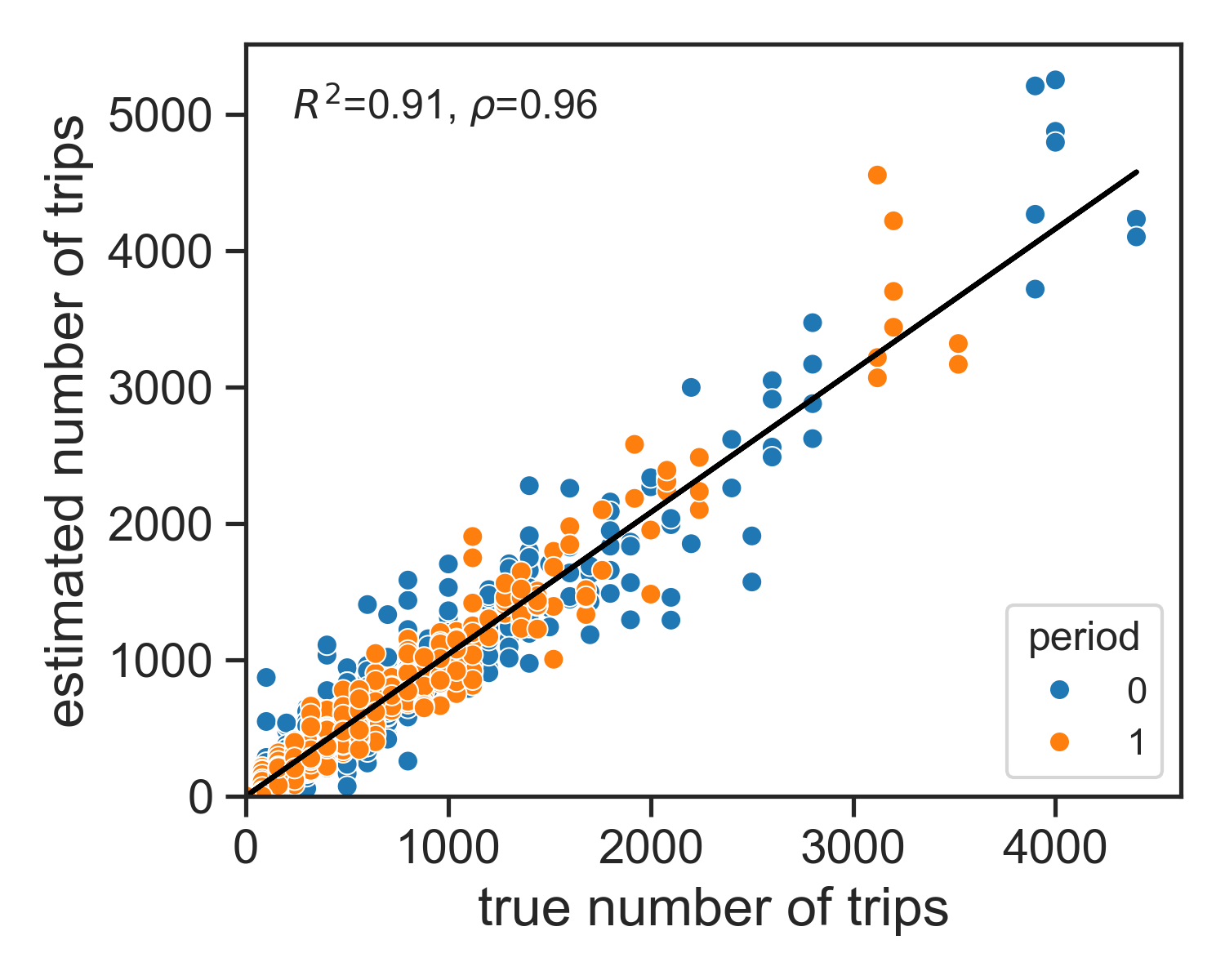}
    \caption{\TVODLULPE}
	\label{subfig:siouxfalls-scatter-ode-tvodlulpe}
\end{subfigure}
\begin{subfigure}[t]{0.3\columnwidth}
	\centering
    \includegraphics[width=\textwidth]{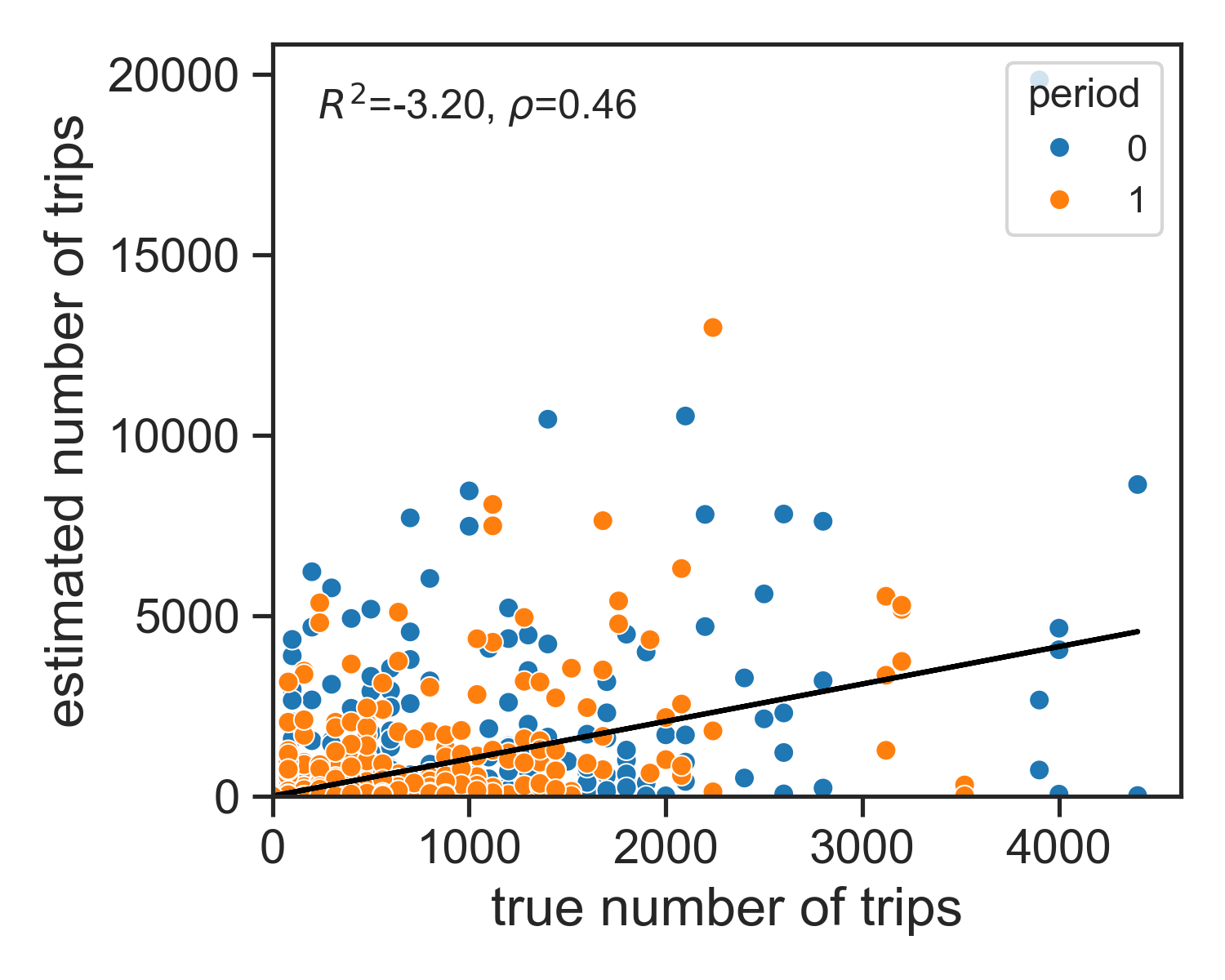}
    \caption{\TVGODLULPE}
	\label{subfig:siouxfalls-scatter-ode-mate}
\end{subfigure}
\caption{Comparison of ground truth and estimated number of trips per O-D pair using synthetic data from the Sioux Falls network}
\label{subfig:siouxfalls-scatter-ode}
\end{figure}

\subsection{Fresno network}
\label{appendix:ssec:fresno-model-training}

\subsubsection{Data description}
\label{ssec:fresno-eda}

Traffic counts from \citet{PeMS} and speed data from \citet{INRIX2021} are used to obtain hourly measurements of traffic flow and travel time across links. The yellow circles shown in Figure \ref{subfig:fresno-map}, Appendix \ref{appendix:sec:networks} represent the location of the traffic counters. The proportions of links with available observations of traffic flow and travel times during the time horizon are 87\% and 5.8\%. Although there are day-to-day variations in the coverage associated with each data source, these differences are small and do not restrict the application of our methodology. 

Figure \ref{fig:fresno-daily-eda} shows the variation of speed [miles/hour], link flow [cars/hour], and the exogenous features of the travelers’ utility between Tuesdays and Thursdays of October 2019 and October 2020.  Features in the utility function include the standard deviation of travel time, the number of bus stops, the number of street intersections, the number of yearly incidents, and the average monthly household income of the census block. The third plot shows the standard deviation of speed instead of travel time for visualization purposes. Exogenous features such as the median household income and the number of intersections and bus stops are constant because their values vary among links but not between days. The standard deviation of speed varies day to day because it is computed using historical travel time information that varies according to the hour of the day and day of the week. In line with our expectations, in 2020, the average speed increased by 9.5\%, and the traffic flow by lane decreased by 9.2\% compared to 2019. We also observe that the increase in the average speed in 2020 is associated with a decrease in the standard deviation of speeds. The significant drop in link flow and the increase in speeds between 2019 and 2020 is mainly attributed to the impact of COVID-19 on travel patterns. 

\begin{figure}[H]
	\centering
	\includegraphics[width=\textwidth]{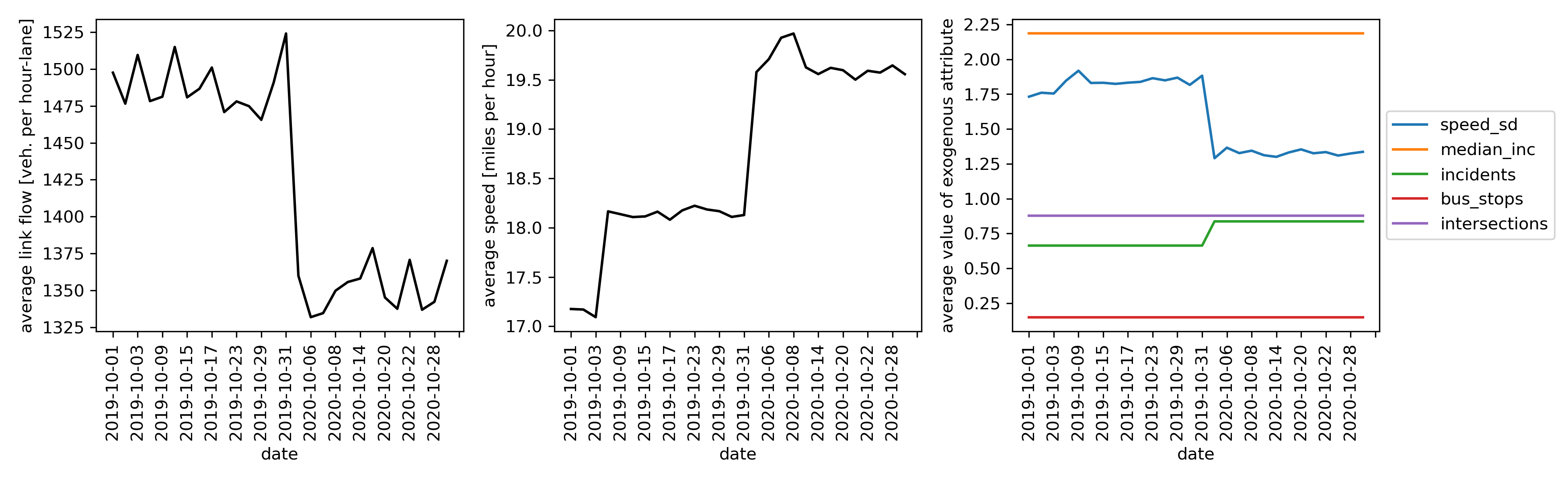}
	\caption{Variation of traffic flow per lane, speed, and the exogenous attributes of the utility function during Tuesdays, Wednesdays, and Thursdays of October 2019 and October 2020 between 6:00 AM and 8:00 PM in Fresno, CA.} 
	\label{fig:fresno-daily-eda}
\end{figure}

Figure \ref{fig:fresno-hourly-eda} shows the hourly change in the average speed and link flow in October 2019 and October 2020. As expected, the traffic flow during the afternoon peak hours is higher than the morning peak hours, especially during 2020. The speed profile is similar to those reported for other US cities \citep{replica_annual_2023}, except that the two largest drops in average speed observed throughout the day start at different hours. The standard deviation of speeds tends to be higher in the hours when there are steady drops in average speed, namely, when links become congested. 

\begin{figure}[H]
	\centering
	\includegraphics[width=\textwidth]{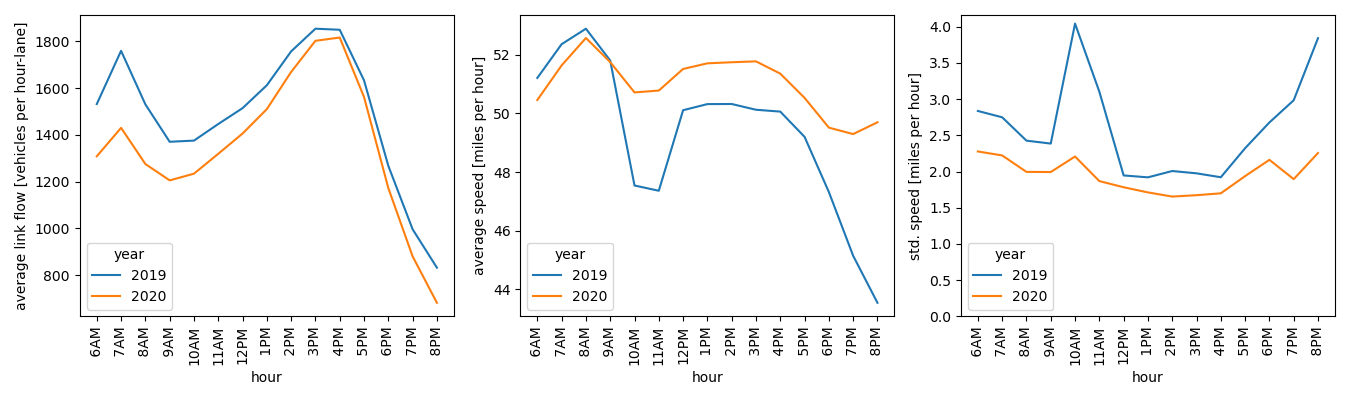}
	\caption{Hourly change of average link flow, average speed and the standard deviation of speed using data from Tuesdays, Wednesdays and Thursdays in October 2019 and October 2020}
	\label{fig:fresno-hourly-eda}
\end{figure}

\subsubsection{Parameter estimates}

\begin{table}[H]
\begin{adjustbox}{width=0.9\textwidth}  
\renewcommand{\arraystretch}{1.05}
\centering
\begin{threeparttable}
\caption{Point estimates obtained with \MTP for the peak hours of the morning (6:00 AM to 9:00 AM) and afternoon (4:00 PM to 6:00 PM) periods using data collected between 6:00 AM and 9:00 PM during Tuesdays, Wednesdays and Thursdays of October 2019 in Fresno, CA}
\label{table:estimates-mate-fresno}
\begin{tabular}{lccccccc}
\hline
\multirow{2}{*}{} & \multicolumn{6}{c}{Hour} \\ \cline{2-8} 
& All$^{a}$ & 6am & 7am & 8am & 4pm & 5pm & 6pm \\ \hline
Utility function \\ 
\hline
Endogenous travel time & &-3.5865 & -3.8456 & -3.4329 & -3.4746 & -3.1385 & -3.9305 \\
Standard deviation of travel time & & 0.0925 & -1.3664 & -3.8896 & -3.1430 & -3.7841 & -3.0242 \\
Neighborhood income & & 0 & 0 & 0 & 0 & 0 & 0 \\
Incidents &  & -3.9881 & -3.4084 & -3.7187 & -3.5054 & -3.7146 & -3.5782 \\
Bus stops &  & -1.3149 & 0 & -0.5146 & 0 & 0 & 0 \\ 
Streets Intersections  & & -4.4391 &-3.5906 &-4.1488 &-3.9346 &-3.8281 &-4.5596 \\
Reliability ratio & & 0.0258 & 0.3553 & 1.1330 & 0.9046 & 1.2057 & 0.7694 \\ 
Average fixed effects & 0.0414 \\
Std of fixed effects & 1.5411 \\
\hline
Generation \\ 
\hline
Population && 110.1214 & 126.9262 &111.0016 & 133.4070 &118.0597 & 92.3437\\
Neighborhood income && 23.6941&  27.3098 & 23.8834 & 28.7043 & 25.4021 & 19.8690\\
Number of bus stops &&  -32.2786 &-37.2044 &-32.5366 &-39.1040 &-34.6055 &-27.0676 \\
\hline
O-D estimation \\
\hline
Average trips per O-D pair && 8.8299 & 9.9782 & 8.7059  & 13.9547 & 13.5291 & 12.3693 \\
Std of trips per O-D pair & & 78.5221 & 81.3923 & 70.1779 & 65.5904 & 70.1467 & 78.0475 \\ 
Total trips & & 61544.5 & 69548.2 & 60680.1 & 97264.5 & 94297.6 & 86214.0\\ 
\hline 
Link performance function$^{b}$ \\
\hline 
Average kernel's diagonal &3.0849 \\
Average kernel's non-diagonal & 0.2396\\
First polynomial parameter  ($x$) & 0.1982\\
Second polynomial parameter ($x^2$) & 0.014\\
Third polynomial parameter ($x^3$)& 0.0124\\
\hline 
Metrics \\
\hline 
Total parameters & 174,817 \\ 
Total samples$^{c}$ & 225 \\
Traffic flow observations (coverage)& 31,624 (5.8\%) \\ 
Travel time observations (coverage) & 468,303 (86.3\%) \\  
\hline 
\end{tabular}
\begin{tablenotes}
      \footnotesize
      \item Note: the hourly estimates between 9:00 AM and 4:00 PM and between 7:00 PM and 9:00 PM are available, but they are not provided in this table. 
      \item $^{a}$For the group of time-varying parameters, this column represents the average value of the estimates over hours. For the group of metrics, this column represents the metric's value including all hourly periods.
      \item $^{b}$The parameters of the link performance function do not vary between periods
      \item $^{c}$Each sample represents a snapshot of the transportation network at a given hour and day, and it compresses a set of link flow and travel time observations in the subset of links where observations are available.
    \end{tablenotes}
\end{threeparttable}
\end{adjustbox}
\end{table}

\end{appendices}

\end{document}